\documentclass{article}

\usepackage{microtype}
\usepackage{graphicx}
\usepackage{subcaption}
\usepackage{tabularx}
\usepackage{booktabs} % for professional tables

\usepackage{hyperref}
\usepackage[accepted]{icml2026}

\usepackage{mathrsfs}
\usepackage{bm}
\usepackage{amsmath}
\usepackage{amssymb}
\usepackage{mathtools}
\usepackage{amsthm}
\usepackage{comment}
\usepackage{setspace}

\usepackage[capitalize,noabbrev]{cleveref}

\theoremstyle{plain}
\newtheorem{theorem}{Theorem}

\newtheorem{lemma}{Lemma}
\newtheorem{corollary}[theorem]{Corollary}
\theoremstyle{definition}
\newtheorem{definition}{Definition}
\newtheorem{assumption}{Assumption}
\theoremstyle{remark}
\newtheorem{remark}{Remark}
\usepackage{xcolor}

\usepackage[textsize=tiny]{todonotes}

\icmltitlerunning{Rationality Measurement and Theory for Reinforcement Learning Agents} %define a rational action as one that max-imises the hidden true value functionearning Agents}

\begin{document}

\twocolumn[
  \icmltitle{Rationality Measurement and Theory for Reinforcement Learning Agents}

  % It is OKAY to include author information, even for blind submissions: the
  % style file will automatically remove it for you unless you've provided
  % the [accepted] option to the icml2026 package.

  % List of affiliations: The first argument should be a (short) identifier you
  % will use later to specify author affiliations Academic affiliations
  % should list Department, University, City, Region, Country Industry
  % affiliations should list Company, City, Region, Country

  % You can specify symbols, otherwise they are numbered in order. Ideally, you
  % should not use this facility. Affiliations will be numbered in order of
  % appearance and this is the preferred way.
  \icmlsetsymbol{equal}{*}

  \begin{icmlauthorlist}
    \icmlauthor{Kejiang Qian}{ed}
    \icmlauthor{Amos Storkey}{ed}
    \icmlauthor{Fengxiang He}{ed}
  \end{icmlauthorlist}

  \icmlaffiliation{ed}{University of Edinburgh}
%  \icmlaffiliation{comp}{Company Name, Location, Country}
%  \icmlaffiliation{sch}{School of ZZZ, Institute of WWW, Location, Country}

  \icmlcorrespondingauthor{Fengxiang He}{fhe@ed.ac.uk}

  % You may provide any keywords that you find helpful for describing your
  % paper; these are used to populate the "keywords" metadata in the PDF but
  % will not be shown in the document
  \icmlkeywords{Machine Learning, ICML}

  \vskip 0.3in
]

% this must go after the closing bracket ] following \twocolumn[ ...

% This command actually createse define a rational action as one that max-
% imises the hidden true value function the footnote in the first column listing the
% affiliations and the copyright notice. The command takes one argument, which
% is text to display at the start of the footnote. The \icmlEqualContribution
% command is standard text for equal contribution. Remove it (just {}) if you
% do not need this facility.

% Use ONE of the following lines. DO NOT remove the command.
% If you have no special notice, KEEP empty braces:
\printAffiliationsAndNotice{}  % no special notice (required even if empty)

\begin{abstract}
This paper proposes a suite of rationality measures and associated theory for reinforcement learning agents, a property increasingly critical yet rarely explored. We define an action in deployment to be perfectly rational if it maximises the hidden true value function in the steepest direction. The expected value discrepancy of a policy's actions against their rational counterparts, culminating over the trajectory in deployment, is defined to be expected rational risk; an empirical average version in training is also defined. Their difference, termed as rational risk gap, is decomposed into (1) an extrinsic component caused by environment shifts between training and deployment, and (2) an intrinsic one due to the algorithm's generalisability in a dynamic environment. They are upper bounded by, respectively, (1) the $1$-Wasserstein distance between transition kernels and initial state distributions in training and deployment, and (2) the empirical Rademacher complexity of the value function class. Our theory suggests hypotheses on the benefits from regularisers (including layer normalisation, $\ell_2$ regularisation, and weight normalisation) and domain randomisation, as well as the harm from environment shifts. Experiments are in full agreement with these hypotheses. The code is available at \href{https://github.com/EVIEHub/Rationality}{https://github.com/EVIEHub/Rationality}.
\end{abstract}

\section{Introduction}
Reinforcement learning is rapidly advancing toward human-level capabilities in many domains, such as robotics \citep{nguyen2019review}, autonomous vehicles \citep{feng_dense_2023}, finance \citep{liu2022nips}, and reasoning in large language models (LLMs) {\citep{shao2024deepseekmath}}. They are increasingly embedded in real-world, high-stakes systems that directly impact human lives and social fabric. For example, we can expect to share public roads with autonomous vehicles in the near future; in financial markets, reinforcement learning already accounts for a substantial proportion of trading activities. The increasing penetration of reinforcement learning agents in society calls for an understanding of their behaviours through the economic lens. Rationality is fundamental to this end: it characterises {agent behaviour} in decision making that maximise their utilities given accessible information, making it possible to economically study agent behaviours \citep{vonneumannmorgan2004, dayan2008decision,sen1994formulation}.

%It is also a key issue to consider regarding robust behaviour of agents - how much can we be sure agents behave sensibly, given the premise we provide?

%However, real agents do not always meet idealised assumptions like unlimited computation or perfect information. This motivates \emph{bounded rationality}, which studies structured deviations from optimal choice arising from cognitive limits, computational costs, limited attention, or misspecified beliefs \citep{simon1990bounded, conlisk1996bounded}.

% the rationality of reinforcement learning is rarely touched on in the literature.
%To address the issues, this paper proposes a suite of rationality measures for reinforcement learning agents and develops theory based on the measures. 

We mathematically define an action to be {\it perfectly rational} if it maximises the actual value function (though it might be unknown) in the steepest direction. {An agent can not be perfectly rational, i.e., of bounded rationality \citep{simon1990bounded, conlisk1996bounded}, leading to loss in {action-}value function, defined as {\it rational value loss}. This paper is particularly interested in the rationality in deployment (or ``inference''). {Cumulating the expected rational value loss over the trajectory in inference, we define an {\it expected rational value risk}. This measure is not directly accessible; we then define an estimator, {\it empirical rational value risk}, to be the empirical average version in training.} Their difference, termed {\it rational risk gap}, measures the rationality of agents in deployment, given their observable behaviour in training, which is central in the theoretical development in this paper.} %We also define the asymptotic rational properties of an agent, such as whether the agent is asymptotically rational. 
To note, this suite of measures takes a ``local and immediate'' perspective: an action is defined to be rational if this {\it individual} move is optimal, given all information {\it available at that time}. %We do not expect a rational agent to be able to overview the global landscape or anticipate the future. 
This setting coincides with a large volume of literature in economics, such as \citet{sen2002rationality,samuel2015ration}.   

{
The rational risk gap is decomposed into two components: 
(1) an {\it extrinsic rational gap}, caused by the environment shifts between training and deployment,
and (2) an {\it intrinsic rational gap}, determined by the algorithm itself. %\bry{quantifying the difference between the expected value under the training environment and its empirical value, affected by both value function class complexity and policy updates.} 
This decomposition provides a lens for understanding the sources of sub-rationality.}
%To demonstrate the applicability of our framework, we apply it to a generalised reinforcement learning problem, 
%We then theoretically analyse the rationality of an agent deployed in an inference environment but trained in a different source environment. 
We prove that the {two components} are upper bounded as follows. %of her policy $\pi$ has an upper bound composed of three components: a \emph{sequential prediction error}, an \emph{environment shift}, and an \emph{intrinsic rational gap}.
% The \textbf{sequential prediction error $\text{SPE}$} is bounded by the sequential Rademacher complexity $\mathfrak R_t(\mathcal V)$ of the value-function class $\mathcal V$ along training.
% %We first establish a high-probability upper bound on \fh{{sequential prediction error}, $\text{SPE}$}. 
% For any positive constants $c_1,c_2 >0$, %we prove that 
% with probability at least $1-\delta$, we have  
% \begin{align*}
% \text{SPE}&\le
%     c_1\left[\beta \cdot\mathfrak{R}_t(\mathcal{V})+
%     \frac{H}{t}\,\log\frac{1}{\delta}\right],
% \end{align*}
% where $\beta=1 + \sqrt{\log(1/\delta)}\,\log^{3/2}(eT)$. %, and $\mathfrak R_t(\mathcal V)$ denotes the sequential Rademacher complexity of the value-function class. 
% \fh{This result demonstrates the effect of \emph{distribution-dependent sampling} in reinforcement learning.}
%We then upper bound t
The {extrinsic rational gap} is bounded by 
%\begin{align*}
$L_s H\cdot W_1(p_0^\dagger,p_0)+H^2L_s(L_p+1)\cdot W_1(p^\dagger, p)$,
%\end{align*}
relying on the {$1$-Wasserstein distance $W_1(p^\dagger_0,p_0)$ between initial state distributions $p^\dagger_0$ in inference and $p_0$ in training, $1$-Wasserstein distance $W_1(p^\dagger,p)$ between transition kernels $p^\dagger$ in inference and $p$ in training, Lipschitz constant $L_s$ of {the mapping from state to value function}, Lipschitz constant $L_p$ of the mapping from transition kernel to its induced state distributions, and horizon $H$ of an episode. This term may help understand the \emph{sim-to-real transfer} challenge \citep{da2025survey}}. %W_1(p^\dagger,p)$ is the $1$-Wasserstein distance between the inference and training transition kernels, and 
The {intrinsic rational gap} has an upper bound 
%\begin{align*}
    $L_\Pi H\sqrt{2\log|\mathcal A|}+2\sum_{h=1}^H\hat{\mathfrak{R}}_h(\mathcal{Q}_\Pi)+3H^2\sqrt{\frac{\log(4H/\delta)}{2T}}$,
%\end{align*}
for any $\delta \in (0,1)$, with probability at least $1-\delta$, relying on the empirical Rademacher complexity of value-function class $\mathcal Q_\Pi$, %$\hat{\mathfrak R}(\mathcal V)$ denotes the empirical Rademacher complexity of the value-function class $\mathcal V$, 
Lipschitz constant $L_\Pi$ of the mapping from policy $\pi$ to its induced state distribution, %$H$ is the horizon length,   
the action space cardinality $|\mathcal A|$, and {the training episode number $T$.}

{
Our theory suggests empirically testable hypotheses: (1) regularisers, {including layer normalisation \cite{ba2016layer}, $\ell_2$-regularisation, and {weight normalisation \cite{salimans2016weight}} control the hypothesis complexity of value function class, contributing positively to rationality; (2) domain randomisation \cite{tobin2017domainrandomization} {improves robustness across environments}, also making benefits to rationality, and (3) environment shifts between training and deployments, are harmful to rationality. We conduct experiments to verify these hypotheses, employing Deep Q-Network (DQN) \citep{mnih2013playing} on the Taxi-v3 \citep{Dietterich2000} and Cliff Walking environments \citep{sutton18}. %under a range of regularisation techniques, including batch normalisation, $\ell_2$-regularisation, network randomisation, and randomised convolution, as well as data augmentation strategies such as random cropping and cutout, and varying numbers of training levels. 
The empirical results are in full agreement with the hypotheses. %The code has been submitted and will be publicly available. %that the rational risk gap can be effectively reduced through regularisation and data augmentation, while environment shift between training and inference environments has negative impacts on rationality, in full agreement with our theoretical results.}
}
%The code is available at \href{https://github.com/EVIEHub/Rationality}{https://github.com/EVIEHub/Rationality}.

To our best knowledge, this work is the first to develop a mathematical framework for measuring the rationality of reinforcement learning agents. %indicating sources of irrationality.
%\bry{It attributes the irrationality to imperfect generalisation and environment shifts, on top of value misalignment in training.} 
Our theory sheds light on understanding and improving the rationality of reinforcement learning, which is increasingly critical in this era, as we are inevitably and irreversibly marching into a human-AI co-existing society. %formally upper bounds the sources of sub-rationality of agents and theoretically suggests practical implications for generalised reinforcement learning problems.

% \paragraph{Conflict of Interest Disclosure}
% The authors declare no financial conflicts of interest that could reasonably be perceived to influence this work.

\subsection{Related Works}

%\paragraph{Rationality in game theory} Rationality is in the fundamental place of game theory and economics; usually, a rational agent is defined to make decisions that maximise her utility under certain constraints \citep{vonneumannmorgan2004, dayan2008decision}. %The individual and societal behaviour of agents with 
%`Bounded rationality' (i.e., imperfect rationality) is also extensively studied \citep{simon1990bounded, conlisk1996bounded}. 

%\paragraph{Rationality in game theory}
%Rationality is central to game theory and economics. 
%In classical models, a 
%A rational agent chooses actions that maximise her (expected) utility subject to available information and constraints \citep{vonneumannmorgan2004, dayan2008decision}. This serves as a normative benchmark and underlies standard solution concepts such as best responses and equilibrium behaviour.

\paragraph{Rationality of Machine Learning} 
Efforts to study the rationality of machine learning are seen in the literature. \citet{valiant1995rationality} proposes a philosophical definition: rationality is the ability to abstract and utilise available information to understand, predict, and control the environment, with a probably approximately correct (PAC) style criterion. {\citet{abel2019concepts} provides a formal characterisation for bounded rationality of reinforcement learning, showing that rational decisions depend on how agents represent environments, balancing simplicity and predictive accuracy. Analysing behavioural data from human participants, \citet{evans2025rational} introduces the Wasserstein distance between the learned policy and prior as a constraint to model bounded rationality in reinforcement learning. {\citet{sunehag15} establish decision-theoretic axioms of rational reinforcement learning agents, but these exclude a large class of commonly used algorithms, such as those relying on $\epsilon$-greedy exploration, which are evaluated as irrational and out of their scope.} Despite these conceptual formalisations and empirical works, a theoretical framework remains absent, summarised by \citet{macmillanscott2025}.

\paragraph{Value Alignment in Reinforcement Learning} 
The reinforcement learning literature has seen extensive efforts on aligning agents' value with some optimal value, either explicitly or implicitly. For example, the “classic” reinforcement learning is usually around optimising regret, defined as the cumulative suboptimality in terms of getting rewards \cite{jin2020provably,azar2017regret}. The same applies in using reinforcement learning for LLM value alignment \cite{shao2024deepseekmath}. However, even if an agent (or LLM) has perfectly learned the optimal value, it can still behave suboptimally in deployment (or reasoning), because the agent fails to take the actions that optimise the aligned value.

\paragraph{Generalisation in Reinforcement Learning}
Generalisation in reinforcement learning is more subtle than in supervised learning because here data is generated by correlated trajectories, and learned policies influence observation. Existing papers establish within-environment guarantees in finite MDPs via PAC and regret analyses \citep{strehl2009pac, jaksch2010ucrl2, azar2017regret}; and approximate dynamic programming characterises how estimation and approximation errors propagate through Bellman backups \citep{munos2008fitted}. More recent work replaces dependence on state space with structural complexity measures for rich observations, e.g., Eluder Dimension and Bellman-type ranks \citep{russo2013eluder, jiang2017bellmanrank, sun2019witnessrank, chijin21}. For deep reinforcement learning, \citet{liu2022understanding} casts temporal-difference error as a generalisation problem under neural function approximation; and \citet{wang19generalisation} analyses the generalisation gap in the reparameterisable settings. %, decomposing intrinsic and extrinsic sources via the reparameterisation trick; the latter views extrinsic error as a distribution shift between training and test environments. %Offline RL further sharpens this issue because fixed datasets induce coverage constraints and distribution shift, motivating pessimistic/conservative learning principles \citep{kidambi2020morel, rashidinejad2021pessimistic}, while OOD evaluation across procedurally generated environments remains an important empirical stress test \citep{cobbe2020procgen}

\section{Preliminaries}

\paragraph{Episodic Markov Decision Process (EMDP)} 
Suppose an agent, at state $s \in \mathcal{S}$, takes an action $a$ from a finite space $\mathcal{A}$ that transits her to state $s'$ sampled from transition kernel $p(\cdot \mid s, a) \in \Delta(\mathcal{S})$, and then receives an immediate reward $r$. %Fix a function class $\Pi$, We use 
The action is sampled from policy $\pi \in \Pi,\;\pi: \mathcal S \to \Delta(\mathcal A)$, relying on %to represent the probability over the taken actions given a 
state $s$.
We assume the learning process is ``episodic'': agents, in every episode, start at initial states $s_1$ drawn from %$ sampled from \bry{initial state 
distribution $p_0(\cdot) \in \Delta(\mathcal{S})$, run for $H$ time steps (i.e., the horizon), and yield returns $\sum_{h=1}^H r_h$ ($r_h \in [0,1]$). %after this episode, the agent returns to the initial state $s_1$ and iterates the process. 
Intermediate policies $\pi_t=\{\pi^t_h\}_{h=1}^H$ are generated during the training of {$T$ episodes} in total. {Given a policy $\pi$ and a transition kernel $p$, {a trajectory \(\mathbf{s}^{t}_{h:H}=(s_{h}^{t},s_{h+1}^{t},\cdots,s_{H}^{t})\) is taken in episode $t$.} %We illustrate our formulation using 
This setting is termed EMDP \( \mathcal{M} = (\mathcal{S}, \mathcal{A}, H,  \{r_h\}_{h=1}^H, \{p_h\}_{h=1}^H,p_0)\).\

% \fh{We work on a probability space $(\Omega,\mathscr F,\mu)$ endowed with a filtration \(
% \mathscr F_0 \subset \mathscr F_1 \subset \cdots\) and $\mathscr F_0 = \{\emptyset,\Omega\}$. At the beginning of episode $t$ and time step $h$, the algorithm selects a policy $\pi_t$ that is $\mathscr F_{t-1}$-measurable, and 
% \fh{Training and inference}
%\paragraph{Value functions} 
%The expected {return} 
A policy can be evaluated by action-value function \(Q_h^{\pi}(s,a)=\mathbb E_\pi \left[\sum_{j=h}^H r_j \mid s_h=s,a_h=a\right]\) 
and value function \(V_h^{\pi}(s)=\mathbb E_\pi \left[ \sum_{j=h}^H r_j \mid s_h=s \right]\). 
%\begin{align*}
%    Q_h^{\pi}(s,a)=&\mathbb E_\pi \left[\sum_{j=h}^H r_j \mid s_h=s,a_h=a\right],\\
%    V_h^{\pi}(s)=&\mathbb E_\pi \left[ \sum_{j=h}^H r_j \mid s_h=s \right]
%\end{align*}%that an agent takes \fh{an action $a_h$ following the policy \(\pi\) at the state $s$ and the next state $s_{h+1}$} following the transition kernel $p$ can obtain as below,
A terminal condition $V_{H+1}^{\pi}(s) = 0$ indicates that no reward is gained beyond the horizon $H$. Value functions are recursively defined,
%By the definitions above, we 
governed by the Bellman equations:
\(
V_h^{\pi}(s)
=
\mathbb{E}_{a\sim\pi(\cdot|s)}
\!\left[
Q_h^{\pi}(s,a)
\right]
\), 
\(
Q_h^{\pi}(s,a)
=
r_h
+
\mathbb{E}_{s' \sim p_h(\cdot|s,a)}
\!\left[
V_{h+1}^{\pi}(s')
\right]
\).

\paragraph{Training-to-Deployment Shifts}
Suppose the training environment has transition kernels $p=\{p_h\}_{h=1}^H$ and an initial state distribution $p_0$. A policy $\pi$ induces a state distribution $\mathcal D_h^{\pi}$ at time step $h$, termed \emph{state distribution in training} {\citep{cobbe2020a,wang2020improving}}.
Similarly, the deployment environment has %from the training one.
different transition kernels $p^\dagger=\{p_h^\dagger\}_{h=1}^H$ and initial distribution $p_0^\dagger$, %, resulting in an \emph{inference distribution}. 
under which a \emph{state distribution in deployment} $\mathcal D_h^{\pi,\dagger}$ is induced. %{\citep{cobbe2020a,wang2020improving}}. 

Following \citet{liu2022understanding, wang19generalisation}, this paper assumes the \emph{episode independence}, defined as below,

\begin{assumption}[episode independence]
\label{mass:episode-independence}
For any $t=1,\dots,T$, state $s^t_h$ is sampled from a distribution $\mathcal{D}^{\pi_t}_{h}$, i.e.,
\(
    s^t_h \sim \mathcal{D}^{\pi_t}_{h}.
\)
The variables $\mathbf s_h^{1:T}=\{s^t_h\}_{t=1}^T$ are independent, but not necessarily identically distributed.
\end{assumption}

In this setting, the objective of a reinforcement learning algorithm is to find an optimal policy $\pi^*$ that maximises the expected rewards over the trajectory %from the state \(s_h\) {%under transition kernel $p^\dagger$ 
in deployment:
\begin{align*}
\pi^* =
\arg\max_{\pi\in\Pi}
\mathbb{E}_{s_h \sim \mathcal D_{h}^{\pi,\dagger}}
\Big[
V_h^{\pi,\dagger}(s_h)
\Big].
\end{align*}

This paper employs Wasserstein distance \cite{Kantorovich1960MathematicalMO, villani2008optimal} to measure environment shifts. % is a metric widely used for measuring distribution distance~
\begin{definition}[$p$-Wasserstein distance]
\label{def:wasserstein}
Let $\mathcal{S} \subseteq \mathbb{R}^d$ be a metric space equipped with a distance
function $d$. $\mu$ and $\nu$ are two probability measures on $\mathcal{S}$.
For any $p \ge 1$, the \emph{$p$-Wasserstein distance} between $\mu$ and $\nu$ is defined to be
\[
W_p(\mu,\nu)
\triangleq
\left(
\inf_{\gamma}
\int_{\mathcal{S}\times\mathcal{S}}
d(x,y)^p \,\mathrm{d}\gamma(x,y)
\right)^{1/p},
\]
where the infimum is taken over all joint distributions $\gamma$ on
$\mathcal{S}\times\mathcal{S}$ whose marginals coincide with $\mu$ and $\nu$.
\end{definition}

\begin{comment}
{
%We measure the discrepancy between the training and inference environments using 
Specifically, the $1$-Wasserstein distances between initial state distributions and transition kernels are as below,
\[
W_1(p_0^\dagger,p_0)
\triangleq
\sup_{s\in\mathcal S}
W_1\bigl(p_0^\dagger(s),p_0(s)\bigr).
\]
\[
W_1(p^\dagger,p)
\triangleq
\sup_{(s,a)\in\mathcal S\times\mathcal A}
W_1\bigl(p^\dagger(\cdot\mid s,a),p(\cdot\mid s,a)\bigr).
\]
}
\end{comment}

We employ the Total Variation (TV) distance to measure the distance between policies {\citep{Boucheron13concentration}}.
\begin{definition}[Total Variation (TV) distance]
The TV distance between two distributions $\mu, \nu$ is defined as,
{\begin{equation*}
\label{eq:tvdis}
d_\Pi(\mu,\nu) \triangleq \frac12 \sum_{x\in\mathcal{X}}|\mu(x)-\nu(x)|.
\end{equation*}}
% \end{definition}}
% \begin{equation*}
% \label{eq:tvdis}
% d_\Pi(\mu,\nu) \triangleq \sup_{x\in\mathcal X}\frac12|\mu(x)-\nu(x)|.
% \end{equation*}
\end{definition}
% \begin{definition}[Total Variation (TV) distance]
% The TV distance between policies $\pi, \pi' \in \Pi$ is defined as,
% \begin{equation*}
% \label{eq:tvdis}
% d_\Pi(\pi,\pi') \triangleq \sup_{s\in\mathcal S}\frac12\sum_{a\in\mathcal A}|\pi(a\mid s)-\pi'(a\mid s)|.
% \end{equation*}
% \end{definition}
The TV distance can be controlled by the Kullback-Leibler (KL) divergence {\citep{pinsker1964information}}.
\begin{definition}[Kullback-Leibler (KL) divergence]
The KL divergence between two distributions $\mu, \nu$ is defined as,
\begin{equation*}
\label{eq:tvdis}
\mathrm{KL}(\mu\|\nu) \triangleq \sum_{x\in\mathcal X}
\mu(x)\log\frac{\mu(x)}{\nu(x)}.
\end{equation*}
\end{definition}

\paragraph{Hypothesis Complexity}
Let the class of value functions be
\(
\mathcal Q_\Pi
\triangleq
\bigl\{\, s \mapsto Q_h^*(s,a_h^\pi): \pi\in\Pi,\ h\in[H] \bigr\}.
\) For brevity, we use $f(s)=Q_h^*(s,a_h^\pi) \triangleq Q_h^{\pi^*}(s,a_h^\pi)$. %for any $f\in\mathcal Q_\Pi$. 
Rademacher complexity, and its empirical version \citep{Bartlett2003RademacherAG,liu2022understanding}, are employed to measure the hypothesis complexity. %of a  space in the `independently' (not necessarily `identically') sampling setting 
{We define the empirical version here, and present Rademacher complexity in Appendix \ref{addsec:ige}.
% , while sequential Rademacher complexity is mostly for sequential, distribution-dependent prediction \cite{rakhlin2015sequentialcomplexities}.

\begin{definition}[empirical Rademacher complexity]
Let $\mathcal{F} \subseteq \mathbb{R}^{\mathcal{S}}$ be a function class and $\mathbf s^{1:n}=\{s^i\}_{i=1}^n$ be a sample set.
Let $\bm{\sigma}^{1:n}=(\sigma^1,\dots,\sigma^n)$ be independent Rademacher random variables.
The \emph{empirical Rademacher complexity} of $\mathcal{F}$ on the sample set $\mathbf s^{1:n}$ is defined as
\[
\hat{\mathfrak{R}}(\mathcal{F},\mathbf s^{1:n})
\triangleq
\frac{1}{n} 
\mathbb{E}_{\bm{\sigma}^{1:n}}
\left[
    \sup_{f\in\mathcal F}
    \sum_{i=1}^n \sigma^i f(s^i)
\right].
\]
\end{definition}}

\section{Rationality Measures}% Rational Value Loss and Regret}
\label{sec:rationality}

This section defines a suite of rationality measures for reinforcement learning. 
{To note, we are particularly interested in the rationality in deployment.} Intuitively, training processes usually employ gradient descent, or its variants, that optimise objectives in the steepest direction. In light of this, the rationality in training can be characterised by the discrepancy between the hidden actual value function and the objectives in optimisation. %, which is thus rational to a certain extent.} 

\subsection{Rationality Measures}% ity of Policies} %-- A ‘Local and Immediate’ Perspective}

We first define \emph{perfectly rational actions}. For brevity, we also call them \emph{rational actions} if no ambiguity is caused.

% \begin{definition}[rational action]
% An action \( a_h^{\pi} \) is called a \textit{rational action} if it maximises the hidden value function: %{it predicts the same long-term reward as the action generated from the hidden optimal value function \( Q_h(\pi^*;p^\dagger,s_h,a)\) i.e.}, 
% % \[
% %  \arg \max_{a \in \mathcal A} Q_h(\hat{\pi}_t;p,s_h, a) = \arg \max_{a \in \mathcal A}Q_h(\pi^*;p^\dagger,s_h, a),
% % \]
% \[
%  a_h^{\pi} \triangleq \arg \max_{a \in \mathcal A}Q_h(\pi^*;p^\dagger,s_h, a),
% \]
% where \(\pi \) is learned policy, $\pi^*$ is the optimal policy (so $Q_h(\pi^*;p^\dagger,s_h, a)$ is the hidden optimal value function), \( h \) is time step, \(s_h\) is state.
% \end{definition}

\begin{definition}[perfectly rational action]
An action \( a_h^{\circ} \) is called \textit{perfectly rational}, if its policy maximises the true value function of the state $s_h$ at the time step \( h \):
\[
 a_h^{\circ} \sim \pi^{\circ}(\cdot\mid s_h);\quad \pi^{\circ} \triangleq \arg \max_{\pi \in \Pi}\mathbb E_{a_h \sim\pi}\left[Q_h^{*,\dagger}(s_h,a_h)\right].
\]
% \bry{
% \[
%  a_h^{\circ} \in \arg \max_{a \in \mathcal A} \pi^{\circ}(a\mid s_h);\quad \pi^{\circ} \triangleq \arg \max_{\pi \in \Pi}\mathbb E_{a_h \sim\pi}\left[Q_h^{*,\dagger}(s_h,a_h)\right].
% \]
% }
 %$\pi^*$ is the optimal policy (so 
%$\mathbb E_{a_h \sim\pi}\left[Q_h^{*,\dagger}(s_h,a_h)\right]$ is the (unknown) true value function, 

\end{definition}

\begin{remark}
    As mentioned, the main goal is to study the rationality in deployment, which is not episodic.
\end{remark}

A reinforcement learning agent may be of \emph{bounded rationality}; i.e., the agent does not always take rational actions. %-- the actions could exhibit a gap from their rational counterparts. 
This incurs \emph{rational value loss}, defined as below.

\begin{definition}[rational value loss]
\label{def:rvl}
%Let $\pi^*$ be the optimal policy and l
Let $p^\dagger,p_0^\dagger$ denote the transition kernel and initial state distribution of the inference environment. %For any policy $\pi\in\Pi$, %and any step $h\in\{1,\dots,H\}$, 
The rational value loss of {the taken action $a^\pi_h$ drawn} from policy $\pi\in\Pi$ at step $h$ is defined as:
\begin{align*}
\mathcal{L}(a^{\hat{\pi}}_h, s_h)
&\triangleq
{Q_h^{*,\dagger}(s_h,a^{\circ}_h)}-Q_h^{*,\dagger}(s_h,a^\pi_h).
\end{align*}
%where $a^\pi_h$ is drawn from policy  $\pi(\cdot|s_h)$.
\end{definition}

{
\begin{remark}%[differences from the advantage function] 
%To interpret our definition, we contrast it with the widely used 
Compared with the advantage function $A_h^\pi(s_h,a_h) = Q_h^\pi(s_h,a_h) - V_h^\pi(s_h)$ \cite{schulman2017ppo},
our definition adopts a behavioural perspective, comparing the action $a_h^\pi$ with the perfectly rational action $a_h^\circ$. In contrast, the advantage function compares an action relative to the expectation over the action distribution.
\end{remark}
}

From the definitions, we directly prove Lemma \ref{lemma:trivial_case}.
\begin{lemma}
\label{lemma:trivial_case}
    If an action is perfectly rational, its rational value loss is zero.
\end{lemma}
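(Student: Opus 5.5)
The plan is to unfold Definition~\ref{def:rvl} and substitute the hypothesis. Suppose the action $a_h^\pi$ taken by policy $\pi$ at state $s_h$ is perfectly rational. That is, it is drawn from the maximiser $\pi^{\circ}$ of $\mathbb E_{a_h\sim\pi}[Q_h^{*,\dagger}(s_h,a_h)]$ over $\Pi$.

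In the reading I intend, this means the taken action coincides with a perfectly rational action $a_h^{\circ}$ at the same state and step. So I would set $a_h^\pi = a_h^{\circ}$ and compute
\[
\mathcal L(a_h^\pi,s_h)=Q_h^{*,\dagger}(s_h,a_h^{\circ})-Q_h^{*,\dagger}(s_h,a_h^{\circ})=0 .
\]

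The only point needing care, and the main obstacle, is that $a_h^{\circ}$ is defined as a random draw from $\pi^{\circ}(\cdot\mid s_h)$. It is therefore only determined up to the support of $\pi^{\circ}$, and a naive reading could compare two different draws.

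I would handle this in two ways:
\begin{itemize}
\item First, by reading the statement pathwise: ``the action is perfectly rational'' means it is itself the realised $a_h^{\circ}$. The two $Q$-values then coincide identically, and the difference vanishes.
\item Second, to make the claim robust to which draw is taken, I would check that every action in the support of $\pi^{\circ}(\cdot\mid s_h)$ has the same value $\max_{a\in\mathcal A}Q_h^{*,\dagger}(s_h,a)$. This holds whenever $\Pi$ contains the deterministic greedy policies. In that case the maximum of a linear functional over the simplex is attained only by distributions supported on $\arg\max_a Q_h^{*,\dagger}(s_h,a)$, so $Q_h^{*,\dagger}(s_h,\cdot)$ is constant on the support of $\pi^{\circ}$.
\end{itemize}
Either way, the rational value loss equals $0$ almost surely, and hence also in expectation.
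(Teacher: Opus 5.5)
Your proposal is correct and matches the paper, which obtains the lemma directly from the definitions: the taken action is identified with $a_h^{\circ}$, so the two $Q_h^{*,\dagger}$ terms in the rational value loss cancel. Your second bullet is a valid extra safeguard the paper does not use. It shows that when $\Pi$ contains the greedy policies, the support of $\pi^{\circ}(\cdot\mid s_h)$ lies in $\arg\max_{a} Q_h^{*,\dagger}(s_h,a)$, so the loss vanishes whichever draw is taken.
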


% \bry{
% \begin{definition}[rational value loss]
% \label{def:rvl}
% %Let $\pi^*$ be the optimal policy and l
% Let $p^\dagger$ denote the transition kernel of the inference environment. %For any policy $\pi\in\Pi$, %and any step $h\in\{1,\dots,H\}$, 
% The rational value loss of policy $\pi\in\Pi$ at step $h$ is defined as:
% \begin{align*}
% \mathcal{L}_h(\pi, s_h)
% &\triangleq
% V_h^{*,\dagger}(s_h)-V_h^{\pi,\dagger}(s_h).
% \end{align*}
% where $a^\pi_h$ denote $a_h\sim \pi(\cdot|s_h)$.
% \end{definition}
% }

% \begin{definition}[trajectory rational value loss]
% \label{def:trvl}
% %Let $\pi^*$ be the optimal policy and let $p^\dagger$ denote the transition kernel of the inference environment. 
% For any policy $\pi\in\Pi$ that generates trajectory $\mathbf s_h^t=(s_h^t,\cdots,s_{H}^t)$, the trajectory rational value loss of policy $\pi$ over the trajectory $\mathbf s_h^t$ %and $H$ steps 
% is defined as:
% \[
% \mathcal{L}(\pi)
% \triangleq 
% \sum_{h=1}^H \left[V_h^{*,\dagger}(s_h)-V_h^{\pi,\dagger}(s_h)\right].
% \]
% \end{definition}

We then define \emph{expected rational value loss} in deployment. %in inference and the empirical rational value loss in training.
{
\begin{definition}[expected rational value loss]
\label{def:ervl}
Given the state distribution in deployment $\mathcal D_h^{\pi,\dagger}$ 
induced by policy $\pi \in \Pi$, the expected rational value loss of policy $\pi$ at time step $h$ is defined as:
\[
\mathcal{R}_h(\pi)
\triangleq
\mathbb E_{s_h \sim\mathcal D_h^{\pi,\dagger}}
\left[
Q_h^{*,\dagger}(s_h,a^{\circ}_h)-Q_h^{*,\dagger}(s_h,a^\pi_h)\right].
\]
\end{definition}
}

% \[
% \mathcal{R}(\pi)- \hat{\mathcal{R}}(\pi)
% \leq
% \sum_{h=1}^H\mathbb E_{s_h \sim\mathcal D_h^{\pi,\dagger}}\mathbb E
% \left[
% Q_h^{*,\dagger}(s_h,a^{\circ}_h)-Q_h^{*,\dagger}(s_h,a^\pi_h)\right]-
% \frac{1}{T}\sum_{t=1}^T \sum_{h=1}^H
% \left[{Q_h^{T}(s^t_h,a^{\circ,t}_h)}-{Q_h^{T}(s^t_h,a^{\pi,t}_h)}\right] \leq 2\sup_{\pi \in \Pi}\left[ \sum_{h=1}^H \mathbb E_{s_h \sim\mathcal D_h^{\pi,\dagger}}\mathbb E[Q_h^{*,\dagger}(s_h,a^{\pi}_h])-\frac{1}{T}\sum_{t=1}^T \sum_{h=1}^H{Q_h^{T}(s^t_h,a^{\pi,t}_h)}\right]
% \]

% $\sum_{h=1}^H\mathbb E_{s_h \sim\mathcal D_h^{\pi,\dagger}}\mathbb E
% \left[
% Q_h^{*,\dagger}(s_h,a^{\circ}_h)-Q_h^{*,\dagger}(s_h,a^\pi_h)\right]=\sum_{h=1}^H\mathbb E_{s_h \sim\mathcal D_h^{\pi,\dagger}}\mathbb E_{a_h \sim \pi^\circ}
% \left[
% Q_h^{*,\dagger}(s_h,a_h)\right]-\mathbb E_{a_h \sim \pi}\left[Q_h^{*,\dagger}(s_h,a_h)\right]$
% \bry{
% \begin{definition}[expected rational value loss]
% \label{def:ervl}
% Let $\mathcal D_h^{\hat{\pi},\dagger}$ denote the state distribution at step $h$
% induced by the optimal policy $\pi^*$ and the transition kernel $p^\dagger$ of the inference environment. For any policy $\pi\in\Pi$ and any step $h\in\{1,\dots,H\}$, the expected rational value loss of policy $\pi$ is defined as:
% \[
% \mathcal{R}_h(\pi)
% \triangleq
% \mathbb E_{\mathcal D_h^{\hat{\pi},\dagger}}
% \left[
% V_h^{*,\dagger}(s_h)-V_h^{\pi,\dagger}(s_h)\right].
% \]
% where $\mathbb E_{\mathcal D_{h}^{\pi}}[\cdot]$ denotes $\mathbb E_{s_h \sim \mathcal D_{h}^{\pi}}[\cdot]$
% \end{definition}
% }

The inference environment is supposed to be unknown; thus, the expected rational value loss is usually inaccessible. We then define an empirical version in training.

\begin{definition}[empirical rational value loss]
\label{def:arvl}
Suppose an agent is trained by \( T \) episodes, taking a sequence of states $\{s^t_h\}_{t=1}^T$. Let $p$, $p_0$ denote the transition kernels and initial state distribution of the training environment. %For any policy $\pi\in\Pi$, t
The empirical rational value loss of a policy $\pi$ at time step $h$ is defined as the average over the $T$ episodes, as below,
\begin{align*}
\hat{\mathcal{R}}_h(\pi)
\triangleq
\frac{1}{T}\sum_{t=1}^T
\left[{Q_h^{*}(s^t_h,a^{\circ}_h)}-Q_h^{*}(s^t_h,a^{\pi}_h)\right].
\end{align*}
\end{definition}

% \bry{
% \begin{definition}[empirical rational value loss]
% \label{def:arvl}
% For a total of \( T \) episodes, let $\{s^t_h\}_{t=1}^T$ denote the sequence of states encountered during training and let $p$ denote the transition kernel of the training environment. For any policy $\pi\in\Pi$, the empirical rational value loss at step $h$ over $T$ episodes is defined as
% \begin{align*}
% \hat{\mathcal{R}}_h(\pi)
% \triangleq
% \frac{1}{T}\sum_{t=1}^T\left[
% V_h^{*}(s^t_h)-V_h^{\pi}(s^t_h)
% \right].
% \end{align*}
% \end{definition}
% }

By cumulating the {expected and empirical} rational value loss over a trajectory, %during inference and training, 
we define \emph{expected rational value risk} and \emph{empirical rational value risk} as follows.

{
\begin{definition}[expected rational value risk]
\label{def:etrvl}
Let $\mathcal D_h^{\pi,\dagger}$ denote the state distribution in deployment, at step $h$,
induced by a policy $\pi \in \Pi$, transition kernels $p^\dagger$, initial state distribution $p^\dagger_0$. %, and initial state $s_1 \sim p^\dagger_0$. 
The expected rational value risk of the policy $\pi$ over a trajectory of horizon $H$ is defined as:
\[
\mathcal{R}(\pi)
\triangleq
\sum_{h=1}^H\mathbb E_{s_h \sim \mathcal D_h^{\pi,\dagger}}\left[{Q_h^{*,\dagger}(s_h,a^{\circ}_h)}-Q_h^{*,\dagger}(s_h,a^\pi_h)\right].
\]
\end{definition}
}
% \[
% \left|\mathcal{R}(\pi)-\hat{\mathcal{R}}(\pi)\right|
% \triangleq
% \left|\sum_{h=1}^H\mathbb E_{s_h \sim \mathcal D_h^{\hat{\pi},\dagger}}\left[{Q_h^{*,\dagger}(s_h,a^{\circ}_h)}-Q_h^{*,\dagger}(s_h,a^\pi_h)\right]-\frac{1}{T}\sum_{t=1}^T \sum_{h=1}^H
% \left[Q_h^{*}(s_h^t,a^{\circ}_h)-Q_h^{*}(s_h^t,a^\pi_h)
% \right]\right|.
% \]

\begin{definition}[empirical rational value risk]
\label{def:atrvl}
Suppose an agent is trained by \( T \) episodes, each of horizon \( H \). %, realising trajectory $\{\mathbf s^t_h\}_{t=1}^T$. 
Let $p,p_0$ denote the transition kernel and initial state distribution of the training environment. The empirical rational value risk of policy $\pi$ is defined as the average over $T$ episodes:
\[
\hat{\mathcal{R}}(\pi)
\triangleq
\frac{1}{T}\sum_{t=1}^T \sum_{h=1}^H
\left[Q_h^{*}(s_h^t,a^{\circ}_h)-Q_h^{*}(s_h^t,a^{\pi}_h)
\right].
\]
\end{definition}

The gap between the expected and empirical rational value risks, termed {\it rational risk gap}, reflects how rational an agent is in deployment, given the behaviour in training. %We develop theory mainly for this gap. %deviations from learned rational actions to deployment.} %We measure her rationality by defining the {\it rational risk gap} as \(\left|\mathcal{R}(\pi)-\hat{\mathcal{R}}(\pi) \right|\).

We also define the \emph{asymptotic rational risk gap} as below. It helps understand the asymptotic property of an agent in terms of rationality. %It guarantees that the reinforcement learning agent can eventually become rational after a long run. 
\begin{comment}
    
\begin{definition}[asymptotic rationality]
An agent is defined to be asymptotically rational if, %the probability of the event that the supremum of rational risk gap $\sup_{\pi \in \Pi}\left|\mathcal{R}(\pi)-\hat{\mathcal{R}}(\pi)\right|$ will converge to zero f
for any $\epsilon >0$, we have,
\[
%\forall \epsilon >0, \quad 
\lim_{T \to \infty} \Pr \left\{ \sup_{\pi \in \Pi} \left| \mathcal{R}(\pi) - \hat{\mathcal{R}}(\pi)\right| >\epsilon \right\}=0.
\]
% where $V^\dagger_h$ presents the inference value of $\mathbb{E}_{s_h\sim\mathcal D_h^{\hat{\pi},\dagger}} V_h^{\pi,\dagger}(s_h)$, and $\tilde{V}_h$ denotes the on-average training value of $\frac{1}{T}\sum_{t=1}^T
% \mathbb{E}_{t-1}
% \!\left[
% V_h^{\pi}(s^t_h)
% \right].$ 
\end{definition}
\end{comment}

\begin{definition}[asymptotic rational risk gap]
\label{def:asy_gap}
%Under the same conditions of Definitions \ref{def:etrvl} and \ref{def:atrvl}, w
The asymptotic rational risk gap is defined to be
\(
\lim_{T \to \infty} \left| {\mathcal{R}}(\pi) - \hat{\mathcal{R}}(\pi) \right|
\).
\end{definition}

\paragraph{``Local and Immediate'' Perspective of Rationality} This paper takes a ``local and immediate'' perspective for defining rationality measures. For example, an action is defined to be rational if this {\it individual} move is optimal in terms of the {value function}, %$\mathbb{E}[Q_h^{*,\dagger}(s_h,a^\pi_h)]$}, given all information {\it accessible to the agent} \
{\it at the time}. %Similarly, the rational value loss measures how rational an agent is for taking a single action in terms of immediate action value. 
In other words, a rational agent is not expected to have the capabilities of overlooking the global landscape or anticipating the future, in a ``global and long-term'' view. We appreciate that such a more strategic perspective of rationality is also valuable, which is, however, out of the scope of this paper.

\paragraph{``Behavioural and Objective'' Perspective of Rationality} 

Our measure characterises how well an agent is maximising its utility, which coincides with the economic papers \cite{stephen80, HALPERN2001425}, while some others define rationality subject to the restrictive access to information and the uncertainty in decision making \cite{Dean1993}. Under the subjective expected utility framework, an agent may be considered rational relative to her subjective beliefs even when her actions are objectively suboptimal \cite{fishburn1981subjective}.  This paper takes an objective perspective %that the agent behaviour derived from its fixed policy in deployment in terms of maximising utility rather than agent capability, for 
based on a two-fold rationale: (1) in modern practice, such as in LLMs, the training size is massive, and the training process is mostly black-box, so looking at the capabilities could be intractable; and (2) having a measure of agent behaviour is more direct and helpful for understanding interactions between agents and humans, and thus for understanding agents’ impact on society. %Compared with existing related works, this rationality measure is more applicable to general reinforcement learning agents in deployment with tractable upper bounds, providing actionable implications for improving agent rationality.

\subsection{Decomposition of Rational Risk Gap}

We now present a lemma on the decomposition of the rational risk gap, %$\left|\mathcal R(\pi)-\hat{\mathcal{R}}(\pi)\right|$, 
which indicates the sources of sub-rationality. %Formally, we present the following lemma.

{
\begin{lemma}[decomposition of rational risk gap]
\label{lem:decomp-etrvl-atrvl}
The rational risk gap $\left|\mathcal R(\hat{\pi})-\hat{\mathcal{R}}(\hat{\pi})\right|$ of policy $\hat{\pi} \in \Pi$ over a trajectory of horizon $H$ can be decomposed as follows,
\begin{align*}
&\left|\mathcal{R}(\hat{\pi})-\hat{\mathcal{R}}(\hat{\pi})\right|\leq\\
%&\leq
%2 \sum_{h=1}^H \sup_{\pi \in \Pi} \left|\mathbb E_{s_h \sim \mathcal D_h^{\hat{\pi},\dagger}} Q_h^{*,\dagger}(s_h,a_h^\pi) - \frac{1}{T}\sum_{t=1}^T Q_h^{*}(s^t_h,a_h^\pi)\right|\\
&
2 \sum_{h=1}^H\underbrace{
\sup_{\pi \in \Pi}
\left|
\mathbb E_{s_h \sim \mathcal D_h^{\hat{\pi},\dagger}}
Q_h^{*,\dagger}(s_h,a_h^\pi)
-
\mathbb E_{s_h \sim \mathcal D_h^{\hat{\pi}}}
Q_h^{*}(s_h,a_h^\pi)
\right|
}_{\text{extrinsic rational gap}} \\
&+
2 \sum_{h=1}^H\underbrace{
\sup_{\pi \in \Pi}
\left|
\mathbb E_{s_h \sim \mathcal D_h^{\hat{\pi}}}
Q_h^{*}(s_h,a_h^\pi)
-
\frac{1}{T}\sum_{t=1}^T Q_h^{*}(s^t_h,a_h^\pi)
\right|
}_{\text{intrinsic rational gap}},
\end{align*}
where $\mathcal D_{h}^{\hat{\pi}}$ is the state distribution in training induced by policy $\hat{\pi}$, %with initial state distribution $p_0$ and transition kernel $p$, 
while $\mathcal D_h^{\hat{\pi},\dagger}$ is the state distribution in deployment induced by the same policy but under different transition kernels %$p^\dagger$ 
and initial state distribution. %$p_0^\dagger$.
\end{lemma}
}

This lemma suggests that the rational risk gap can be decomposed into two components as follows:
%\begin{itemize}
    % \item  \textbf{Sequential prediction error}: the discrepancy between the on-average training value \(\frac{1}{T}\sum_{t=1}^T \mathbb{E}_{i-1}\!\left[V_h^{\pi}(s^t_h)\right] \) and its empirical estimation \( \frac{1}{T}\sum_{t=1}^T V_h^{\pi}(s^t_h) \) by the sequential prediction. This error is determined by the online learnability of a function class under distribution-dependent sampling. It quantifies how difficult it is for an agent to learn a policy when state samples are generated from distributions depend on past observations as well as previously updated policies.
%    \item  
\paragraph{Extrinsic Rational Gap:} the distance between the true value in deployment, $\mathbb{E}_{s_h\sim\mathcal D_h^{\hat{\pi},\dagger}} Q_h^{*,\dagger}(s_h,a_h^\pi)$, and its counterpart in training, $\mathbb{E}_{s_h \sim \mathcal D_h^{\hat{\pi}}}Q_h^{*}(s_h,a_h^\pi)$. Intuitively, it arises from the \emph{training-to-deployment shifts}, closely linking to %{which may help understand the 
the more well-known \emph{sim-to-real challenge}} {\citep{peng2018sim,tobin2017domainrandomization,andrychowicz2020learning}}. Specifically, changes of the transition kernel ($p$ to $p^\dagger$) and of initial state distribution ($p_0$ to $p_0^\dagger$) induce different state distributions ($\mathcal D_h^{\hat{\pi}}$ vs. $\mathcal D_h^{\hat{\pi},\dagger}$), and hence different optimal value functions ($Q_h^{*}(s_h,a_h^\pi)$ vs. $Q_h^{*,\dagger}(s_h,a_h^\pi)$). 
    %\item  
    
\paragraph{Intrinsic Rational Gap:} the difference between the expected value $\mathbb{E}_{s_h \sim \mathcal D_h^{\hat{\pi}}}Q_h^{*}(s_h,a_h^\pi)$ and its empirical version \( \frac{1}{T}\sum_{t=1}^T Q_h^{*}(s_h^t,a_h^\pi) \), both in training. %at each episode $1\leq t\leq T$. 
This gap is determined by the joint effects of generalisability and the online setting of reinforcement learning, reflecting the capacity to learn the optimal policy in a dynamic environment. %, as well as the state distribution shift induced by iterated policy updates.}

\section{Rationality Theory}

This section develops theory for the rational risk gap. The theory relies on the following assumptions. %on (1) the smoothness of value functions.

%measured by % We introduce 
%the following integral probability metric (IPM). %to measure the shifts.

%\begin{definition}[integral probability metric (IPM)]
%\label{mdef:ipm}
%{Let $\mathcal{Q}_\Pi\subseteq\{f:\mathcal{S}\to\mathbb{R}\}$ be a class of bounded measurable functions. For any probability measures $\mu, \nu$ on $\mathcal{S}$, the IPM of $\mathcal{Q}_\Pi$} is defined as
%\[
%    D_{\mathcal{Q}_\Pi}(\mu,\nu)
%    \triangle
%    \sup_{f \in \mathcal{Q}_\Pi}
%    \bigl|
%        \mathbb{E}_{\mu}[f]
%        -
%        \mathbb{E}_{\nu}[f]
%    \bigr|.
%\]
%\end{definition}

\begin{assumption}[Lipschitz-continuous value]
\label{mass:lipschitz-val}
%For any $h\in\{1,\dots,H\}$, %and fix a policy $\pi\in\Pi$, 
$L_s$ is a positive constant. We assume that the value function $f\in \mathcal{Q}_\Pi$ is $L_s$-Lipschitz under distance function $d$, for any policy $\pi \in \Pi$, $h\in\{1,\dots,H\}$ and $s,\tilde s\in\mathcal S$,
\[
|f(s)-f(\tilde s)|
\le L_s d(s,\tilde s).
\]
\end{assumption}

%In addition, we assume that small perturbations in the environment dynamics do not lead to large changes in the state distributions, as follows:
{
\begin{assumption}[Lipschitz-continuous transition]
\label{mass:lipschitz-env}
$L_p$ is a positive constant. For any $h \in \{1, \ldots, H\}$, let $W_1(p^\dagger, p)\triangleq \sup_{s\in\mathcal{S},a\in \mathcal{A}}W_1(p^\dagger(\cdot|s,a), p(\cdot|s,a))$ be the 1-Wasserstein distance between state distributions induced by $p^\dagger$ and $p$ for any $s \in \mathcal{S},a\in\mathcal{A}$. For any $\pi \in \Pi$, we assume the $W_1(p^\dagger, p)$ changes by at most $L_p\, W_1(p^\dagger, p)$ at each time step,
\[
W_1\!\left(\mathcal D_{h+1}^{\pi,\dagger},\mathcal D_{h+1}^{\pi}\right)
\le
W_1\!\left(\mathcal D_{h}^{\pi,\dagger},\mathcal D_{h}^{\pi}\right)
+
L_p\,W_1\!\left(p^\dagger,p\right).
\]
\end{assumption}
}

% {\begin{assumption}[Lipschitz-continuous transition kernel]
% \label{mass:lipschitz-env}
% $L_p$ is a positive constant. We assume that the mapping from transition kernel to the induced state distribution is $L_p$-Lipschitz under the $1$-Wasserstein distance for any $\pi\in\Pi$,
% \[
%     W_1\bigl(\mathcal D_h^{\pi,\dagger},\mathcal D_{h}^{\pi}\bigr)
%     \le L_{p} W_1(p^\dagger,p).
% \]
% \end{assumption}

\begin{assumption}[Lipschitz-continuous policy]
  \label{mass:lipschitz-policy}
$L_\Pi$ is a positive constant. {Let $d_\Pi(\pi,\pi')\triangleq\sup_{s\in\mathcal{S}} d_\Pi(\pi(s),\pi'(s))$ be the TV distance between $\pi,\pi'$ for any $s\in\mathcal S$.} For any $\pi, \pi' \in \Pi$, we assume that the mapping $\pi \mapsto \mathcal{D}_h^{\pi}$ is
  $L_\Pi$-Lipschitz under the TV distance $d_{\Pi}$, 
  \[\sup_{f\in\mathcal{Q}_\Pi}
    \bigl|
        \mathbb{E}_{\mathcal D_h^\pi}[f]
        -
        \mathbb{E}_{\mathcal D_h^{\pi'}}[f]
    \bigr|
    \le
    L_\Pi\, d_{\Pi}(\pi,\pi').\]
\end{assumption}

\begin{assumption}[Entropy-regularised policy]
\label{mass:klpolicy}
We assume that the learned policy has the following {KL bound}:
\[
\sup_{s\in\mathcal S}
\mathrm{KL}(\pi_{t+1}(\cdot\mid s)\|\pi_t(\cdot\mid s))
\le \alpha.\]
\end{assumption}

{
\begin{remark}
These Assumptions are reasonably mild, following \citet{bukharin2023,gottesman23a, wang19generalisation, schulman2018equivalence, nino2020}. {They mean that (1) environments are smooth with respect to (w.r.t.) states, (2) the learned policy satisfies the smoothness condition, and (3) the learned policy does not go too far away.} %\fh{give references for KL bound}  %} show that under smooth environment and policy setting, the value functions are also smooth, as stated in Assumption \ref{mass:lipschitz-val}. Similar smoothness assumptions have also been adopted in existing works \cite{
\end{remark}
}

\subsection{Extrinsic Rational Gap Bound}

We first study the extrinsic rational gap.

\begin{theorem}[extrinsic rational gap bound]
\label{mthm:bias-unified}
Let $\mathcal D_h^{\hat{\pi},\dagger}, \mathcal D_h^{\hat{\pi}}$ denote the state distributions in inference and training, respectively. Under Assumptions~\ref{mass:lipschitz-val}--\ref{mass:lipschitz-env}, the extrinsic rational gap over a trajectory of horizon $H$ is  upper bounded by
\begin{align*}
&\sum_{h=1}^H \sup_{\pi \in \Pi} \left|\mathbb{E}_{s_h\sim\mathcal D_h^{\hat{\pi},\dagger}} Q_h^{*,\dagger}(s_h,a_h^{\pi})
-
 \mathbb{E}_{s_h \sim \mathcal D_h^{\hat{\pi}}}Q_h^{*}(s_h,a_h^\pi)\right| \\
&\leq L_s H\cdot W_1(p_0^\dagger,p_0)+H^2L_s(L_p+1)\cdot W_1(p^\dagger, p).
\end{align*}
\end{theorem}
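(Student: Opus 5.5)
For each fixed $h$ and $\pi$, I will split the summand with the triangle inequality into a distribution-shift term and a value-shift term:
\begin{align*}
&\bigl|\mathbb{E}_{\mathcal D_h^{\hat\pi,\dagger}} Q_h^{*,\dagger}(s,a_h^\pi)-\mathbb{E}_{\mathcal D_h^{\hat\pi}} Q_h^{*}(s,a_h^\pi)\bigr|
\le \bigl|\mathbb{E}_{\mathcal D_h^{\hat\pi,\dagger}} Q_h^{*,\dagger}(s,a_h^\pi)-\mathbb{E}_{\mathcal D_h^{\hat\pi,\dagger}} Q_h^{*}(s,a_h^\pi)\bigr|\\
&\qquad+\bigl|\mathbb{E}_{\mathcal D_h^{\hat\pi,\dagger}} Q_h^{*}(s,a_h^\pi)-\mathbb{E}_{\mathcal D_h^{\hat\pi}} Q_h^{*}(s,a_h^\pi)\bigr|.
\end{align*}

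For the second term, $s\mapsto Q_h^*(s,a_h^\pi)$ lies in $\mathcal Q_\Pi$ and is $L_s$-Lipschitz by Assumption~\ref{mass:lipschitz-val}. Kantorovich--Rubinstein duality then bounds it by $L_s W_1(\mathcal D_h^{\hat\pi,\dagger},\mathcal D_h^{\hat\pi})$, uniformly in $\pi$. I will unroll Assumption~\ref{mass:lipschitz-env} from $h=1$, where $\mathcal D_1^{\hat\pi,\dagger}=p_0^\dagger$ and $\mathcal D_1^{\hat\pi}=p_0$. This gives $W_1(\mathcal D_h^{\hat\pi,\dagger},\mathcal D_h^{\hat\pi})\le W_1(p_0^\dagger,p_0)+(h-1)L_pW_1(p^\dagger,p)$. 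Summing over $h$ contributes at most $L_sH\,W_1(p_0^\dagger,p_0)+H^2L_sL_pW_1(p^\dagger,p)$, using $\sum_h (h-1)\le H^2$.

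For the first term, the value shift, I need a pointwise bound $\sup_s|Q_h^{*,\dagger}(s,a)-Q_h^*(s,a)|\le H L_s W_1(p^\dagger,p)$. Summed over $h$, this yields the remaining $H^2L_sW_1(p^\dagger,p)$, and the total is exactly the claimed $L_sH W_1(p_0^\dagger,p_0)+H^2L_s(L_p+1)W_1(p^\dagger,p)$. I will argue by backward induction on $h$ through the Bellman equations. Rewards coincide, so
\begin{align*}
Q_h^{*,\dagger}(s,a)-Q_h^*(s,a)
&=\mathbb E_{p_h^\dagger(\cdot|s,a)}V_{h+1}^{*,\dagger}-\mathbb E_{p_h(\cdot|s,a)}V_{h+1}^{*}\\
&=\mathbb E_{p_h^\dagger}\bigl[V_{h+1}^{*,\dagger}-V_{h+1}^*\bigr]+\bigl(\mathbb E_{p_h^\dagger}-\mathbb E_{p_h}\bigr)V_{h+1}^*.
\end{align*}
The last difference is at most $L_s W_1(p^\dagger,p)$, by the Lipschitz property of the next-step value and the sup-over-$(s,a)$ definition of $W_1(p^\dagger,p)$ in Assumption~\ref{mass:lipschitz-env}. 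The first is at most $\sup_s|V_{h+1}^{*,\dagger}-V_{h+1}^*|\le\sup_{s,a}|Q_{h+1}^{*,\dagger}-Q_{h+1}^*|$, since a max over actions is $1$-Lipschitz in sup norm. Induction from $V_{H+1}=0$ gives the error $(H-h+1)L_sW_1(p^\dagger,p)\le HL_sW_1(p^\dagger,p)$.

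The main obstacle is this value-shift step. Assumption~\ref{mass:lipschitz-val} only asserts Lipschitzness for functions in $\mathcal Q_\Pi$, i.e.\ $s\mapsto Q_h^*(s,a_h^\pi)$, whereas the induction needs $V_{h+1}^*$ to be $L_s$-Lipschitz. I will handle this by noting that $V_{h+1}^*(s)=\max_a Q_{h+1}^*(s,a)$ is a pointwise maximum of members of $\mathcal Q_\Pi$ (taking deterministic policies choosing each fixed action). A max of $L_s$-Lipschitz functions is $L_s$-Lipschitz. If that reading of $\mathcal Q_\Pi$ is too generous, I will state this Lipschitz property of $V^*_{h+1}$ explicitly as the interpretation of Assumption~\ref{mass:lipschitz-val}. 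The remaining bookkeeping only needs the crude bounds $\sum_h(h-1)\le H^2$ and $\sum_h (H-h+1)\le H^2$, so the constants match the statement with slack.
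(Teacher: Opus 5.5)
Your proof is correct and follows the paper's argument: the same triangle-inequality split, Kantorovich--Rubinstein duality plus the unrolled Assumption~\ref{mass:lipschitz-env} for the distribution-shift term, and the same Bellman backward induction (the paper's Lemma~\ref{mlem:trans_error}) for the value-shift term; your $(H-h+1)$ is a harmless one-step over-count of the paper's $(H-h)$, since the difference already vanishes at $h=H$. The one real difference is the pivot: you pass through $\mathbb{E}_{\mathcal D_h^{\hat{\pi},\dagger}}Q_h^{*}$ rather than the paper's $\mathbb{E}_{\mathcal D_h^{\hat{\pi}}}Q_h^{*,\dagger}$, which is slightly cleaner because the duality step then applies to $Q_h^*(\cdot,a_h^\pi)\in\mathcal Q_\Pi$ exactly as Assumption~\ref{mass:lipschitz-val} is stated, whereas the paper's Term I tacitly needs $Q_h^{*,\dagger}(\cdot,a)$ to be $L_s$-Lipschitz as well.
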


This theorem shows that the extrinsic rational gap is determined by (1) $L_sH\cdot W_1(p_0^\dagger,p_0)$ that arises from the discrepancy between initial state distributions of $p_0^\dagger$ and $p_0$, and (2) $H^2L_s(L_p+1)\cdot W_1(p^\dagger, p)$ caused by the difference between transition kernels of $p^\dagger$ and $p$.

A detailed proof is given in Appendix \ref{addsec:ege}. 

\paragraph{Proof Sketch}
We first decompose the extrinsic rational gap (ERG) at time step $h\in [H]$ into two terms as follows,
\begin{align*}
&\sup_{\pi \in \Pi} \left|\mathbb{E}_{s_h\sim\mathcal D_h^{\hat{\pi},\dagger}} Q_h^{*,\dagger}(s_h,a_h^\pi)
-
\mathbb E_{s_h \sim \mathcal D_h^{\hat{\pi}}} Q_h^{*}(s_h,a_h^\pi)\right| \\
&
{\leq} 
\underbrace{
\sup_{\pi \in \Pi} \left|\mathbb{E}_{s_h\sim\mathcal D_h^{\hat{\pi},\dagger}} Q_h^{*,\dagger}(s_h,a_h^\pi)
-
\mathbb{E}_{s_h\sim\mathcal D_h^{\hat{\pi}}} Q_h^{*,\dagger}(s_h,a_h^\pi)\right|}_{\text{I}} \\
&\quad+ 
\underbrace{
\sup_{\pi \in \Pi} \left|\mathbb{E}_{s_h\sim\mathcal D_h^{\hat{\pi}}} Q_h^{*,\dagger}(s_h,a_h^\pi)
-
\mathbb{E}_{s_h\sim \mathcal D_h^{\hat{\pi}}} Q_h^{*}(s_h,a_h^\pi)\right|}
_{\text{II}}.
\end{align*}

Term I arises from the distance between the state distributions $\mathcal D_h^{\hat{\pi},\dagger}$ and $\mathcal D_h^{\hat{\pi}}$. Under Assumption~\ref{mass:lipschitz-env}, this term admits the upper bound $\text{Term I} \leq  W_1(p_0^\dagger,p_0)+
    (h-1)L_{p}\, W_1(p^\dagger,p)$. %quantifies discrepancies ca in true action value under the same state distribution between different transition kernels. 
    Term II is shown in Lemma~\ref{mlem:trans_error} that scales linearly with the Wasserstein distance between $p$ and $p^\dagger$, with an additional dependence on the horizon.

\begin{lemma}
\label{mlem:trans_error}
Under Assumption \ref{mass:lipschitz-val}, for any step $h\in[H]$, the optimal value discrepancy between the inference transition kernel $p^\dagger$ and training transition kernel $p$, under same training distribution $\mathcal D_h^*$, satisfies
\begin{align*}
&\sup_{\pi \in \Pi} \left|
\mathbb E_{s_h\sim\mathcal D_h^{\hat{\pi}}}
Q_h^{*,\dagger}(s_h,a_h^\pi)
-
\mathbb E_{s_h\sim \mathcal D_h^{\hat{\pi}}}
Q_h^{*}(s_h,a_h^{\pi})
\right|\\
&\qquad \le
(H-h)\,L_s\,W_1(p^\dagger,p).
\end{align*}
\end{lemma}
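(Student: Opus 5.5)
We will prove the bound pointwise in $s$ and then average over $\mathcal D_h^{\hat{\pi}}$. Because the expectation on both sides is taken over the same distribution, for every $\pi\in\Pi$
\[
\Bigl|\mathbb E_{s_h\sim\mathcal D_h^{\hat{\pi}}}\bigl[Q_h^{*,\dagger}(s_h,a_h^\pi)-Q_h^{*}(s_h,a_h^{\pi})\bigr]\Bigr|
\le \sup_{s\in\mathcal S,\,a\in\mathcal A}\bigl|Q_h^{*,\dagger}(s,a)-Q_h^{*}(s,a)\bigr| \triangleq \Delta_h .
\]
The right-hand side does not depend on $\pi$, so it is enough to show $\Delta_h\le (H-h)L_sW_1(p^\dagger,p)$.

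\textbf{Setup.} Following the paper's convention $Q_h^*\triangleq Q_h^{\pi^*}$, we compare the same policy $\pi^*$ under the two kernels. The reward $r_h$ is shared by training and deployment. The Bellman equations then give
\[
Q_h^{*,\dagger}(s,a)-Q_h^{*}(s,a)=\mathbb E_{s'\sim p_h^\dagger(\cdot|s,a)}V_{h+1}^{*,\dagger}(s')-\mathbb E_{s'\sim p_h(\cdot|s,a)}V_{h+1}^{*}(s').
\]

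\textbf{Splitting the difference.} We add and subtract $\mathbb E_{s'\sim p_h^\dagger(\cdot|s,a)}V_{h+1}^{*}(s')$, which produces two terms.
\begin{itemize}
\item \emph{Value mismatch:} $\mathbb E_{p_h^\dagger}[V_{h+1}^{*,\dagger}-V_{h+1}^{*}]$. Since $V_{h+1}=\mathbb E_{a\sim\pi^*}Q_{h+1}$ under both kernels, this term is at most $\Delta_{h+1}$.
\item \emph{Kernel mismatch:} $\mathbb E_{p_h^\dagger}V_{h+1}^{*}-\mathbb E_{p_h}V_{h+1}^{*}$. By Kantorovich--Rubinstein duality it is at most $\mathrm{Lip}(V_{h+1}^*)\,W_1(p_h^\dagger(\cdot|s,a),p_h(\cdot|s,a))\le \mathrm{Lip}(V_{h+1}^*)\,W_1(p^\dagger,p)$, using the sup-definition of $W_1(p^\dagger,p)$ from Assumption~\ref{mass:lipschitz-env}.
\end{itemize}
This yields the recursion $\Delta_h\le \Delta_{h+1}+L_sW_1(p^\dagger,p)$, with $\Delta_{H+1}=0$ from the terminal condition. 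At the last step, $Q_H^{*,\dagger}=Q_H^*=r_H$, so $\Delta_H=0$. Unrolling from $H$ down to $h$ therefore gives exactly $H-h$ increments, which is $(H-h)L_sW_1(p^\dagger,p)$ as claimed.

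\textbf{Main obstacle.} The step we expect to be hardest is justifying $\mathrm{Lip}(V_{h+1}^*)\le L_s$ from Assumption~\ref{mass:lipschitz-val}. That assumption is stated for the class $\mathcal Q_\Pi$ of maps $s\mapsto Q_h^*(s,a_h^\pi)$, not directly for $V_{h+1}^*$.

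Our first attempt is to write $V_{h+1}^*(s)=Q_{h+1}^*(s,a_{h+1}^{\pi^*})$, which is a member of $\mathcal Q_\Pi$ with $\pi=\pi^*\in\Pi$. This makes it $L_s$-Lipschitz directly. If $\pi^*$ is stochastic, we instead write $V_{h+1}^*$ as a mixture over the actions $a\sim\pi^*(\cdot|s)$. The mixing weights depend on $s$, however, so the Lipschitz property may not transfer cleanly. In that case we fall back on interpreting the assumption as applying to $s\mapsto \mathbb E_{a\sim\pi^*}Q^*_{h+1}(s,a)$, which is how the paper's shorthand $f(s)=Q_h^*(s,a_h^\pi)$ naturally reads.

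Only the Lipschitz property of the training-side value $V^*_{h+1}$ is needed, because the kernel-mismatch term was set up to involve $V^*_{h+1}$ rather than $V^{*,\dagger}_{h+1}$.
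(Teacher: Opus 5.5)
Your proof is correct and follows essentially the same route as the paper. Both reduce to $\sup_{s,a}|Q_h^{*,\dagger}(s,a)-Q_h^{*}(s,a)|$ and add and subtract $\mathbb E_{s'\sim p^\dagger}V_{h+1}^*(s')$. The kernel-mismatch term is then bounded via Kantorovich--Rubinstein by $L_s W_1(p^\dagger,p)$, the value-mismatch term by $\Delta_{h+1}$, and the recursion is unrolled backward from the base case $\Delta_H=0$. The paper simply asserts that $V_{h+1}^*$ is $L_s$-Lipschitz ``according to Assumption~\ref{mass:lipschitz-val}''. Your justification supplies exactly that implicit step: $V_{h+1}^*$ is the element of $\mathcal Q_\Pi$ with $\pi=\pi^*$, reading $Q_h^*(s,a_h^\pi)$ as the $\pi$-average over actions.
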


Combining these two terms over a trajectory of horizon H, we obtain an upper bound on the extrinsic rational gap in Theorem~\ref{mthm:bias-unified}.

\subsection{Intrinsic Rational Gap Bound}

%To study the intrinsic rational gap, we make the following assumptions. 

We then obtain the following high-probability upper bound for the intrinsic rational gap.

\begin{theorem}[intrinsic rational gap bound]
  \label{mthm:multi-source-rl}
  Under Assumptions~\ref{mass:lipschitz-env},~\ref{mass:lipschitz-policy} and~\ref{mass:klpolicy}, let $\hat{\mathfrak{R}}_h(\mathcal{Q}_\Pi)$ denote the empirical Rademacher complexity of value function class $\mathcal{Q}_\Pi$ with a sequence of states $\mathbf s_h^{1:T}=\{s_h^t\}_{t=1}^T$ at time step $h\in [H]$. For any $\delta\in(0,1)$, with probability at least $1-\delta$, the upper bound on intrinsic rational gap is:
  \begin{align*}
      &\sum_{h=1}^H\sup_{\pi \in \Pi} \left| \mathbb{E}_{s_h \sim \mathcal D_h^{\hat{\pi}}} Q_h^*(s_h,a_h^{\pi})
      -
      \frac{1}{T}\sum_{t=1}^T Q_h^*(s_h^t,a_h^{\pi}) \right| \\
 &\leq 
    L_\Pi H \sqrt{2\log|\mathcal A|}+2\sum_{h=1}^H\hat{\mathfrak{R}}_h(\mathcal{Q}_\Pi)+3H^2\sqrt{\frac{\log(4H/\delta)}{2T}}.
  \end{align*}
\end{theorem}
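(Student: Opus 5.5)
For each fixed $h\in[H]$ I will split the intrinsic gap into a \emph{policy-drift} term and a \emph{sampling} term, and then sum over $h$. The samples $s_h^t$ are drawn from $\mathcal D_h^{\pi_t}$ (Assumption~\ref{mass:episode-independence}), not from $\mathcal D_h^{\hat\pi}$. So for every $f\in\mathcal Q_\Pi$ I write
\begin{align*}
\Bigl|\mathbb E_{\mathcal D_h^{\hat\pi}}f-\tfrac1T\textstyle\sum_t f(s_h^t)\Bigr|
&\le \tfrac1T\textstyle\sum_t\bigl|\mathbb E_{\mathcal D_h^{\hat\pi}}f-\mathbb E_{\mathcal D_h^{\pi_t}}f\bigr| \\
&\quad+\Bigl|\tfrac1T\textstyle\sum_t\mathbb E_{\mathcal D_h^{\pi_t}}f-\tfrac1T\textstyle\sum_t f(s_h^t)\Bigr|.
\end{align*}
I then take the supremum over $f$ on each piece separately. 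Here $\hat\pi$ is identified with the output policy $\pi_T$ (or any policy in the trajectory), so that drift along the trajectory is what matters.

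\textbf{Policy-drift term.} By Assumption~\ref{mass:lipschitz-policy}, the first term is at most $\frac{L_\Pi}{T}\sum_t d_\Pi(\hat\pi,\pi_t)$. Pinsker's inequality together with Assumption~\ref{mass:klpolicy} gives $d_\Pi(\pi_{t+1},\pi_t)\le\sqrt{\alpha/2}$ per step, but a naive telescoping sum would grow with $T-t$. To obtain the stated $\sqrt{2\log|\mathcal A|}$, I will instead use a crude uniform bound on each term. For policies near uniform (entropy-regularised), the KL divergence to the uniform distribution is at most $\log|\mathcal A|$. Pinsker then gives
\[
d_\Pi(\hat\pi,\pi_t)\le d_\Pi(\hat\pi,u)+d_\Pi(u,\pi_t)\le 2\sqrt{\tfrac12\log|\mathcal A|}=\sqrt{2\log|\mathcal A|}.
\]
This yields $L_\Pi\sqrt{2\log|\mathcal A|}$ per step $h$, hence $L_\Pi H\sqrt{2\log|\mathcal A|}$ in total. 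I expect this step to be the main obstacle: justifying a bound that is uniform in $t$ needs either the KL-to-uniform argument above or an interpretation of Assumption~\ref{mass:klpolicy} that forces $\alpha\le\log|\mathcal A|$. I would try the uniform-reference argument first.

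\textbf{Sampling term.} The states $s_h^1,\dots,s_h^T$ are independent but not identically distributed, and $\mathbb E\bigl[\frac1T\sum_t f(s_h^t)\bigr]=\frac1T\sum_t\mathbb E_{\mathcal D_h^{\pi_t}}f$. Since $Q_h^*\in[0,H]$, changing a single sample changes the supremum by at most $H/T$. McDiarmid's inequality, applied to the supremum of the absolute deviation, bounds it by its expectation plus $H\sqrt{\log(2/\delta')/(2T)}$. Standard symmetrisation, which remains valid for independent non-identical samples, bounds that expectation by twice the expected Rademacher complexity. A second application of McDiarmid replaces this by $2\hat{\mathfrak R}_h(\mathcal Q_\Pi)$ at a cost of $2H\sqrt{\log(2/\delta')/(2T)}$. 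Together the two concentration steps contribute at most $3H\sqrt{\log(2/\delta')/(2T)}$. Taking $\delta'=\delta/(2H)$ and a union bound over the $2H$ events (two per step $h$) gives $3H\sqrt{\log(4H/\delta)/(2T)}$ per $h$ simultaneously for all $h$. Summing over $h$ gives $3H^2\sqrt{\log(4H/\delta)/(2T)}$ plus $2\sum_h\hat{\mathfrak R}_h(\mathcal Q_\Pi)$.

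Adding the policy-drift and sampling contributions over $h\in[H]$ gives exactly the bound in Theorem~\ref{mthm:multi-source-rl}. Assumption~\ref{mass:lipschitz-env} is not needed, apart from guaranteeing that the state distributions involved are well defined.
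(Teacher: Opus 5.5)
Your split into a policy-drift term and a sampling term matches the paper's. Your sampling-term argument also matches the paper's in essence: symmetrisation for independent, non-identical samples; McDiarmid with differences $H/T$; a second McDiarmid to pass from $\mathfrak R_h$ to $\hat{\mathfrak R}_h(\mathcal Q_\Pi)$; and a union bound over $h$.

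\textbf{The drift term.} This is where your route genuinely differs. The paper follows the training path. It assumes $\pi_1$ is uniform, a hypothesis stated only in its auxiliary path-length lemma. Using Pinsker and Assumption~\ref{mass:klpolicy}, it bounds $d_\Pi(\hat\pi,\pi_t)\le d_\Pi(\hat\pi,\pi_1)+\sum_{i<t}d_\Pi(\pi_{i+1},\pi_i)\le\sqrt{\log|\mathcal A|/2}+(t-1)\sqrt{\alpha/2}$. Averaging over $t$ gives $\sqrt{\log|\mathcal A|/2}+\tfrac{T}{2}\sqrt{\alpha/2}$, and it then fixes $\alpha=4\log|\mathcal A|/T^2$ so the two pieces coincide.

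You instead send both endpoints to the uniform distribution $u$. The step you flag as the main obstacle is in fact unconditional: for every distribution $\mu$ on a finite $\mathcal A$, $\mathrm{KL}(\mu\,\|\,u)=\log|\mathcal A|+\sum_a\mu(a)\log\mu(a)\le\log|\mathcal A|$. The paper uses exactly this identity for its $\hat\pi$-to-$\pi_1$ leg. So you need no near-uniformity or entropy regularisation, and you need not identify $\hat\pi$ with $\pi_T$.

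Your route therefore needs only Assumption~\ref{mass:lipschitz-policy} for this term: no KL assumption, no uniform $\pi_1$, and no $T$-dependent tuning of $\alpha$. It also shows the term is a worst-case constant unrelated to the learning dynamics; indeed $d_\Pi\le 1<\sqrt{2\log 2}$ already gives it whenever $|\mathcal A|\ge 2$. What the paper's route buys is an explicit $\alpha$-dependence, $L_\Pi\bigl(\sqrt{\log|\mathcal A|/2}+\tfrac{T}{2}\sqrt{\alpha/2}\bigr)$. This beats your constant when the per-step KL budget is below the tuned value.

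\textbf{A small repair in the sampling term.} If you run McDiarmid and symmetrisation on $\sup_f|\cdot|$, symmetrisation bounds the expectation by $2\,\mathbb E\sup_f\bigl|\frac1T\sum_t\sigma^t f(s_h^t)\bigr|$. That is not the signed Rademacher complexity used in the statement. For a class consisting of one non-constant function, the signed complexity is $0$ while the expected absolute deviation is positive. The fix, as in the paper, is to handle $\sup_f(\cdot)$ and $\sup_f(-\cdot)$ separately, use $\hat{\mathfrak R}(-\mathcal Q_\Pi,\mathbf s_h^{1:T})=\hat{\mathfrak R}(\mathcal Q_\Pi,\mathbf s_h^{1:T})$, and union-bound the two directions. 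Your $\log(2/\delta')$ budget already absorbs the extra factor.
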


%\begin{remark}
%The empirical Rademacher complexity $\hat{\mathfrak R}(\mathcal Q_\Pi)$ is often bounded as $\tilde O(m/T)$ for some complexity parameter $m$. However, this bound may be loose when $m$ is large. Sharp bounds can be obtained through localisation techniques \cite{bartlett2002localized} or algorithm-dependent Rademacher complexity \cite{sachs23a}.
%\end{remark}

% \bry{remark: explain why to choose the state within the time step and its bayes version}

This bound depends on the empirical Rademacher complexity $\sum_{h=1}^H\hat{\mathfrak{R}}_h(\mathcal Q_\Pi)$, which measures the capacity of the value function class under finite-sample training. The term $L_\Pi \sqrt{2\log|\mathcal A|}$ arises from policy shift between the initial uniform policy and the fixed policy $\hat{\pi}$, which scales with the logarithm of the action space cardinality $|\mathcal A|$. The remaining term is a concentration term that decays at a rate $O(T^{-1/2})$, as the number of training episodes increases.

A detailed proof is provided in Appendix~\ref{addsec:ige}.
{
\paragraph{Proof Sketch}
 We decompose the intrinsic rational gap %at time step $h \in [H]$ 
 into two terms:

\begin{align*}
&\sup_{\pi \in \Pi}\left|\mathbb{E}_{s_h\sim\mathcal D_h^{\hat{\pi}}} Q_h^{*}(s_h,a_h^\pi)
-
\frac{1}{T}\sum_{t=1}^T Q_h^{*}(s^t_h,a_h^\pi)\right| \\
&\le
\underbrace{
\sup_{\pi \in \Pi} \left|\mathbb{E}_{s_h\sim\mathcal D_h^{\hat{\pi}}} Q_h^{*}(s_h,a_h^\pi)
-
\frac{1}{T}\sum_{t=1}^T \mathbb{E}_{s_h^t \sim \mathcal D_{h}^{\pi_t}}Q_h^{*}(s_h^t,a_h^\pi)\right|}
_{\text{I}}\\
&\hspace{0.5cm}+\underbrace{
\sup_{\pi \in \Pi} \left|\frac{1}{T}\sum_{t=1}^T \left[\mathbb{E}_{s_h^t\sim\mathcal D_{h}^{\pi_t}} Q_h^{*}(s_h^t,a_h^\pi)
-
 Q_h^{*}(s_h^t,a_h^\pi)\right]\right|}
_{\text{II}}.
\end{align*}

{Term~I can be bounded by %\(\sum_{t=2}^{T} \sum^{t-1}_{i=1} d_\Pi(\pi_{i+1},\pi_i)/T \leq \sqrt{T^2 \alpha/8}\). Term I can be bounded in 
the following Lemma~\ref{mlem:policy_bound}.} 

% {Term~I is calculated from the policy drift between the state distribution $\mathcal D_h^{\hat{\pi}}$ induced by the optimal policy $\pi^*$ and the state distributions in training $\{\mathcal D_{h}^{\pi_t}\}_{t=1}^T$. By Assumption~\ref{mass:lipschitz-policy}, 
% we obtain %\(\sum_{t=2}^{T} \sum^{t-1}_{i=1} d_\Pi(\pi_{i+1},\pi_i)/T \leq \sqrt{T^2 \alpha/8}\). Term I can be bounded in 
% the following Lemma~\ref{mlem:policy_bound}.} 

\begin{lemma}[policy drift bound]
\label{mlem:policy_bound}
Under Assumptions~\ref{mass:lipschitz-policy} and~\ref{mass:klpolicy}, let $\mathcal A$ be a finite action space and $\pi \in \Pi$ be a policy. Set parameter $\alpha= 4\log{|\mathcal A|}/T^2$. At time step $h\in [H]$ over $T$ episodes, we have this policy drift bound,
\begin{align*}
    &\sup_{\pi \in \Pi} \left|\mathbb{E}_{s_h\sim\mathcal D_h^{\hat{\pi}}} Q_h^{*}(s_h,a_h^\pi)
    -
    \frac{1}{T}\sum_{t=1}^T \mathbb{E}_{s_h^t \sim \mathcal D_{h}^{\pi_t}}Q_h^{*}(s_h^t,a_h^\pi)\right|\\
    &\le
     L_\Pi \sqrt{2\log|\mathcal A|}.
  \end{align*}
\end{lemma}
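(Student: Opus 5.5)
The plan is to reduce the left-hand side to an average of policy distances and then control these with the KL bound along the training sequence. Fix $\pi\in\Pi$ and write $f(s)=Q_h^*(s,a_h^\pi)\in\mathcal Q_\Pi$. By the triangle inequality,
\begin{align*}
\Bigl|\mathbb E_{\mathcal D_h^{\hat\pi}}[f]-\frac1T\sum_{t=1}^T\mathbb E_{\mathcal D_h^{\pi_t}}[f]\Bigr|
\le \frac1T\sum_{t=1}^T\bigl|\mathbb E_{\mathcal D_h^{\hat\pi}}[f]-\mathbb E_{\mathcal D_h^{\pi_t}}[f]\bigr|.
\end{align*}
Each summand is at most $\sup_{f\in\mathcal Q_\Pi}|\cdot|$, which is independent of $\pi$. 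So the outer supremum over $\pi$ is harmless, and Assumption~\ref{mass:lipschitz-policy} bounds the whole quantity by $\frac{L_\Pi}{T}\sum_{t=1}^T d_\Pi(\hat\pi,\pi_t)$.

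Next I identify $\hat\pi$ with the final iterate $\pi_T$; if instead $\hat\pi$ is a fixed target policy reached from the iterates, the same telescoping applies with one extra link. Using the triangle inequality for the TV metric (taken with $\sup_s$),
\begin{align*}
d_\Pi(\pi_T,\pi_t)\le\sum_{i=t}^{T-1}d_\Pi(\pi_{i+1},\pi_i).
\end{align*}
Pinsker's inequality, applied state-wise and then with a supremum over $s$, together with Assumption~\ref{mass:klpolicy}, gives $d_\Pi(\pi_{i+1},\pi_i)\le\sqrt{\alpha/2}$. Hence $d_\Pi(\pi_T,\pi_t)\le (T-t)\sqrt{\alpha/2}$.

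Averaging over $t$,
\begin{align*}
\frac1T\sum_{t=1}^T d_\Pi(\pi_T,\pi_t)\le\frac{\sqrt{\alpha/2}}{T}\sum_{t=1}^T(T-t)=\frac{T-1}{2}\sqrt{\alpha/2}\le T\sqrt{\alpha/2}.
\end{align*}
Substituting $\alpha=4\log|\mathcal A|/T^2$ gives $T\sqrt{\alpha/2}=\sqrt{2\log|\mathcal A|}$, which yields the claimed bound $L_\Pi\sqrt{2\log|\mathcal A|}$. The factor-of-$2$ slack from $(T-1)/2\le T$ is used only to match the stated constant.

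The main obstacle I expect is the accumulation of drift. The KL assumption only controls consecutive policies, so the distance between the final policy and early iterates grows linearly in $T-t$. The averaged drift is therefore of order $T\sqrt\alpha$, and it is bounded only because of the specific choice $\alpha\propto1/T^2$. I would be careful to justify that the supremum over $s$ commutes correctly with Pinsker's inequality. I would also check that interpreting $\hat\pi$ as the last iterate matches the paper's remark about the initial uniform policy. An alternative route, which produces the $\log|\mathcal A|$ term naturally, bounds $\mathrm{KL}(\pi_T\|\pi_1)\le\log|\mathcal A|$ for uniform $\pi_1$. If the telescoping version does not match the setup, I would use this route instead, applying Pinsker once to the accumulated drift.
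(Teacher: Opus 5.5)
Your reduction to $\frac{L_\Pi}{T}\sum_{t=1}^T d_\Pi(\hat\pi,\pi_t)$ matches the paper. The next step, however, proves only the special case $\hat\pi=\pi_T$. The lemma is meant for an arbitrary fixed $\hat\pi\in\Pi$: this is the deployed policy whose rational risk gap is bounded in Theorems~\ref{mthm:multi-source-rl} and~\ref{mthm:main_theory}, and nothing ties it to the training iterates. Assumption~\ref{mass:klpolicy} constrains only consecutive iterates, so telescoping along $\pi_t,\dots,\pi_T$ gives no control of $d_\Pi(\hat\pi,\pi_t)$. Your ``one extra link'' fallback would need a KL bound between $\hat\pi$ and $\pi_T$, which is not assumed. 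The half of the constant you discard via $(T-1)/2\le T$ is a symptom of this. In the paper, that half, $\sqrt{\log|\mathcal A|/2}$, is exactly what pays for the link from $\hat\pi$ into the sequence of iterates.

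The missing step is to anchor at the initial policy rather than the final one, and to apply this to $\hat\pi$, not to $\pi_T$ as in your sketched alternative. The paper's auxiliary path-length lemma assumes $\pi_1(\cdot\mid s)$ is uniform. Then for every $\hat\pi$,
\[
\mathrm{KL}\bigl(\hat\pi(\cdot\mid s)\,\|\,\pi_1(\cdot\mid s)\bigr)=\log|\mathcal A|+\sum_{a\in\mathcal A}\hat\pi(a\mid s)\log\hat\pi(a\mid s)\le\log|\mathcal A|,
\]
and Pinsker gives $d_\Pi(\hat\pi,\pi_1)\le\sqrt{\log|\mathcal A|/2}$. Combining this with forward telescoping from $\pi_1$ gives $d_\Pi(\hat\pi,\pi_t)\le\sqrt{\log|\mathcal A|/2}+(t-1)\sqrt{\alpha/2}$. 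Averaging over $t$ yields $\sqrt{\log|\mathcal A|/2}+\frac{T-1}{2}\sqrt{\alpha/2}\le 2\sqrt{\log|\mathcal A|/2}=\sqrt{2\log|\mathcal A|}$ for $\alpha=4\log|\mathcal A|/T^2$.

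There is also a simpler fix. Since $d_\Pi\le 1<\sqrt{2\log 2}\le\sqrt{2\log|\mathcal A|}$ whenever $|\mathcal A|\ge 2$, your first reduction already gives the bound for arbitrary $\hat\pi$. This needs neither Assumption~\ref{mass:klpolicy} nor a uniform $\pi_1$. For $|\mathcal A|=1$ all policies coincide, so both sides are zero.
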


% Term~II reflects the discrepancy between expected values under the policy-induced distributions $\mathcal D_{h}^{\pi_t}$ and their empirical counterparts in training. Under the assumption~\ref{mass:episode-independence}, it can be bounded by the following Lemma~\ref{mlem:rad-noniid-full}.

Then, we obtain the upper bound for Term~II. %in Lemma~\ref{mlem:rad-noniid-full}.

\begin{lemma}[on-average generalisation bound]
\label{mlem:rad-noniid-full}
Let $\mathbf s^{1:T}_h =\{s_h^1,\dots,s_h^T\}$ be independent
random variables with $s^t_h \sim \mathcal D_{h}^{\pi_t}$ on a space $\mathcal S$. Define the averaged state distribution
\(
    \bar{\mathcal D}_{h} \triangleq \frac{1}{T}\sum_{t=1}^T \mathcal D_{h}^{\pi_t},
\)
and the Rademacher complexity
\(
    \mathfrak{R}_h(\mathcal Q_\Pi)
\) of value function class $\mathcal Q_\Pi$. For any $\delta\in(0,1)$, with probability at least $1-\delta/2H$, we have: 
\begin{align*}
    &\sup_{\pi \in \Pi}\left[\mathbb E_{s_h\sim\bar{\mathcal{D}}_{h}}[Q_h^{*}(s_h,a_h^\pi)]
        -
        \frac{1}{T}\sum_{t=1}^T Q_h^{*}(s_h^t,a_h^\pi)\right]\\
        &\le
    2\mathfrak{R}_h(\mathcal Q_\Pi)
    +
    \sqrt{\frac{H^2\log(2H/\delta)}{2T}}.
\end{align*}
\end{lemma}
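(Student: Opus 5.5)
We prove Lemma~\ref{mlem:rad-noniid-full} by the standard symmetrisation argument combined with McDiarmid's bounded-differences inequality. Independence is enough; identical distribution is not needed.

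Fix $h$ and write $f_\pi(s)=Q_h^*(s,a_h^\pi)\in\mathcal Q_\Pi$. Define
\[
\Phi(\mathbf s_h^{1:T})\triangleq\sup_{\pi\in\Pi}\Bigl[\mathbb E_{\bar{\mathcal D}_h}f_\pi-\frac1T\sum_{t=1}^T f_\pi(s_h^t)\Bigr].
\]
By linearity, $\mathbb E_{\bar{\mathcal D}_h}f_\pi=\frac1T\sum_t\mathbb E_{s\sim\mathcal D_h^{\pi_t}}f_\pi(s)$. So $\Phi$ is the supremum of a centred empirical process over independent, non-identically distributed samples. The plan has two steps: (i) concentrate $\Phi$ around $\mathbb E\Phi$; (ii) bound $\mathbb E\Phi$ by $2\mathfrak R_h(\mathcal Q_\Pi)$.

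\textbf{Step (i): concentration.} Rewards lie in $[0,1]$ and the horizon is $H$, so $0\le Q_h^*\le H-h+1\le H$ and every $f\in\mathcal Q_\Pi$ takes values in $[0,H]$. Replace a single coordinate $s_h^t$ by an independent copy from the same $\mathcal D_h^{\pi_t}$. This changes each bracket by at most $H/T$, hence changes $\Phi$ by at most $H/T$. McDiarmid's inequality holds for independent but non-identical coordinates, so
\[
\Pr\bigl(\Phi-\mathbb E\Phi\ge\epsilon\bigr)\le\exp\bigl(-2T\epsilon^2/H^2\bigr).
\]
Setting the right-hand side equal to $\delta/(2H)$ gives $\epsilon=\sqrt{H^2\log(2H/\delta)/(2T)}$, which is exactly the second term of the lemma. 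The confidence level $1-\delta/2H$ is chosen so that a later union bound over $h\in[H]$ (together with the one-sided counterpart) yields the $\log(4H/\delta)$ appearing in Theorem~\ref{mthm:multi-source-rl}.

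\textbf{Step (ii): symmetrisation.} Introduce a ghost sample $\tilde s_h^t\sim\mathcal D_h^{\pi_t}$, independent of everything else, with matching index $t$. Then $\mathbb E_{\bar{\mathcal D}_h}f=\mathbb E\bigl[\frac1T\sum_t f(\tilde s_h^t)\bigr]$. Jensen's inequality lets us pull the supremum outside the expectation:
\[
\mathbb E\Phi\le\mathbb E\sup_f\frac1T\sum_t\bigl(f(\tilde s_h^t)-f(s_h^t)\bigr).
\]
For each $t$, the pair $(s_h^t,\tilde s_h^t)$ is exchangeable because both coordinates have the same law $\mathcal D_h^{\pi_t}$. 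This is the one point where we must pair the ghost sample index-by-index, and it is the main thing to get right. Exchangeability lets us insert Rademacher signs $\sigma^t$ without changing the distribution. Splitting the supremum then gives $\mathbb E\Phi\le 2\,\mathbb E\,\hat{\mathfrak R}(\mathcal Q_\Pi,\mathbf s_h^{1:T})$.

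We take the right-hand side as the definition of $\mathfrak R_h(\mathcal Q_\Pi)$ in the non-i.i.d.\ setting (Appendix~\ref{addsec:ige}), namely the expectation of the empirical Rademacher complexity over the product measure $\prod_t\mathcal D_h^{\pi_t}$. Combining (i) and (ii) gives the stated bound.

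\textbf{Main obstacle.} The conceptual subtlety is that the $\pi_t$ are produced by learning, so the $s_h^t$ are not truly independent. We do not resolve this; we rely on Assumption~\ref{mass:episode-independence}, which posits that the $s_h^t$ are independent. Given that assumption, both McDiarmid's inequality and the coordinatewise symmetrisation go through without modification.
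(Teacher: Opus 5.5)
Your proposal is correct and follows essentially the same route as the paper. Both arguments use an index-matched ghost sample $\tilde s_h^t\sim\mathcal D_h^{\pi_t}$, apply Jensen, insert Rademacher signs to get $\mathbb E\Phi\le 2\mathfrak R_h(\mathcal Q_\Pi)$, and then apply McDiarmid with $c_t=H/T$ at level $\delta/2H$, giving $\sqrt{H^2\log(2H/\delta)/(2T)}$.

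One small precision: McDiarmid's bounded-differences condition must hold for an arbitrary replacement value of $s_h^t$, not just an independent copy. Your $H/T$ bound already holds uniformly, since every $f\in\mathcal Q_\Pi$ takes values in $[0,H]$.
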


Combining the two lemmas over a trajectory of horizon $H$, we prove Theorem \ref{mthm:multi-source-rl}. %$ \text{IRG}\leq 

\subsection{Main Result}

We now obtain the main theorem on the rational risk gap bound directly from the two previous subsections.}

{
\begin{theorem}[rational risk gap bound]
\label{mthm:main_theory}
Under then same conditions of Theorems \ref{mthm:bias-unified} and \ref{mthm:multi-source-rl}, %Suppose the value function class $\mathcal{Q}_\Pi$ with a sequence of states $\mathbf s_h^{1:T}=\{s_h^t\}_{t=1}^T$ at time step $h\in [H]$ has empirical Rademacher complexity $\hat{\mathfrak{R}}_h(\mathcal{Q}_\Pi)$. 
for %any policy $\pi \in \Pi$ and 
any $\delta\in(0,1)$, with probability at least $1-\delta$, the rational risk gap of policy $\hat{\pi} \in \Pi$ over $T$ episodes of horizon $H$ can be bounded by:
\begin{align*}
&\bigl|\mathcal R(\hat{\pi})-\hat{\mathcal{R}}(\hat{\pi})\bigr|\leq
\beta_1\cdot W_1(p_0^\dagger,p_0)+\beta_2\cdot W_1(p^\dagger, p)\\
&+2L_\Pi H \sqrt{2\log|\mathcal A|}
+4\sum_{h=1}^H\hat{\mathfrak{R}}_h(\mathcal{Q}_\Pi)+6H^2\sqrt{\frac{\log(4H/\delta)}{2T}},
\end{align*}
where $\beta_1=2L_sH$ and $\beta_2=2H^2L_s(L_p+1)$. %; the $1$-Wasserstein distances $W_1(p^\dagger_0,p_0)$ and $W_1(p^\dagger,p)$, and Lipschitz constants $L_s$, $L_p$, and $L_\Pi$ are given above. %are given in Assumption~\ref{mass:lipschitz-val},~\ref{mass:lipschitz-env}, and \ref{mass:lipschitz-policy}. %is the Lipschitz constant of the mapping from a policy $\pi$ to its induced state distribution, and $|\mathcal A|$ denotes the cardinality of the action space.
\end{theorem}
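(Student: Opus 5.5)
The bound follows by chaining three earlier results. Lemma~\ref{lem:decomp-etrvl-atrvl} splits the gap, Theorem~\ref{mthm:bias-unified} handles the extrinsic part deterministically, and Theorem~\ref{mthm:multi-source-rl} handles the intrinsic part on a single high-probability event.

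First, apply Lemma~\ref{lem:decomp-etrvl-atrvl} to the fixed policy $\hat{\pi}$. This bounds $\bigl|\mathcal R(\hat{\pi})-\hat{\mathcal{R}}(\hat{\pi})\bigr|$ by twice the summed extrinsic rational gap plus twice the summed intrinsic rational gap. The lemma is a deterministic inequality: it uses only the triangle inequality and the fact that the perfectly rational term and the policy term are each controlled by a supremum over $\pi\in\Pi$. It therefore holds on every realisation of the training states $\mathbf s_h^{1:T}$.

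Second, bound the extrinsic sum by Theorem~\ref{mthm:bias-unified}, which needs only Assumptions~\ref{mass:lipschitz-val}--\ref{mass:lipschitz-env} and is deterministic. Multiplying its right-hand side by $2$ gives $2L_sH\,W_1(p_0^\dagger,p_0)+2H^2L_s(L_p+1)\,W_1(p^\dagger,p)$. This identifies $\beta_1=2L_sH$ and $\beta_2=2H^2L_s(L_p+1)$.

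Third, invoke Theorem~\ref{mthm:multi-source-rl} with the same $\delta$. There is an event $E$ of probability at least $1-\delta$ on which the summed intrinsic gap is at most $L_\Pi H\sqrt{2\log|\mathcal A|}+2\sum_h\hat{\mathfrak R}_h(\mathcal Q_\Pi)+3H^2\sqrt{\log(4H/\delta)/(2T)}$. Doubling this yields the last three terms of the claimed bound, with constants $2$, $4$ and $6$.

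The only point needing care is the probability accounting. We must check that no further union bound is required. The steps above show this: the first two steps hold surely, so the sole random event is $E$, and on $E$ all inequalities hold simultaneously. The confidence level therefore stays $1-\delta$, with no splitting of $\delta$ and no change to the $\log(4H/\delta)$ factor. It also matters that the decomposition in Lemma~\ref{lem:decomp-etrvl-atrvl} and the suprema in both theorems are stated for the same policy $\hat{\pi}$ and the same state sample, so the pieces combine directly. Summing the bounds from the second and third steps completes the proof.
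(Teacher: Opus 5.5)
Your proposal is correct and follows the paper's route: the paper obtains this theorem directly by inserting the deterministic bound of Theorem~\ref{mthm:bias-unified} and the high-probability bound of Theorem~\ref{mthm:multi-source-rl} into the factor-$2$ decomposition of Lemma~\ref{lem:decomp-etrvl-atrvl}, which yields exactly your constants. Your probability accounting is also right: the only random event comes from Theorem~\ref{mthm:multi-source-rl}, so the confidence stays at $1-\delta$ and the $\log(4H/\delta)$ factor is unchanged.
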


{
The empirical rational risk defined in Definition~\ref{def:atrvl} requires access to the optimal action-value function $Q^*_h$, which might be unavailable in practice. To address this limitation, we extend the definition to a more general form as follows, which offers an empirical metric. %as rational value metric based on the algorithm by the following Definition~\ref{def:exemp}.
\begin{definition}[rational value metric]
\label{def:exemp}
Let $\hat{Q}^T_h$ be an approximate action-value function of any algorithm after $T$ episodes and horizon $h$. %empirical rational value risk of policy $\pi$ in Definition~\ref{def:atrvl} can be extended as the average over $T$ episodes:
Its rational value metric is defined as follows,
\[
\hat{\mathcal{R}}_{\mathrm{ALG}}(\pi)
\triangleq
\frac{1}{T}\sum_{t=1}^T \sum_{h=1}^H
\left[\max_{a \in\mathcal{A}}\hat{Q}_h^T(s_h^t,a)-\hat{Q}_h^{T}(s_h^t,a^\pi_h)
\right].
\]
\end{definition}

Correspondingly, Theorem~\ref{mthm:main_theory} leads to the following result. A detailed proof is provided in Appendix \ref{sec:cor}.
% \begin{corollary}[extension of rational value risk bound]
% \label{cor:extension}
% Suppose the empirical rational value risk measured by any empirical action-value function $\hat{Q}^T_h$, the rational risk gap $\left|\mathcal{R}(\pi)-\hat{\mathcal{R}}(\pi)\right|$ of policy $\pi \in \Pi$ over a trajectory of horizon $H$ can be decomposed as follows,
% \begin{align*}
% &\left|\mathcal{R}(\pi)-\hat{\mathcal{R}}_{\mathrm{ALG}}(\pi)\right|\leq \left|\mathcal{R}(\pi)-\hat{\mathcal{R}}(\pi)\right|+\\
% &
% 2 \sum_{h=1}^H\underbrace{
% \sup_{\pi \in \Pi}
% \left|
% \frac{1}{T}\sum_{t=1}^T \left[Q_h^{*}(s^t_h,a_h^\pi)-\hat{Q}_h^{T}(s^t_h,a_h^\pi)\right]
% \right|
% }_{\text{approximation error}},
% \end{align*}
% \end{corollary}}
% \bry{
% As a case study, we apply this extended framework to analyse the rational value risk bound of Deep Q-Network (DQN) with two-layer ReLU neural networks trivially adopting the setting in~\citet{liu2022understanding}.
\begin{corollary}[rational value metric bound]
\label{mlem:exdqn}
Assuming approximate value function $\hat{Q}_h^T$ approximates optimal value function $Q_h^*$ with a bounded error, $\|Q_h^*-\hat{Q}_h^T\|_{\infty}\leq \epsilon$ for any $h$. Then, for any $\delta \in (0,1)$, with probability at least $1-\delta$, we have %for policy $\hat{\pi} \in \Pi$ over $T$ episodes of horizon $H$ %can be bounded by:
\begin{align*}
&\bigl|\mathcal R(\hat{\pi})-\hat{\mathcal{R}}_{\mathrm{ALG}}(\hat{\pi})\bigr|\leq
\beta_1\cdot W_1(p_0^\dagger,p_0)+\beta_2\cdot W_1(p^\dagger, p)\\&
+4\sum_{h=1}^H\hat{\mathfrak{R}}_h(\mathcal{Q}_\Pi) + 6H^2\sqrt{\frac{\log(4H/\delta)}{2T}}+
2 H\epsilon\\
&+2L_\Pi H \sqrt{2\log|\mathcal A|},
\end{align*}
where $\beta_1=2L_sH$ and $\beta_2=2H^2L_s(L_p+1)$.
\end{corollary}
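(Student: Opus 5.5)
The plan is to reduce the corollary to Theorem~\ref{mthm:main_theory} with a triangle inequality, and then to control the approximation error separately. First write
\begin{align*}
\bigl|\mathcal R(\hat{\pi})-\hat{\mathcal{R}}_{\mathrm{ALG}}(\hat{\pi})\bigr|
\le \bigl|\mathcal R(\hat{\pi})-\hat{\mathcal{R}}(\hat{\pi})\bigr| + \bigl|\hat{\mathcal{R}}(\hat{\pi})-\hat{\mathcal{R}}_{\mathrm{ALG}}(\hat{\pi})\bigr|.
\end{align*}
The first term is bounded by Theorem~\ref{mthm:main_theory}, which holds with probability at least $1-\delta$. That bound supplies exactly the terms $\beta_1 W_1(p_0^\dagger,p_0)+\beta_2 W_1(p^\dagger,p)+2L_\Pi H\sqrt{2\log|\mathcal A|}+4\sum_h\hat{\mathfrak R}_h(\mathcal Q_\Pi)+6H^2\sqrt{\log(4H/\delta)/(2T)}$. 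No further probabilistic event is needed, because the second term will be bounded deterministically. The whole task therefore reduces to showing that the second term is at most $2H\epsilon$.

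For the second term, expand both empirical risks as averages over $t$ and sums over $h$. The difference then becomes
\begin{align*}
\frac{1}{T}\sum_{t=1}^T\sum_{h=1}^H\Bigl[\bigl(Q_h^*(s_h^t,a_h^\circ)-\max_{a}\hat Q_h^T(s_h^t,a)\bigr)-\bigl(Q_h^*(s_h^t,a_h^{\hat\pi})-\hat Q_h^T(s_h^t,a_h^{\hat\pi})\bigr)\Bigr].
\end{align*}
The second bracket is bounded by $\epsilon$ in absolute value, directly from $\|Q_h^*-\hat Q_h^T\|_\infty\le\epsilon$.

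For the first bracket, I would interpret $Q_h^*(s,a_h^\circ)$ as the maximal value $\max_a Q_h^*(s,a)$, since the perfectly rational policy maximises the expected $Q$. With that reading, the elementary inequality $|\max_a f(a)-\max_a g(a)|\le\max_a|f(a)-g(a)|\le\epsilon$ gives a bound of $\epsilon$. Summing both brackets over $h$ and averaging over $t$ yields $2H\epsilon$, and combining this with the first term gives the stated bound.

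The main obstacle is the meaning of $a_h^\circ$. In the definition, the rational action maximises the deployment value $Q^{*,\dagger}$, whereas $\hat{\mathcal R}$ evaluates it with the training value $Q^*$. If $a_h^\circ$ is not a maximiser of $Q_h^*$, the max-versus-max comparison fails. To handle this, I would first try to read $\hat{\mathcal R}$ with $a_h^\circ$ as the training-environment rational action, consistent with the decomposition in Lemma~\ref{lem:decomp-etrvl-atrvl}, where only the $a_h^\pi$ terms are tracked. Under that reading $Q_h^*(s,a_h^\circ)=\max_aQ_h^*(s,a)$, and the argument above goes through. If that reading is not available, I would instead absorb the mismatch into the extrinsic gap terms already present in the bound.
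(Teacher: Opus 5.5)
Your skeleton matches the paper's proof: a triangle inequality through $\hat{\mathcal R}(\hat\pi)$, Theorem~\ref{mthm:main_theory} for the first term, and a deterministic $2H\epsilon$ for the second. You have also located the real difficulty, but the proposal does not resolve it.

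The reading $Q_h^*(s,a_h^\circ)=\max_aQ_h^*(s,a)$ is not available. By definition $a_h^\circ$ maximises $Q_h^{*,\dagger}$, not $Q_h^*$. Moreover, Lemma~\ref{lem:decomp-etrvl-atrvl} (and hence Theorem~\ref{mthm:main_theory}) needs the same $a_h^\circ$, i.e.\ the same $\pi^\circ\in\Pi$, in both $\mathcal R$ and $\hat{\mathcal R}$. Redefining it in $\hat{\mathcal R}$ alone would therefore invalidate your first step. Without that reading, your first bracket is bounded above by $\epsilon$ but below only by $-\epsilon-\bigl(\max_aQ_h^*(s,a)-Q_h^*(s,a_h^\circ)\bigr)$.

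The paper's proof hides the same issue. It inserts $Y=\frac1T\sum_t\sum_h[\max_a\hat Q_h^T(s_h^t,a)-\hat Q_h^T(s_h^t,a_h^\circ)]\ge0$ inside the absolute value and then compares $Q_h^*$ with $\hat Q_h^T$ at the common actions $a_h^\circ,a_h^{\hat\pi}$. But $|X|\le|X+Y|$ requires $X\ge -Y/2$. Taking $\hat Q_h^T=Q_h^*$, that step would force $Q_h^*(s_h^t,a_h^\circ)=\max_aQ_h^*(s_h^t,a)$, which is exactly the identity you doubted.

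Your fallback is the correct repair, but it has to be carried out, and Theorem~\ref{mthm:main_theory} cannot be used as a black box for it.
\begin{itemize}
\item Take $a_h^\circ$ to maximise $Q_h^{*,\dagger}(s,\cdot)$, as the definition intends.
\item Let $\Delta_h=(H-h)L_sW_1(p^\dagger,p)$ be the uniform bound $\sup_{s,a}|Q_h^{*,\dagger}-Q_h^*|\le\Delta_h$ established in the proof of Lemma~\ref{mlem:trans_error}.
\item Let $a'\in\arg\max_aQ_h^*(s,a)$.
\end{itemize}
Then
\[
0\le\max_aQ_h^*(s,a)-Q_h^*(s,a_h^\circ)\le Q_h^{*,\dagger}(s,a')-Q_h^{*,\dagger}(s,a_h^\circ)+2\Delta_h\le 2\Delta_h .
\]
Your two brackets then give $\bigl|\hat{\mathcal R}(\hat\pi)-\hat{\mathcal R}_{\mathrm{ALG}}(\hat\pi)\bigr|\le 2H\epsilon+H(H-1)L_sW_1(p^\dagger,p)$.

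This extra term fits into the stated $\beta_2$ only because of slack in Theorem~\ref{mthm:bias-unified}. Before being relaxed to $H^2L_s(L_p+1)W_1(p^\dagger,p)$, its proof actually yields $\frac{H(H-1)}{2}L_s(L_p+1)W_1(p^\dagger,p)$ for the transition part, from the sums $\sum_h(h-1)$ and $\sum_h(H-h)$. After doubling, this leaves slack $H(H+1)L_s(L_p+1)W_1(p^\dagger,p)\ge H(H-1)L_sW_1(p^\dagger,p)$ inside $\beta_2W_1(p^\dagger,p)$. With that, the stated bound follows on the same event of probability at least $1-\delta$.
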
}

{
\subsection{Rational Risk Gap Bound under Reward Shift}
In addition to environment shifts, the reward function may also change at deployment. We formalise this problem through the following assumption. 

\begin{assumption}[reward shift]
\label{ass:rsb}
    For any $\varphi \leq 1$, let $r_h(s,a)$ and $r'_h(s,a)$ denote the reward function in reference and training, respectively. We assume that for any $(s,a) \in \mathcal S \times \mathcal A$ and any $h \in H$, they satisfy
    $$|r_h(s,a) - r'_h(s,a)| \leq \varphi.$$
\end{assumption}

Under this assumption, the rational risk gap bound in Theorem~\ref{mthm:main_theory} can be extended to Corollary~\ref{cor:extension} in the reward shift setting. A detailed proof is presented in Appendix \ref{sec:cor}.

\begin{corollary}[rational risk gap bound under reward shift]
\label{mcor:extension}
Under Assumption~\ref{ass:rsb}, suppose $\mathcal{R}'(\pi)$ denotes the expected rational value risk under $r'_h(s,a)$, the rational risk gap $\left|\mathcal{R}'(\pi)-\hat{\mathcal{R}}(\pi)\right|$ of policy $\pi \in \Pi$ over a trajectory of horizon $H$ can be decomposed as follows,
\begin{align*}
&\bigl|\mathcal R'(\hat{\pi})-\hat{\mathcal{R}}(\hat{\pi})\bigr|\leq
\beta_1\cdot W_1(p_0^\dagger,p_0)+\beta_2\cdot W_1(p^\dagger, p)\\&
+4\sum_{h=1}^H\hat{\mathfrak{R}}_h(\mathcal{Q}_\Pi)+6H^2\sqrt{\frac{\log(4H/\delta)}{2T}}+
H(H+1)\varphi\\
&+2L_\Pi H \sqrt{2\log|\mathcal A|},
\end{align*}
where $\beta_1=2L_sH$ and $\beta_2=2H^2L_s(L_p+1)$.
\end{corollary}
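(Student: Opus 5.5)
The plan is a triangle inequality that isolates the reward shift, with \cref{mthm:main_theory} absorbing everything else. Let $Q_h^{*,\dagger}$ denote the optimal action-value in deployment under reward $r_h$, and $Q_h^{*,\dagger\prime}$ the one under $r'_h$; $\mathcal R'(\hat\pi)$ is built from $Q_h^{*,\dagger\prime}$, with the perfectly rational action $a_h^{\circ}$ defined relative to it. Since $\hat{\mathcal R}(\hat\pi)$ is unchanged, we start from
\[
\bigl|\mathcal R'(\hat\pi)-\hat{\mathcal R}(\hat\pi)\bigr|\le \bigl|\mathcal R'(\hat\pi)-\mathcal R(\hat\pi)\bigr|+\bigl|\mathcal R(\hat\pi)-\hat{\mathcal R}(\hat\pi)\bigr|.
\]
On the event of probability at least $1-\delta$ from \cref{mthm:main_theory}, the second term is bounded by everything in the claimed bound except $H(H+1)\varphi$. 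So it suffices to prove the deterministic inequality $|\mathcal R'(\hat\pi)-\mathcal R(\hat\pi)|\le H(H+1)\varphi$.

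\textbf{Step 1: sensitivity of the optimal $Q$-function to the reward.} We show $\sup_{s,a}|Q_h^{*,\dagger\prime}(s,a)-Q_h^{*,\dagger}(s,a)|\le (H-h+1)\varphi$ by backward induction on $h$, using the Bellman optimality recursion under the same kernel $p^\dagger$. The base case is $V_{H+1}\equiv 0$. For the inductive step, the reward difference contributes at most $\varphi$ by Assumption~\ref{ass:rsb}. The next-step term contributes at most $\|V_{h+1}^{*,\dagger\prime}-V_{h+1}^{*,\dagger}\|_\infty$, and since $|\max_a f-\max_a g|\le\max_a|f-g|$ this is at most $\|Q_{h+1}^{*,\dagger\prime}-Q_{h+1}^{*,\dagger}\|_\infty$.

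\textbf{Step 2: the per-step loss difference.} Both risks integrate over the same state distribution $\mathcal D_h^{\hat\pi,\dagger}$, because the state distribution depends only on $\hat\pi$, $p^\dagger$ and $p_0^\dagger$, not on the reward. Hence at each $h$ the difference of the integrands is
\[
\bigl[\mathbb E_{\pi^{\circ\prime}}Q'-\mathbb E_{\pi^{\circ}}Q\bigr]-\bigl[Q'(s,a^{\hat\pi}_h)-Q(s,a^{\hat\pi}_h)\bigr].
\]
The second bracket is at most $(H-h+1)\varphi$ in absolute value by Step 1. For the first bracket, $\pi^{\circ\prime}$ and $\pi^{\circ}$ are maximisers over $\Pi$ of $\mathbb E_\pi Q'$ and $\mathbb E_\pi Q$ respectively, so the difference of the suprema is again at most $\|Q'-Q\|_\infty\le (H-h+1)\varphi$. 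The per-step discrepancy is therefore at most $2(H-h+1)\varphi$.

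\textbf{Step 3: summing over the horizon.} Summing over $h$ gives
\[
\sum_{h=1}^H 2(H-h+1)\varphi=H(H+1)\varphi,
\]
which is exactly the claimed coefficient.

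The main obstacle is Step 2. One must interpret $a^\circ_h$ in $\mathcal R'$ as rational with respect to the shifted $Q$, and handle the two maxima by the sup-difference argument rather than comparing the maximising actions themselves. If instead $a^\circ_h$ were held fixed across the two rewards, the bound would drop to the same $2(H-h+1)\varphi$ per step directly, so the result is robust to either reading.
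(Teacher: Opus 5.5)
Your proposal is correct and follows essentially the same route as the paper. Both use a backward induction giving $\sup_{s,a}|Q_h^{\prime,*,\dagger}(s,a)-Q_h^{*,\dagger}(s,a)|\le (H-h+1)\varphi$ under the common kernel $p^\dagger$, take a factor of $2$ per step from the rational-action and policy-action terms over the shared $\mathcal D_h^{\hat\pi,\dagger}$, and sum to $H(H+1)\varphi$ on top of Theorem~\ref{mthm:main_theory}. Your inductive step via $|\max_a f-\max_a g|\le\max_a|f-g|$ is actually cleaner than the paper's, which writes both $V_{h+1}$'s as expectations under a single $\pi^*$ even though the optimal policies under $r$ and $r'$ can differ.
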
}
}

{
\subsection{Sim-to-Real Transfer Challenge}

Simulation is a common training ground for reinforcement learning, which often suffers from the \emph{reality gap}: a policy that performs excellently in a simulator can fail spectacularly in the real world, which differs in subtle but consequential ways. The mismatch can be reflected in distribution shifts in both observations and transition dynamics, and the learned policy may overfit to simulator-specific quirks rather than robust principles. This challenge is also referred to as the \emph{sim-to-real transfer} Challenge \citep{tobin2017domainrandomization, peng2018sim, andrychowicz2020learning}. 

Our theory provides a novel and powerful lens to study the sim-to-real transfer challenges. 
%The expected rational value risk of a learned policy $\pi$ refers to its deployment performance with transition kernel $p^\dagger$ and initial state distribution $p^\dagger_0$. 
Specifically, the extrinsic rational gap partially characterises this challenge and sheds light on how to mitigate it in terms of rationality. In the following section for experiments, we empirically study how environment shifts would negatively influence rationality. %, underpinned by between this policy's action and rational action $a^\circ$, when $\pi$ is learned to approximate $\pi^\circ$ from the training environment.

\subsection{Asymptotic Rationality}

%Combining Theorems~\ref{mthm:bias-unified}, and~\ref{mthm:multi-source-rl},
We also directly obtain the following corollary on the asymptotic property of rationality. %It follows the settings aforementioned. %asymptotic rationality result for the reinforcement learning agent in deployment.

\begin{corollary}[asymptotic rational risk gap bound]
\label{mcor:main_theory}
Under the same conditions of Theorems \ref{mthm:bias-unified} and \ref{mthm:multi-source-rl}, %Suppose the value function class $\mathcal{Q}_\Pi$ with a sequence of states $\mathbf s_h^{1:T}=\{s_h^t\}_{t=1}^T$ at time step $h\in [H]$ has empirical Rademacher complexity $\hat{\mathfrak{R}}_h(\mathcal{Q}_\Pi)$. 
for %any policy $\pi \in \Pi$ and 
any $\delta\in(0,1)$, with probability at least $1-\delta$, the rational risk gap of policy $\hat{\pi} \in \Pi$ over $T$ episodes of horizon $H$ can be bounded by:
\begin{align*}
&\lim_{T \to \infty} \left|\mathcal R(\hat{\pi})-\hat{\mathcal{R}}(\hat{\pi})\right|\leq
\beta_1\cdot W_1(p_0^\dagger,p_0)+\beta_2\cdot W_1(p^\dagger, p)\\&+2L_\Pi H \sqrt{2\log|\mathcal A|}
+4\sum_{h=1}^H\hat{\mathfrak{R}}_h(\mathcal{Q}_\Pi),
\end{align*}
where $\beta_1=2L_sH$ and $\beta_2=2H^2L_s(L_p+1)$. %; the $1$-Wasserstein distances $W_1(p^\dagger_0,p_0)$ and $W_1(p^\dagger,p)$, and Lipschitz constants $L_s$, $L_p$, and $L_\Pi$ are given above. %are given in Assumption~\ref{mass:lipschitz-val},~\ref{mass:lipschitz-env}, and \ref{mass:lipschitz-policy}. %is the Lipschitz constant of the mapping from a policy $\pi$ to its induced state distribution, and $|\mathcal A|$ denotes the cardinality of the action space.
\end{corollary}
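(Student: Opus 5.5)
The plan is to take the limit $T\to\infty$ directly in the high-probability bound of Theorem~\ref{mthm:main_theory}. For every fixed $T$ and every $\delta\in(0,1)$, with probability at least $1-\delta$,
\[
\bigl|\mathcal R(\hat{\pi})-\hat{\mathcal R}(\hat{\pi})\bigr| \le C + 4\sum_{h=1}^H\hat{\mathfrak R}_h(\mathcal Q_\Pi) + 6H^2\sqrt{\tfrac{\log(4H/\delta)}{2T}},
\]
where $C=\beta_1 W_1(p_0^\dagger,p_0)+\beta_2 W_1(p^\dagger,p)+2L_\Pi H\sqrt{2\log|\mathcal A|}$. The quantity $C$ does not depend on $T$.

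The key steps, in order, are as follows.

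First, I isolate the $T$-dependence. The Wasserstein terms, the Lipschitz constants and the policy-drift term $2L_\Pi H\sqrt{2\log|\mathcal A|}$ are independent of $T$. This independence holds because Lemma~\ref{mlem:policy_bound} produces a $T$-free bound through the choice $\alpha=4\log|\mathcal A|/T^2$. The concentration term equals $6H^2\sqrt{\log(4H/\delta)/(2T)}$. For fixed $\delta$ and $H$ it is $O(T^{-1/2})$, so it tends to $0$.

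Second, I pass to the limit. Taking $\limsup_{T\to\infty}$ on both sides of the inequality (read as the limit in the statement, whenever it exists), the concentration term vanishes. The remaining right-hand side is the claimed bound, with $\hat{\mathfrak R}_h(\mathcal Q_\Pi)$ interpreted as its limiting value.

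Third, I make the Rademacher term well defined. The cleanest route is to use Lemma~\ref{mlem:rad-noniid-full}, which is stated with the (expected) Rademacher complexity, and keep $\mathfrak R_h$ there. Alternatively, one can note that for a uniformly bounded class, $\hat{\mathfrak R}_h$ concentrates around $\mathfrak R_h$ at rate $O(H/\sqrt T)$ by McDiarmid's inequality. Then $\hat{\mathfrak R}_h$ and $\mathfrak R_h$ have the same limit, which justifies writing the empirical complexity in the limit.

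The main obstacle is the probabilistic quantifier. The event of probability $1-\delta$ depends on $T$, so a limit along a sequence of events each of probability $1-\delta$ is not automatically a probability-$(1-\delta)$ statement. I would handle this in one of two ways:

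1. Take $\limsup$ only along the realised sequence. Apply Theorem~\ref{mthm:main_theory} with $\delta_T=\delta/(2T^2)$ and use a union bound over $T$, so that a single event of probability at least $1-\delta$ makes the bound hold for all $T$ simultaneously. The concentration term becomes $6H^2\sqrt{\log(8HT^2/\delta)/(2T)}$, which still tends to $0$.
2. If the limit is interpreted deterministically, substitute $\mathfrak R_h$ for $\hat{\mathfrak R}_h$ in the bound.

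I would present the union-bound version, since it gives the corollary exactly as stated.
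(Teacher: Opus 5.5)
Your proposal is correct and follows the same route as the paper. The paper gives no separate argument beyond letting $T\to\infty$ in Theorem~\ref{mthm:main_theory}, so that the $O(T^{-1/2})$ term $6H^2\sqrt{\log(4H/\delta)/(2T)}$ vanishes. Your union bound over $T$ with $\delta_T=\delta/(2T^2)$ (so $\sum_T\delta_T\le\delta$), together with reading the $T$-dependent $\hat{\mathfrak R}_h(\mathcal Q_\Pi)$ as a $\limsup$, is a legitimate extra step: it makes the ``with probability at least $1-\delta$'' claim about a limit rigorous, which the paper glosses over.
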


{

}

\section{Experiments}

We conduct experiments to % measure the rational risk gap of reinforcement learning algorithms and 
empirically verify our measures and theoretical analysis. %The code is in the supplementary materials and will be released.

\subsection{Empirically Testable Hypotheses}

A good theory can explain and suggest empirically testable hypotheses \citep{lakatos1968criticism, popper2005logic}. Our theory leads to the following hypotheses.

\paragraph{H1: Benefits of Regularisations} \label{h1}
Regularisers, such as layer normalisation (LN) \cite{ba2016layer}, $\ell_2$ regularisation (L2), and weight normalisation (WN) \cite{salimans2016weight}, can penalise hypothesis complexity \cite{awasthi2020rad}. As suggested in Theorem~\ref{mthm:multi-source-rl}, %when the training time (i.e., the episode number $T$) increases, the dominant terms in rational risk gap bounds are dominated by the 
the reduced hypothesis complexity (measured here by the empirical Rademacher complexity), which indicates a smaller rational risk gap, corresponding to an improved rationality. % $\sum_{h=1}^H\hat{\mathfrak{R}}_h(\mathcal Q_\Pi)$.

%Motivated by these results, our first experiment tests the hypothesis that increased model complexity leads to large rational risk gap, and that penalisation techniques, such as regularisation, can effectively reduce this gap.

\paragraph{H2: Benefits of Domain Randomisation} \label{h2}
{{Domain randomisation (DR) is an augmentation technique that randomises parameters of the environment during training \citep{tobin2017domainrandomization}. It is supposed to improve the robustness of reinforcement learning algorithms against distribution shifts across environments. As suggested in Theorem~\ref{mthm:bias-unified}, this further improves the rationality. }

\begin{figure}[t]
    \centering
    \includegraphics[width=\linewidth]{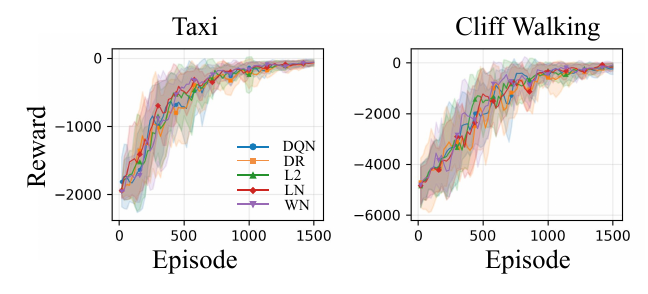}
    \caption{Reward curves of DQN under different regularisation and domain randomisation techniques in Taxi-v3 and Cliff Walking environments.}
    \label{mfig:reward}
\end{figure}

\paragraph{H3: Deficits of Environment Shifts} \label{h3}
Theorem~\ref{mthm:bias-unified} suggests that environment shifts enlarge the rational risk gap, %is influenced by the distribution shift between the training and inference environments, 
as quantified by the $1$-Wasserstein distance between transition kernels $W_1(p,p^\dagger)$ and initial state distributions $W_1(p_0,p_0^\dagger)$. Consequently, this means larger environment shifts lead to worse rationality. 

% In addition, we investigate the role of the learning rate, which is known to act as an implicit regulariser in reinforcement learning and affects optimisation stability \cite{amos2017,fengxiang2019}.

\subsection{Implementation Details}
\label{sec:imp}
We present major implementation details below. Full details are given in Appendix~\ref{app:A}. Code is available at \href{https://github.com/EVIEHub/Rationality}{https://github.com/EVIEHub/Rationality}.

\paragraph{Environment Setups}
Two popular Gym environments are employed in our experiments: Taxi-v3 \citep{Dietterich2000} and Cliff Walking \citep{sutton18}. We modify their environment dynamics to create two distinguished, training and inference settings. For the training environment, we choose the action randomisation rate from $0\%$ to $70\%$, whereby the environment may override the agent’s learned action with a random action. The inference environment takes the original environment without randomisation. Agents are trained under a non-zero probability of action randomisation, and then evaluated in the inference environment. In this way, we simulate distribution shifts between the training and inference environments.
% Table~\ref{tab:procgen_envs} summarises the environments and their observation and action space dimensions.

% \begin{figure*}[t!]
%   \centering
%   \includegraphics[width=\textwidth]{img/rationality-1.pdf}
%   \caption{Rational risk gap of DQN under different regularisation and domain randomisation techniques in Taxi-v3 and Cliff Walking environments. We compare vanilla DQN with Layer Normalisation (LN), $\ell_2$ Regularisation (L2), Weight Normalisation (WN), and Domain Randomisation (DR). The mean and standard deviation are shown across 5 independent runs.}
%   \label{fig:six_figures_reg}
% \end{figure*}

\begin{figure*}[t!]
  \centering
  \begin{subfigure}[t]{0.47\textwidth}
    \centering
    \includegraphics[width=\linewidth]{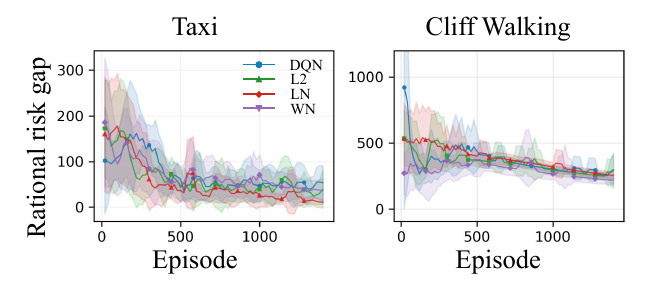}
    \caption{Regularisation}
    \label{mfig:reg}
  \end{subfigure}
  \hfill
  \begin{subfigure}[t]{0.47\textwidth}
    \centering
    \includegraphics[width=\linewidth]{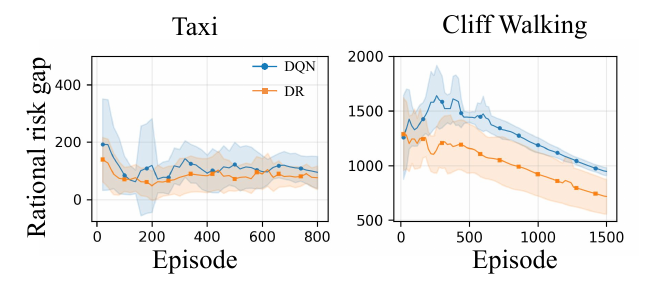}
    \caption{Domain randomisation}
    \label{mfig:dr}
  \end{subfigure}
  \caption{Rational risk gap of DQN under different regularisation and domain randomisation techniques in Taxi-v3 and Cliff Walking environments. }
  \label{mfig:six_figures_reg}
\end{figure*}

% \begin{figure*}[t!]
%   \centering
%   \includegraphics[width=\textwidth]{img/rationality_rl-2.pdf}
%   \caption{Rational risk gap of PPO under different data augmentation techniques across six Procgen environments.
% We compare vanilla PPO with random cropping (Crop) and cutout (Cutout). The mean and standard deviation are shown across 3 independent runs.}
%   \label{fig:six_figures_aug}
% \end{figure*}

\begin{figure}[t]
  \centering
  \includegraphics[width=\linewidth]{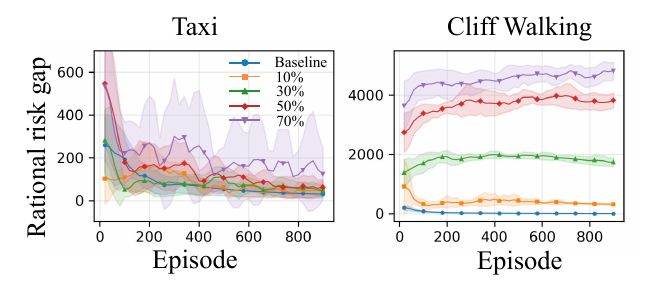}
  \caption{Rational risk gap of DQN across different environment levels in Taxi-v3 and Cliff Walking environments.}% We evaluate DQN under increasing challenge levels of training environments (0\%, 10\%, 30\%, 50\%, 70\%), presenting the probability of action randomisation during training. }
  \label{mfig:six_figures_level}
\end{figure}

\paragraph{Reinforcement Learning Algorithm} We employ a typical reinforcement learning algorithm, Deep Q-Network (DQN) \cite{mnih2013playing} with softmax action selection, in our experiments. 

%\paragraph{Regularisation} We consider several regularisation techniques in this paper, including layer normalisation \cite{ba2016layer}, $\ell_2$ regularisation, and weight normalisation \citep{salimans2016weight}. %, and domain randomisation \citep{wang2020improving, tobin2017domainrandomization}. 

\paragraph{Training Implementations} Agents are trained on a finite number of challenge levels, with the probabilities of executing a random action chosen from $\{0\%, 10\%, 30\%, 50\%, 70\%\}$. 
All results are averaged over five independent runs, with standard deviations reported as shaded regions.

{\paragraph{Experiment Design} For verifying Hypotheses {H1} and {H2}, agents are trained in both environments with challenge level of {10\%} and evaluated in the original environments. We repeat experiments with {five} random seeds and compare rational risk gaps across methods. For Hypothesis {H3}, we fix the DQN's hyperparameters and vary challenge levels in \{0\%, 10\%, 30\%, 50\%, 70\%\}, constructing different transition kernels. We repeat experiments and measure rational risk gaps in original environments. }

%\paragraph{Justification} Our experiment settings ensure that the actual value functions are accessible, enabling rigorous empirical verification of our rationality measures and theory. A more detailed justification is in Appendix \ref{app:B}.

%\paragraph{Reproducibility} The code is available at \href{https://github.com/EVIEHub/Rationality}{https://github.com/EVIEHub/Rationality}.

%\paragraph{Equipment} All experiments are conducted on three A100 GPUs (48GB memory). We use the OpenAI Gym library.

\subsection{Experimental Results}

{All setups run reasonably well in terms of reward, as shown in Figure \ref{mfig:reward}.} This ensures that our experiments are for rationality, controlling irrelevant variables.

\paragraph{H1: Regularisation}
Figure~\ref{mfig:reg} illustrates the benefits of regularisation on DQN across the considered environments. In both environments, %rational risk gap decreases monotonically with the number of episodes: 
$\ell_2$ regularisation consistently reduces rational risk gap; layer normalisation provides a stronger control in Taxi-v3 environment; and weight normalisation is more effective in the Cliff Walking environment compared to vanilla DQN. {Table~\ref{mtab:results} indicates $10^{-3}$ is the most suitable $\ell_2$ regularisation strength for minimising the rational risk gap in both environments. }

% \bry{
% \begin{table}[h]
% \caption{Your caption here.}
% \centering
% \begin{tabular}{lcc}
% \toprule
% \textbf{Variable} & \textbf{Taxi} & \textbf{Cliff Walking} \\
% \midrule
% DQN       & $52.6 \pm 13.2$ & $934.9 \pm 934.9$ \\
% l2\_1e-3  & $36.9 \pm 1.10$ & $863.8 \pm 56.9$  \\
% l2\_1e-5  & $47.2 \pm 7.8$  & $855.5 \pm 72.8$  \\
% l2\_1e-7  & $42.9 \pm 10.3$ & $878.2 \pm 14.9$  \\
% \bottomrule
% \end{tabular}
% \label{tab:results}
% \end{table}
% }

{
\begin{table}[h]
\caption{Rational risk gap of DQN across different $\ell_2$ regularisation strength in Taxi-v3 and Cliff Walking environments.}% We evaluate DQN under increasing regularisation strengths ($10^{-3}$,$10^{-4}$,$10^{-5}$,$10^{-6}$,$10^{-7}$).}
\centering
\small
\begin{tabular}{lcc}
\toprule
\textbf{Variable} & \textbf{Taxi} & \textbf{Cliff Walking} \\
\midrule
DQN       & $35.34 \pm 22.91$ & $206.67 \pm 26.50$ \\
$10^{-3}$  & $\bm{15.07 \pm 5.71}$ & $\bm{150.13 \pm 18.08}$  \\
$10^{-4}$  & $17.24 \pm 5.66$ & $204.48 \pm 24.52$  \\
$10^{-5}$  & $26.95 \pm 25.04$ & $167.81 \pm 12.83$  \\
$10^{-6}$  & $19.31 \pm 6.79$  & $206.66 \pm 46.32$  \\
$10^{-7}$  & $16.16 \pm 5.55$ & $162.72 \pm 16.84$  \\
\bottomrule
\end{tabular}
\label{mtab:results}
\end{table}
}
%but their effectiveness is limited across different environments.
%These results support our theoretical analysis, indicating that regularisation-based control of value function class complexity effectively mitigates the rational risk gap over training.
\paragraph{H2: Domain Randomisation}
Figure~\ref{mfig:dr} illustrates the benefits of domain randomisation on the rationality. {Compared to the DQN baseline, domain randomisation effectively reduces the rational risk gap in both environments, especially in the Cliff Walking environment.
% cropping consistently achieves the largest reduction in rational risk gap. In particular, in Fruitbot and Starpilot, cropping reduces the final rational risk gap by a large margin.
%These results are consistent with our theoretical analysis of environment shift. Instead of increasing the coverage of training level distributions, data augmentation effectively expands the support set of the training distribution to produce transition kernels that approximate those of the inference environment. As a result, the discrepancy between the induced transition kernel and the inference kernel, $W_1(p,p^\dagger)$, is reduced.

\paragraph{H3: Environment Shifts}
Figure~\ref{mfig:six_figures_level} reports the rational risk gap of DQN under different challenge levels of training environments. Rational risk gap shows a clear, positive correlation with the challenge levels, which fully supports the hypothesis that environment shifts are harmful to rationality. %With the increase of training episode, the effect of environment shifts cannot be mitigated, supporting Lemma~\ref{mthm:bias-unified}. {However, the effect of changes in the initial state distribution $W_1(p_0^\dagger, p_0)$ on the rational risk gap is mild.}}

{

}

\section{Conclusions and Future Works}

We introduce a rationality framework for reinforcement learning agents, an understudied but increasingly important lens for interpreting AI behaviour. We mathematically define %call an action 
perfectly rational actions, %at deployment if it maximises the (unobserved) true value function, 
and quantify bounded rationality by a rational risk gap. %departures from this ideal via expected rational risk, defined as the cumulative value discrepancy between a policy’s chosen actions and their rational counterparts; we also define an empirical training analogue. 
The rational risk gap admits a clean decomposition into an extrinsic component %component driven by train–test environment shift 
, and an intrinsic one, %attributable to the learning algorithm itself. We further show that these terms are 
each controlled by an upper bound. %, respectively, the $1$-Wasserstein distance between training and deployment transition dynamics and initial-state distributions, and the value-function class’s empirical Rademacher complexity. 
These bounds yield concrete practical implications: regularisation %(e.g., batch normalisation, $\ell_2$ regularisation, network randomisation, randomized convolution) 
and domain randomisation can reduce intrinsic irrationality, while environment shift predictably worsens extrinsic irrationality. Comprehensive experiments %across six Procgen environments 
support these predictions, collectively validating our theory. 
%\paragraph{Future works:}
We will develop economic analysis from the following directions:
\paragraph{(In-)Stability of Multi-Agent System} Our rationality measures quantify how far an agent’s behaviour could deviate from its optimal strategy, which can further help understand how the dynamics of a multi-agent system would deviate from the “ideal” equilibrium. This offers a lens for understanding the (in-)stability of this multi-agent system. More technically, a direct intuition is: an agent’s action can be modelled as the (perfectly) rational actions plus a (random) deviation bounded by the rationality measures. This deviation will make the dynamics away from the ideal equilibrium.

% \paragraph{Mechanism Design in an Irrational System} Our theory offers a tool for characterising a society / system that is not perfectly rational; in particular, how imperfection would be. This enables mechanism design for incentivising certain behaviour in this imperfect, and more realistic, setting. This is largely absent in the literature.

% \paragraph{Economic Simulation} \fh{consider merging this and the previous one. these two are very good directions. let's discuss how to proceed if you are interested} \bry{sure, I am also thinking about this direction, it could be suitable for my next research to do the social / economic environment generation rather than physical environment generation} Our theory can be the foundation for more realistic modelling of an economic system with imperfectly rational agents. It allows simulating the impact of different levels of rationality on collective economic outcomes, such as market efficiency, wealth distribution, and the discrepancy with the predictions of fully rational models.

% {\paragraph{Mechanism Design and Economic Simulation}
% Our theory provides a tool for modelling economic systems with imperfectly rational agents. This enables mechanism design under imperfect, and more realistic, behavioural assumptions and supports simulations of the impact of different rationality levels on collective outcomes, such as market efficiency, wealth distribution, and the discrepancy with the predictions of fully rational models.}

\paragraph{Socioeconomic Simulation and Mechanism Design}
Our model and theory inform better simulation of socioeconomic systems with imperfectly rational agents, such as financial markets and urban interactions. The simulation further supports mechanism design under more realistic behavioural assumptions, where incentives can be designed for agents that are not perfectly rational, for example.

\paragraph{Multi-Agent System Design}
Further, as a new tool for quantifying and calibrating action deviation from the perfectly rational actions, our rationality measures enable a systemic approach to managing irrationality in orchestrating a multi-agent system. Intuitively, simple examples are (1) the system-level deviation can be mitigated by applying parallel circuits, and (2) we can better “allocate resources” in improving system performance of a series circuit by prioritising the poorliest performing agent wherein. This systematic view helps strike a good balance between cost and system-level performance in agent orchestration. Through this, one could also advance fields like survival analysis and complex systems.

\section*{Acknowledgements}

KQ was supported in part by the UKRI Grant EP/Y03516X/1 for the UKRI Centre for Doctoral Training in Machine Learning Systems (\href{https://mlsystems.uk/}{https://mlsystems.uk/}).

\section*{Impact Statement}

% The goal of this paper is to advance the field of Machine
% Learning. There are many potential societal consequences, none 
% of which we feel must be specifically highlighted here.

This paper presents work whose goal is to advance the field of machine learning. There are many potential societal consequences of our work, none of which we feel must be specifically highlighted here.

% In the unusual situation where you want a paper to appear in the
% references without citing it in the main text, use \nocite
%\nocite{langley00}

\bibliography{ref}
\bibliographystyle{icml2026}

%%%%%%%%%%%%%%%%%%%%%%%%%%%%%%%%%%%%%%%%%%%%%%%%%%%%%%%%%%%%%%%%%%%%%%%%%%%%%%%
%%%%%%%%%%%%%%%%%%%%%%%%%%%%%%%%%%%%%%%%%%%%%%%%%%%%%%%%%%%%%%%%%%%%%%%%%%%%%%%
% APPENDIX
%%%%%%%%%%%%%%%%%%%%%%%%%%%%%%%%%%%%%%%%%%%%%%%%%%%%%%%%%%%%%%%%%%%%%%%%%%%%%%%
%%%%%%%%%%%%%%%%%%%%%%%%%%%%%%%%%%%%%%%%%%%%%%%%%%%%%%%%%%%%%%%%%%%%%%%%%%%%%%%
\newpage
\appendix
\onecolumn

\section{Notation}

\begin{table}[h!]
\centering
\renewcommand{\arraystretch}{1.2}
\caption{Notation}
\begin{tabularx}{0.71\textwidth}{@{} c X @{}}
\toprule
\multicolumn{1}{c}{\textbf{Symbol}} & \multicolumn{1}{c}{\textbf{Description}} \\
\midrule

$\mathcal{S}$ & State space \\
$\mathcal{A}$ & Finite action space \\
$|\mathcal{A}|$ & Cardinality of action space \\
$H$ & Horizon length  \\
$T$ & Number of training episodes \\

$s^t_h$ & State at step $h$ of episode $t$ \\

$\pi$, $\hat{\pi}$ & Stochastic policy, mapping states to action distributions \\
% $\pi$ & Stochastic policy, mapping states to action distributions \\
$\pi^\ast$ & optimal policy under initial state distribution $p_0^\dagger$ and transition kernel $p^\dagger$ in deployment\\

$p$ & Transition kernel of the training environment \\
$p^\dagger$ & Transition kernel of the inference environment \\

$p_0$ & Initial state distribution of the training environment \\
$p_0^\dagger$ & Initial state distribution of the inference environment \\

$\mathcal{D}^{\pi}_{h}$ & State distribution at step $h$ induced by policy $\pi$ under $p$ \\
$\mathcal{D}^{\pi,\dagger}_{h}$ & State distribution at step $h$ induced by policy $\pi$ under $p^\dagger$ \\

$r_h$ & Reward function at step $h$ \\
$V_h^{\pi}(s)$ & Value function of policy $\pi$ at step $h$ under transition kernel $p$\\
$Q_h^{\pi}(s,a)$ & Action value function of policy $\pi$ at step $h$ transition kernel $p$\\

$V_h^{\pi,\dagger}(s)$ & Value function of policy $\pi$ at step $h$ under transition kernel $p^\dagger$ \\
$Q_h^{\pi,\dagger}(s,a)$ & Action value function of policy $\pi$ at step $h$ under transition kernel $p^\dagger$\\

$\mathcal R_h(\pi)$ & expected rational value loss of policy $\pi$ at step $h$ \\

$\hat {\mathcal R}_h(\pi)$ & empirical rational value loss of policy $\pi$ at step $h$ \\
$\mathcal R(\pi)$ & expected rational value risk of policy $\pi$\\

$\hat {\mathcal R}(\pi)$ & empirical rational value risk of policy $\pi$\\

$\mathcal{Q}_\Pi$ & Class of value functions $\mathcal{Q}_\Pi:\mathcal S \to \mathbb R$ \\

$\mathfrak R(\mathcal{Q}_\Pi)$ & Rademacher complexity of $\mathcal{Q}_\Pi$ \\
$\hat{\mathfrak R}(\mathcal{Q}_\Pi)$ & Empirical Rademacher complexity of $\mathcal{Q}_\Pi$ \\
$W_1(p^\dagger_0,p_0)$ & $1$-Wasserstein distance between initial state distributions of $p^\dagger_0$ and $p_0$ \\
$W_1(p^\dagger,p)$ & $1$-Wasserstein distance between state distributions induced by $p^\dagger$ and $p$, i.e., $W_1(p^\dagger,p)\triangleq\sup_{s\in \mathcal{A},a\in\mathcal{A}}W_1(p^\dagger(\cdot|s,a),p(\cdot|s,a))$ \\
$d_\Pi(\pi,\pi')$ & TV distance between policies $\pi$ and $\pi'$ for any $\pi,\pi' \in \Pi$ and $s\in\mathcal{S}$, i.e., $d_\Pi(\pi,\pi')\triangleq\sup_s d_\Pi(\pi(s),\pi'(s))$\\
$L_s$ & Lipschitz constant of value functions w.r.t. states \\
$L_p$ & Lipschitz constant of induced state distributions w.r.t. transition kernels \\
$L_\Pi$ & Lipschitz constant of induced state distributions w.r.t. policy\\

$\mathcal O(\cdot)$ & Asymptotic complexity notation \\
% $\tilde{\mathcal O}(\cdot)$ & Asymptotic complexity, suppressing logarithmic factors \\

\bottomrule
\end{tabularx}
\end{table}

\section{Proof of Theorem~\ref{mthm:bias-unified}}
\label{addsec:ege}

In this section, we prove the extrinsic rational risk bound in Theorem~\ref{mthm:bias-unified}. We define the integral probability metric (IPM).

\begin{definition}[IPM]
\label{def:ipm}
Let $\mathcal{Q}_\Pi\subseteq\{f:\mathcal{S}\to\mathbb{R}\}$ be a class of bounded measurable functions. For any probability measures $\mu, \nu$ on $\mathcal{S}$, the integral probability metric (IPM) induced by $\mathcal{Q}_\Pi$ is defined as
\[
    D_{\mathcal{Q}_\Pi}(\mu,\nu)
    \triangleq
    \sup_{f\in\mathcal{Q}_\Pi}
    \bigl|
        \mathbb{E}_{\mu}[f]
        -
        \mathbb{E}_{\nu}[f]
    \bigr|.
\]
\end{definition}

\begin{comment}
\begin{assumption}[lipschitz dependence on policy]
  \label{ass:lipschitz-policy}
  Let $(\Pi,d_\Pi)$ be a metric space. Assume that the map $\pi \mapsto \mathcal{D}_{h}^{\pi}$ is
  $L_\Pi$-Lipschitz with respect to the IPM $D_{\mathcal{Q}_\Pi}$, i.e.,
  \[
    D_{\mathcal{Q}_\Pi}(\mathcal{D}^{\pi}_h,\mathcal{D}^{\pi'}_h)
    \le
    L_\Pi\, d_\Pi(\pi,\pi'),
    \quad
    \forall \pi,\pi'\in\Pi.
  \]
\end{assumption}

\begin{assumption}[episode-level independence]
\label{ass:episode-independence}
For each $t=1,\dots,T$, the state $s^t_h$ is generated by running policy
$\pi_t\in\Pi$ in environment $S$,
\(
    s^t_h \sim \mathcal{D}_{h}^{\pi_t}.
\)
The variables $\{s^t_h\}_{t=1}^T$ are independent, though not necessarily
identically distributed.
\end{assumption}
\end{comment}

We now restate our Lemma~\ref{mlem:trans_error}.

\label{app:them1}
\renewcommand{\thelemma}{3}
\begin{lemma}
\label{lem:trans_error}
Under Assumption \ref{mass:lipschitz-val}, for any step $h\in[H]$, the optimal value discrepancy between the inference transition kernel $p^\dagger$ and training transition kernel $p$ under same training distribution $\mathcal D_h^{\hat {\pi}}$ satisfies
\begin{align*}
\sup_{\pi \in \Pi} \left|
\mathbb E_{s_h\sim\mathcal D_h^{\hat{\pi}}}
Q_h^{*,\dagger}(s_h,a_h^\pi)
-
\mathbb E_{s_h\sim \mathcal D_h^{\hat{\pi}}}
Q_h^{*}(s_h,a_h^{\pi})
\right|\le
(H-h)\,L_s\,W_1(p^\dagger,p).
\end{align*}
\end{lemma}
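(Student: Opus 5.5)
We argue pointwise in the state and then integrate. Fix $h$, a policy $\pi\in\Pi$ and a state $s$, and write $a=a_h^\pi$. Since both expectations in the statement are taken under the same distribution $\mathcal D_h^{\hat\pi}$, it suffices to show
\[
\bigl|Q_h^{*,\dagger}(s,a)-Q_h^{*}(s,a)\bigr|\le (H-h)\,L_s\,W_1(p^\dagger,p)
\]
uniformly over $s$ and $a$. Integrating this bound over $\mathcal D_h^{\hat\pi}$ and taking the supremum over $\pi$ then gives the claim.

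The argument is a backward induction on $h$ built from the Bellman equations. At the last step, $Q_H^{*,\dagger}(s,a)=r_H(s,a)=Q_H^*(s,a)$ because $V_{H+1}\equiv 0$, so the discrepancy is $0=(H-H)L_sW_1(p^\dagger,p)$. For $h<H$ we expand
\begin{align*}
Q_h^{*,\dagger}(s,a)-Q_h^{*}(s,a)
&=\mathbb E_{s'\sim p_h^\dagger(\cdot|s,a)}V_{h+1}^{*,\dagger}(s')-\mathbb E_{s'\sim p_h(\cdot|s,a)}V_{h+1}^{*}(s')\\
&=\Bigl[\mathbb E_{p_h^\dagger}V_{h+1}^{*,\dagger}-\mathbb E_{p_h}V_{h+1}^{*,\dagger}\Bigr]+\mathbb E_{p_h}\Bigl[V_{h+1}^{*,\dagger}-V_{h+1}^{*}\Bigr].
\end{align*}

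The two brackets are handled separately.

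\emph{Kernel-shift term.} We use Kantorovich--Rubinstein duality. If $V_{h+1}^{*,\dagger}$ is $L_s$-Lipschitz, the first bracket is at most $L_s\,W_1(p_h^\dagger(\cdot|s,a),p_h(\cdot|s,a))\le L_sW_1(p^\dagger,p)$.

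\emph{Propagation term.} The second bracket is bounded using $|\max_b x_b-\max_b y_b|\le\max_b|x_b-y_b|$. This gives $\sup_{s'}|V_{h+1}^{*,\dagger}(s')-V_{h+1}^*(s')|\le\sup_{s',b}|Q_{h+1}^{*,\dagger}(s',b)-Q_{h+1}^*(s',b)|$, which by the induction hypothesis is at most $(H-h-1)L_sW_1(p^\dagger,p)$.

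Adding the two contributions yields $(H-h)L_sW_1(p^\dagger,p)$, which closes the induction.

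The main obstacle is justifying the Lipschitz property of $V_{h+1}^{*,\dagger}$ that the duality step needs. Assumption~\ref{mass:lipschitz-val} is stated for the class $\mathcal Q_\Pi$, i.e.\ for functions $s\mapsto Q_h^*(s,a_h^\pi)$. I would first read it as applying to the optimal action-values in both environments and at every step, as the paper implicitly does. Then $V_{h+1}^{*,\dagger}(s')=\max_b Q_{h+1}^{*,\dagger}(s',b)$ is a maximum of finitely many $L_s$-Lipschitz functions, hence itself $L_s$-Lipschitz, since $\mathcal A$ is finite and the maximum over $b$ is realised by greedy policies that lie in $\Pi$. If that reading is too generous, the fallback is to swap the roles of the two environments in the telescoping step, so that duality is applied to $V_{h+1}^{*}$ under the training kernel. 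With that swap, the Lipschitz requirement concerns only training-side optimal values, which matches the definition of $\mathcal Q_\Pi$ more directly. The rest of the induction is unchanged.
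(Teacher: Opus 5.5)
Your proposal is correct. With your fallback ordering it is exactly the paper's proof: you add and subtract $\mathbb{E}_{p^\dagger}V_{h+1}^{*}$, so that Kantorovich--Rubinstein duality is applied to the training-side $V_{h+1}^{*}$, which is what Assumption~\ref{mass:lipschitz-val} actually covers; you obtain the recursion $\sup_{s,a}|Q_h^{*,\dagger}-Q_h^{*}|\le\sup_{s,a}|Q_{h+1}^{*,\dagger}-Q_{h+1}^{*}|+L_sW_1(p^\dagger,p)$; you run backward induction from the zero base case at $h=H$; and you bound the expectation gap under the common $\mathcal D_h^{\hat\pi}$ by the uniform pointwise gap. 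One minor point: in the paper $Q_h^{*}$ denotes $Q_h^{\pi^*}$ under $p$, with $\pi^*$ the deployment-optimal policy, so $V_{h+1}^{*}$ is a $\pi^*$-average rather than $\max_b Q_{h+1}^{*}(\cdot,b)$; your propagation bound $\sup|V_{h+1}^{*,\dagger}-V_{h+1}^{*}|\le\sup|Q_{h+1}^{*,\dagger}-Q_{h+1}^{*}|$ still holds, by averaging over $\pi^*$ instead of using the max inequality.
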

\renewcommand{\thelemma}{\arabic{lemma}}

\begin{proof}
For any $h\in\{1,\dots,H\}$, the Bellman expectation equations give
\[
Q_h^{*}(s_h,a_h^\pi)
=
r_h(s_h,a_h^\pi)
+
\int_{\mathcal S} V_{h+1}^{*}(s_{h+1})\,p(\mathrm ds_{h+1}\mid s_h,a_h^\pi),
\]
and
\[
Q_h^{*,\dagger}(s_h,a_h^\pi)
=
r_h(s_h,a_h^\pi)
+
\int_{\mathcal S} V_{h+1}^{*,\dagger}(s_{h+1})\,p^\dagger(\mathrm ds_{h+1}\mid s_h,a_h^\pi).
\]

Subtracting the two equations yields
\begin{align*}
Q_h^{*,\dagger}(s_h,a^\pi_h)-Q_h^{*}(s_h,a^\pi_h) =
\int_{\mathcal S} V_{h+1}^{*,\dagger}(s_{h+1})\,p^\dagger(\mathrm ds_{h+1}\mid s_h,a_h^\pi)
-
\int_{\mathcal S} V_{h+1}^{*}(s_{h+1})\,p(\mathrm ds_{h+1}\mid s_h,a_h^\pi)
.
\end{align*}

Adding and subtracting
$\int_{\mathcal S} V_{h+1}^{*}(s_{h+1})\,p^\dagger(\mathrm ds_{h+1}\mid s_h,a_h^\pi)$
inside the integrand gives
\begin{align*}
&Q_h^{*,\dagger}(s_h,a_h^\pi)-Q_h^{*}(s_h,a_h^\pi) \\
&=
\int_{\mathcal S}
\bigl(
V_{h+1}^{*,\dagger}(s_{h+1})-V_{h+1}^{*}(s_{h+1})
\bigr)
p^\dagger(\mathrm ds_{h+1}\mid s_h,a_h^{\pi})
+
\int_{\mathcal S}
V_{h+1}^{*}(s_{h+1})\,
\bigl(p^\dagger-p\bigr)(\mathrm ds_{h+1}\mid s_h,a_h^\pi).
\end{align*}

According to Assumption~\ref{mass:lipschitz-val}, $V_{h+1}^{*}(\cdot)$ is $L_s$-Lipschitz and by the
Kantorovich--Rubinstein duality, we have
\[\left|
\int_{\mathcal S}
V_{h+1}^{*}(s_{h+1})\,
\bigl(p^\dagger-p\bigr)(\mathrm ds_{h+1}\mid s_h,a_h^\pi)
\right|
\le
L_s\,W_1(p^\dagger(\cdot|s_h,a_h^\pi),p(\cdot|s_h,a_h^\pi)).
\]

Taking absolute values and using the triangle inequality,
\begin{align*}
\bigl|
Q_h^{*,\dagger}(s_h,a_h^\pi)-Q_h^{*}(s_h,a_h^\pi)
\bigr|
&\le
\left|\int_{\mathcal S}\bigl(
V_{h+1}^{*,\dagger}(s_{h+1})-V_{h+1}^{*}(s_{h+1})\bigr)
p^\dagger(\mathrm ds_{h+1}\mid s_h,a_h^\pi)\right| 
+ L_s\,W_1(p^\dagger(\cdot|s_h,a_h^\pi),p(\cdot|s_h,a_h^\pi)).
\end{align*}

Hence, for any $h\in\{1,\dots,H\}$:
\begin{align}
\label{eq:recursion}
\sup_{s\in\mathcal S,a\in\mathcal A}
\bigl|
Q_h^{*,\dagger}(s,a)-Q_h^{*}(s,a)
\bigr|
&\le
\sup_{s\in\mathcal S,a\in\mathcal A}
\left|
\int_{\mathcal S}
\bigl(
V_{h+1}^{*,\dagger}(s')-V_{h+1}^{*}(s')
\bigr)
p^\dagger(\mathrm ds'\mid s,a)
\right|\\
&\quad+\sup_{s\in\mathcal S,a\in\mathcal A}L_sW_1(p^\dagger(\cdot\mid s,a),p(\cdot\mid s,a))
\nonumber\\
&\le
\sup_{s'\in\mathcal S}
\bigl|
V_{h+1}^{*,\dagger}(s')-V_{h+1}^{*}(s')
\bigr|
+L_sW_1(p^\dagger,p)
\nonumber\\
&\le
\sup_{s'\in\mathcal S,a'\in\mathcal{A}}
\bigl|
Q_{h+1}^{*,\dagger}(s',a')-Q_{h+1}^{*}(s',a')
\bigr|+ L_s\,W_1(p^\dagger,p).
\end{align}

% \begin{align}
% \label{eq:recursion}
% \bigl|
% Q_h^{*,\dagger}(s,a)-Q_h^{*}(s,a)
% \bigr|
% &\le
% \left|
% \int_{\mathcal S}
% \bigl(
% V_{h+1}^{*,\dagger}(s')-V_{h+1}^{*}(s')
% \bigr)
% p^\dagger(\mathrm ds'\mid s,a)
% \right|
% +L_sW_1(p^\dagger,p)
% \nonumber\\
% &\le
% \bigl|
% V_{h+1}^{*,\dagger}(s)-V_{h+1}^{*}(s)
% \bigr|
% +L_sW_1(p^\dagger,p)
% \nonumber\\
% &\le
% \bigl|
% Q_{h+1}^{*,\dagger}(s,a)-Q_{h+1}^{*}(s,a)
% \bigr|+ L_s\,W_1(p^\dagger,p).
% \end{align}

We prove by backward induction on $h$, for all $h\in\{1,\dots,H\}$,
\begin{equation}\label{eq:induction-claim}
\sup_{s\in\mathcal S,a\in\mathcal A}\bigl|
Q_h^{*,\dagger}(s,a)-Q_h^{*}(s,a)
\bigr|
\le
(H-h)\,L_s\,W_1(p^\dagger,p).
\end{equation}
  
By the terminal condition $V_{H+1}^\pi(\cdot)\equiv V_{H+1}^{\pi,\dagger}(\cdot)\equiv 0$, we have
\begin{align*}
\sup_{s\in\mathcal S,a\in\mathcal A}\bigl|
Q_H^{*,\dagger}(s,a)-Q_H^{*}(s,a)
\bigr|=\sup_{s\in\mathcal S,a\in\mathcal A}
\bigl|
\int_{\mathcal{S}} V_{H+1}^{*,\dagger}(s')p^\dagger(ds'|s,a)-\int_{\mathcal S} V_{H+1}^{*}(s')p(ds'|s,a)
\bigr|=0.
\end{align*}
Moreover, note that the Right-hand Side (RHS) of equation \eqref{eq:induction-claim} at $h=H$ equals
\[
(H-H)\,L_s\,W_1(p^\dagger,p)=0,
\]
so equation \eqref{eq:induction-claim} holds for $h=H$.

Fix any $h\in\{1,\dots,H-1\}$. Assume that equation \eqref{eq:induction-claim} holds at time $h+1$, we have
\begin{align}\label{eq:IH}
\sup_{s\in\mathcal S,a\in\mathcal A}\bigl|
Q_{h+1}^{*,\dagger}(s,a)-Q_{h+1}^{*}(s,a)
\bigr|
&\le
(H-h-1)\,L_s\,W_1(p^\dagger,p).
\end{align}
Applying the recursion equation \eqref{eq:recursion} and then substituting equation \eqref{eq:IH}, we obtain
\begin{align*}
\sup_{s\in\mathcal S,a\in\mathcal A}\bigl|
Q_{h}^{*,\dagger}(s,a)-Q_{h}^{*}(s,a)
\bigr|
&\le
\sup_{s\in\mathcal S,a\in\mathcal{A}}
\bigl|
Q_{h+1}^{*,\dagger}(s,a)-Q_{h+1}^{*}(s,a)
\bigr|+ L_s\,W_1(p^\dagger,p) \\
&\le
(H-h-1)\,L_s\,W_1(p^\dagger,p)
+ L_s\,W_1(p^\dagger,p) \\
&=
(H-h)\,L_s\,W_1(p^\dagger,p).
\end{align*}
This proves that equation \eqref{eq:induction-claim} holds at time $h$ whenever it holds at time $h+1$.

By backward induction from $h=H$ down to $h=1$, equation \eqref{eq:induction-claim} holds for all $h\in\{1,\dots,H\}$.

The final claim follows since
\begin{align*}
\sup_{\pi \in \Pi} \left|
\mathbb E_{s_h\sim\mathcal D_h^{\hat{\pi}}}
Q_h^{*,\dagger}(s_h,a_h^\pi)
-
\mathbb E_{s_h\sim \mathcal D_h^{\hat{\pi}}}
Q_h^{*}(s_h,a_h^{\pi})
\right|&\le \sup_{s\in\mathcal S,a\in \mathcal{A}}
\bigl|
Q_{h}^{*,\dagger}(s,a)-Q_{h}^{*}(s,a)
\bigr|
\\
&\le
(H-h)\,L_s\,W_1(p^\dagger,p).
 \end{align*}
\end{proof}

We are ready to prove the upper bound on the extrinsic rational gap in Theorem~\ref{thm:bias-unified}.

\renewcommand{\thetheorem}{1}
\begin{theorem}[extrinsic rational gap bound]
\label{thm:bias-unified}
Let $\mathcal D_h^{\hat{\pi},\dagger}, \mathcal D_h^{\hat{\pi}}$ denote the state distributions in inference and training. Under Assumptions~\ref{mass:lipschitz-val}--\ref{mass:lipschitz-env}, the extrinsic rational gap over a trajectory of horizon $H$ is  upper bounded by
\begin{align*}
\sum_{h=1}^H \sup_{\pi \in \Pi} \left|\mathbb{E}_{s_h\sim\mathcal D_h^{\hat{\pi},\dagger}} Q_h^{*,\dagger}(s_h,a_h^{\pi})
-
 \mathbb{E}_{s_h \sim \mathcal D_h^{\hat{\pi}}}Q_h^{*}(s_h,a_h^\pi)\right|  \leq L_s H\cdot W_1(p_0^\dagger,p_0)+H^2L_s(L_p+1)\cdot W_1(p^\dagger, p).
\end{align*}
\end{theorem}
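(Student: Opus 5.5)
Fix $h\in[H]$ and split the per-step extrinsic gap by the triangle inequality, inserting the mixed term $\mathbb{E}_{s_h\sim\mathcal D_h^{\hat{\pi}}} Q_h^{*,\dagger}(s_h,a_h^\pi)$. This is exactly the decomposition in the proof sketch: Term I compares the same function $Q_h^{*,\dagger}(\cdot,a_h^\pi)$ under $\mathcal D_h^{\hat\pi,\dagger}$ and $\mathcal D_h^{\hat\pi}$, while Term II compares $Q_h^{*,\dagger}$ and $Q_h^{*}$ under the common distribution $\mathcal D_h^{\hat\pi}$. Taking the supremum over $\pi$ of a sum is bounded by the sum of suprema, so it suffices to bound each term uniformly in $\pi$ and then sum over $h=1,\dots,H$.

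For Term I, I will invoke Assumption~\ref{mass:lipschitz-val}: every $f(s)=Q_h^{*}(s,a_h^\pi)\in\mathcal Q_\Pi$ is $L_s$-Lipschitz, and I take this to cover the deployment counterpart $s\mapsto Q_h^{*,\dagger}(s,a_h^\pi)$ as well. By Kantorovich--Rubinstein duality, Term I is then at most $L_s\,W_1(\mathcal D_h^{\hat\pi,\dagger},\mathcal D_h^{\hat\pi})$. Next I unroll Assumption~\ref{mass:lipschitz-env} from $h$ down to $1$. Since $\mathcal D_1^{\hat\pi,\dagger}=p_0^\dagger$ and $\mathcal D_1^{\hat\pi}=p_0$, this gives
\[
W_1(\mathcal D_h^{\hat\pi,\dagger},\mathcal D_h^{\hat\pi})\le W_1(p_0^\dagger,p_0)+(h-1)L_p\,W_1(p^\dagger,p),
\]
so Term I $\le L_sW_1(p_0^\dagger,p_0)+(h-1)L_sL_pW_1(p^\dagger,p)$. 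For Term II, I will apply Lemma~\ref{mlem:trans_error} directly, which gives Term II $\le (H-h)L_sW_1(p^\dagger,p)$.

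Summing over $h$, the initial-distribution part contributes $HL_sW_1(p_0^\dagger,p_0)$. The transition part contributes $L_sW_1(p^\dagger,p)\sum_{h=1}^H[(h-1)L_p+(H-h)]=L_sW_1(p^\dagger,p)\cdot\tfrac{H(H-1)}{2}(L_p+1)$. This is at most $H^2L_s(L_p+1)W_1(p^\dagger,p)$, which gives the stated bound with room to spare.

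The main obstacle is Term I. The Lipschitz assumption is stated for $\mathcal Q_\Pi$, which is defined through $Q_h^{*}$, yet Term I is applied to the deployment function $Q_h^{*,\dagger}$. I will handle this either by reading the assumption as covering both optimal value functions, or by routing through $Q_h^{*}$ and absorbing the difference with Lemma~\ref{mlem:trans_error}. The second route adds a further $(H-h)L_sW_1(p^\dagger,p)$ per step, and the slack in the final summation (roughly a factor of two) still fits within $H^2L_s(L_p+1)$. Unrolling Assumption~\ref{mass:lipschitz-env} is routine once the base case $h=1$ is identified with the initial distributions.
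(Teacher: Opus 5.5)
Your proposal is correct and follows the paper's proof essentially step for step: the same mixed term $\mathbb{E}_{s_h\sim\mathcal D_h^{\hat\pi}}Q_h^{*,\dagger}(s_h,a_h^\pi)$, Kantorovich--Rubinstein plus the unrolled Assumption~\ref{mass:lipschitz-env} for Term I, and Lemma~\ref{mlem:trans_error} for Term II; the paper also silently applies the Lipschitz assumption to $Q_h^{*,\dagger}$ (your first reading), and your count $\tfrac{H(H-1)}{2}(L_p+1)$ is just a sharper version of its $H^2(L_p+1)$. If you want to avoid that reading, insert $\mathbb{E}_{s_h\sim\mathcal D_h^{\hat\pi,\dagger}}Q_h^{*}(s_h,a_h^\pi)$ as the mixed term instead, since the sup-norm bound from the proof of Lemma~\ref{mlem:trans_error} together with Lipschitzness of $Q_h^{*}$ alone gives exactly the same bound at no extra cost, whereas swapping $Q_h^{*,\dagger}$ for $Q_h^{*}$ inside the original Term I costs $2(H-h)L_sW_1(p^\dagger,p)$ per step (once under each distribution), not the $(H-h)L_sW_1(p^\dagger,p)$ you estimate, and that resulting bound no longer fits under $H^2L_s(L_p+1)W_1(p^\dagger,p)$ whenever $L_p<(H-3)/(H+1)$.
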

\renewcommand{\thetheorem}{\arabic{theorem}}

\begin{proof}
For each $h$, we add and subtract
\(
\mathbb E_{s_h\sim\mathcal D_h^{\hat\pi}}
Q_h^{*,\dagger}(s_h,a_h^\pi)
\)
and then apply the triangle inequality and take the supremum over $\pi\in\Pi$.

\begin{align*}
&\sup_{\pi \in \Pi} \left|\mathbb{E}_{s_h\sim\mathcal D_h^{\hat{\pi},\dagger}} Q_h^{*,\dagger}(s_h,a_h^\pi)
-
\mathbb E_{s_h \sim \mathcal D_h^{\hat{\pi}}} Q_h^{*}(s_h,a_h^\pi)\right| \\
&
 {\leq} 
\underbrace{
\sup_{\pi \in \Pi} \left|\mathbb{E}_{s_h\sim\mathcal D_h^{\hat{\pi},\dagger}} Q_h^{*,\dagger}(s_h,a_h^\pi)
-
\mathbb{E}_{s_h\sim\mathcal D_h^{\hat{\pi}}} Q_h^{*,\dagger}(s_h,a_h^\pi)\right|}_{\text{I}} \\
&\quad+ 
\underbrace{
\sup_{\pi \in \Pi} \left|\mathbb{E}_{s_h\sim\mathcal D_h^{\hat{\pi}}} Q_h^{*,\dagger}(s_h,a_h^\pi)
-
\mathbb{E}_{s_h\sim \mathcal D_h^{\hat{\pi}}} Q_h^{*}(s_h,a_h^\pi)\right|}
_{\text{II}}.
\end{align*}

\medskip
\noindent\textbf{Term I.} This term describes the discrepancy of distributions induced by the difference between two different transition kernels $p^\dagger$ and $p$ as well as the initial state distributions $p^\dagger_0$ and $p_0$ for any $s \in \mathcal{S}$ and $a \in \mathcal{A}$. 

According to the definition of IPM, the first term \(\sup_{\pi \in \Pi} \left|\mathbb{E}_{s_h\sim\mathcal D_h^{\hat{\pi},\dagger}} Q_h^{*,\dagger}(s_h,a_h^\pi)
-
\mathbb{E}_{s_h\sim\mathcal D_h^{\hat{\pi}}} Q_h^{*,\dagger}(s_h,a_h^\pi)\right| \)
satisfies:
\[
\sup_{\pi \in \Pi} \left|\mathbb{E}_{s_h\sim\mathcal D_h^{\hat{\pi},\dagger}} Q_h^{*,\dagger}(s_h,a_h^\pi)
-
\mathbb{E}_{s_h\sim\mathcal D_h^{\hat{\pi}}} Q_h^{*,\dagger}(s_h,a_h^\pi)\right|\leq D_{\mathcal Q_\Pi}(\mathcal{D}^{\hat{\pi},\dagger}_h,\mathcal{D}^{\hat{\pi}}_h).
\]
To relate this to the difference between kernels $p^\dagger$ and $p$, we use Assumption~\ref{mass:lipschitz-val}, which assumes that every $f\in\mathcal{Q}_\Pi$ is $L_s$-Lipschitz. By the Kantorovich--Rubinstein duality, this obtains
\[
    D_{\mathcal Q_\Pi}(\mathcal{D}^{\hat{\pi},\dagger}_h,\mathcal{D}^{\hat{\pi}}_h)
    \le
    L_s\, W_1(\mathcal{D}^{\hat{\pi},\dagger}_h,\mathcal{D}^{\hat{\pi}}_h).
\]

To bound the distribution shift by the 1-Wasserstein distance of initial state distributions and transition kernels, we first claim the 1-Wasserstein distance between state distribution $\mathcal{D}^{\hat{\pi},\dagger}_h$ and $\mathcal{D}^{\hat{\pi}}_h$ can be bounded by: $    W_1(\mathcal{D}^{\hat{\pi},\dagger}_h,\mathcal{D}^{\hat{\pi}}_h)\le
W_1(p_0^\dagger,p_0)+
    (h-1)L_{p}\, W_1(p^\dagger,p).
$ We prove this claim by induction on $h$.

For the base case $h=1$, we have $
W_1\left(\mathcal D_1^{\hat{\pi},\dagger},\mathcal D_1^{\hat{\pi}}\right)
=
W_1\left(p_0^\dagger,p_0\right).
$
Since
$
W_1\left(p_0^\dagger,p_0\right)
+
(1-1)L_p\,W_1\left(p^\dagger,p\right)
=
W_1\left(p_0^\dagger,p_0\right),
$
the claimed bound holds for $h=1$.

For some $h\in[H-1]$ in Assumption~\ref{mass:lipschitz-env}, we have
\(
W_1\left(\mathcal D_{h+1}^{\hat{\pi},\dagger},\mathcal D_{h+1}^{\hat{\pi}}\right)
\le
W_1\left(\mathcal D_h^{\hat{\pi},\dagger},\mathcal D_h^{\hat{\pi}}\right)
+
L_p\,W_1\left(p^\dagger,p\right).
\)

Plugging the induction hypothesis into the above inequality obtains
\begin{align*}
W_1\left(\mathcal D_{h+1}^{\hat{\pi},\dagger},\mathcal D_{h+1}^{\hat{\pi}}\right)
&\le
W_1\left(p_0^\dagger,p_0\right)
+
(h-1)L_p\,W_1\left(p^\dagger,p\right)
+
L_p\,W_1\left(p^\dagger,p\right) \\
&=
W_1\left(p_0^\dagger,p_0\right)
+
hL_p\,W_1\left(p^\dagger,p\right).
\end{align*}
Therefore, the bound also holds for $h+1$. By induction, for all $h\in[H]$,
\[
W_1\left(\mathcal D_h^{\hat{\pi},\dagger},\mathcal D_h^{\hat{\pi}}\right)
\le
W_1\left(p_0^\dagger,p_0\right)
+
(h-1)L_p\,W_1\left(p^\dagger,p\right).
\]
This completes the induction proof. Thus, the environment shift is bounded by:
\begin{equation}
\label{eq:es}
\sup_{\pi \in \Pi} \left|\mathbb{E}_{s_h\sim\mathcal D_h^{\hat{\pi},\dagger}} Q_h^{*,\dagger}(s_h,a_h^\pi)
-
\mathbb{E}_{s_h\sim\mathcal D_h^{\hat{\pi}}} Q_h^{*,\dagger}(s_h,a_h^\pi)\right|\leq L_s W_1(p_0^\dagger,p_0)+
    (h-1)L_sL_{p} W_1(p^\dagger,p).
\end{equation}

\medskip
\noindent\textbf{Term II.} This term quantifies the shift introduced by the difference between the transition kernel $p^\dagger$ in deployment and the transition kernel $p$ in training.
Based on the lemma~\ref{mlem:trans_error}, we have 
\begin{align}
\label{eq:te}
\sup_{\pi \in \Pi}\left|
\mathbb E_{s_h\sim\mathcal D_h^{\hat{\pi}}}
Q_h^{*,\dagger}(s_h,a_h^\pi)
-
\mathbb E_{s_h\sim\mathcal D_h^{\hat{\pi}}}
Q_h^{*}(s_h,a_h^\pi)
\right|
&\le
(H-h)L_sW_1(p^\dagger,p).
\end{align}

Combining these two bounds of \ref{eq:es} and \ref{eq:te},
\begin{align*}
&\sum_{h=1}^H \sup_{\pi \in \Pi} \left|\mathbb{E}_{s_h\sim\mathcal D_h^{\hat{\pi},\dagger}} Q_h^{*,\dagger}(s_h,a_h^{\pi})
-
 \mathbb{E}_{s_h \sim \mathcal D_h^{\hat{\pi}}}Q_h^{*}(s_h,a_h^\pi)\right| \\
&\leq \sum_{h=1}^H\left[L_s \cdot W_1(p_0^\dagger,p_0)+
    (h-1)L_sL_{p}\cdot W_1(p^\dagger,p)+(H-h)L_s\cdot W_1(p^\dagger,p)\right]\\
&\leq L_s H\cdot W_1(p_0^\dagger,p_0)+H^2L_s(L_p+1)\cdot W_1(p^\dagger, p),
\end{align*}
which concludes the proof.
\end{proof}

\section{Proof of Theorem~\ref{mthm:multi-source-rl}}
\label{addsec:ige}
In this section, we prove the upper bound on the intrinsic rational gap in Theorem~\ref{mthm:multi-source-rl}.

\begin{lemma}
\label{lem:policy_path_length}
Under Assumption~\ref{mass:klpolicy}, let $\mathcal A$ be a finite action space. Assume $\pi_1(\cdot\mid s)$ is uniform over $\mathcal A$ for all $s \in \mathcal S$, and for some $\alpha > 0$, $\sup_{s\in\mathcal S}\mathrm{KL}\!\bigl(\pi_{t+1}(\cdot\mid s)\,\|\,\pi_t(\cdot\mid s)\bigr)
\le \alpha,\quad \forall\, t=1,\dots,T-1.$ Then for all $t\ge 1$,
\begin{align*}
d_\Pi(\hat{\pi},\pi_t) \le \sqrt{\frac{\log|\mathcal A|}{2}}+\sqrt{\frac{(t-1)^2 \alpha}{2}}.
\end{align*}
\end{lemma}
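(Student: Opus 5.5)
We split $d_\Pi(\hat{\pi},\pi_t)$ with the triangle inequality for the total variation distance, pivoting on the uniform initial policy $\pi_1$:
\[
d_\Pi(\hat{\pi},\pi_t)\le d_\Pi(\hat{\pi},\pi_1)+d_\Pi(\pi_1,\pi_t).
\]
The two terms of the bound in Lemma~\ref{lem:policy_path_length} should come from these two pieces. The first measures how far the fixed policy $\hat{\pi}$ sits from uniform. The second is the accumulated drift along the training path.

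For the first piece, fix a state $s$ and apply Pinsker's inequality, $d_\Pi(\mu,\nu)\le\sqrt{\mathrm{KL}(\mu\|\nu)/2}$, with $\nu=\pi_1(\cdot\mid s)$ uniform. Against the uniform distribution on $\mathcal A$, $\mathrm{KL}(\mu\|\mathrm{Unif})=\log|\mathcal A|-\mathcal H(\mu)\le\log|\mathcal A|$, since the entropy $\mathcal H(\mu)$ is nonnegative. This gives $d_\Pi(\hat{\pi}(\cdot\mid s),\pi_1(\cdot\mid s))\le\sqrt{\log|\mathcal A|/2}$ uniformly in $s$. Taking the supremum over $s$ yields the first term. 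Only uniformity of $\pi_1$ is used here, not Assumption~\ref{mass:klpolicy}.

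For the second piece, telescope along the sequence of policies. Fix $s$; by the triangle inequality,
\[
d_\Pi(\pi_1(\cdot\mid s),\pi_t(\cdot\mid s))\le\sum_{i=1}^{t-1} d_\Pi(\pi_{i+1}(\cdot\mid s),\pi_i(\cdot\mid s)).
\]
By Pinsker and Assumption~\ref{mass:klpolicy}, each summand is at most $\sqrt{\alpha/2}$. The sum is therefore at most $(t-1)\sqrt{\alpha/2}=\sqrt{(t-1)^2\alpha/2}$. Taking the supremum over $s$, and noting that the supremum of a sum is at most the sum of the suprema, gives the second term. Adding the two pieces proves the lemma.

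The only delicate point is orientation. Pinsker is symmetric on the TV side, so the direction of the KL divergence does not matter in either step. The real issue is that the KL bound to uniform requires $\pi_1$ to be the \emph{reference} measure. That is exactly the orientation $\mathrm{KL}(\hat{\pi}\|\pi_1)$ we use, so this step goes through. Otherwise the argument is routine. When the result is later used in Lemma~\ref{mlem:policy_bound}, averaging over $t$ with $\alpha=4\log|\mathcal A|/T^2$ should give the stated $\sqrt{2\log|\mathcal A|}$ scale.
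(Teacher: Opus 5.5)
Your proof is correct and follows the paper's argument. The paper likewise pivots on the uniform $\pi_1$ via the triangle inequality for TV, applied per state and then taking the supremum. It bounds $d_\Pi(\hat{\pi},\pi_1)\le\sqrt{\log|\mathcal A|/2}$ by Pinsker together with $\mathrm{KL}(\hat{\pi}(\cdot\mid s)\|\pi_1(\cdot\mid s))\le\log|\mathcal A|$, and bounds the telescoped drift by $(t-1)\sqrt{\alpha/2}$ via Pinsker and Assumption~\ref{mass:klpolicy}. Your remark on orientation (using $\mathrm{KL}(\hat{\pi}\|\pi_1)$ rather than the possibly infinite reverse divergence) is the right care. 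Note that the telescoping implicitly requires $t\le T$, since the KL assumption only holds up to $T-1$; the paper's proof has the same implicit restriction.
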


\begin{proof}
By definition, for any $s\in\mathcal S$, the total variation distance
$d_{\Pi}(\cdot,\cdot)$ is a metric on the probability simplex over
$\mathcal A$, and satisfies the triangle inequality. Therefore, for any
$s\in\mathcal S$,
\[
d_{\Pi}\!\bigl(\hat{\pi}(\cdot\mid s),\pi_t(\cdot\mid s)\bigr)
\le
d_{\Pi}\!\bigl(\hat{\pi}(\cdot\mid s),\pi_1(\cdot\mid s)\bigr)
+
\sum_{i=1}^{t-1}
d_{\Pi}\!\bigl(\pi_{i+1}(\cdot\mid s),\pi_i(\cdot\mid s)\bigr).
\]
Taking the supremum over $s\in\mathcal S$ on both sides obtains
\begin{equation}
\sup_{s \in \mathcal{S}} d_\Pi(\hat{\pi}(\cdot\mid s),\pi_t(\cdot\mid s))
\le
d_\Pi(\hat{\pi},\pi_1)
+
\sum_{i=1}^{t-1} d_\Pi(\pi_{i+1},\pi_i).
\label{eq:policy_bound}
\end{equation}

Since $\pi_1(\cdot\mid s)$ is uniform over $\mathcal A$ for all $s\in\mathcal S$, we have for any $s$,
\begin{align*}
\mathrm{KL}\!\bigl(\hat{\pi}(\cdot\mid s)\,\|\,\pi_1(\cdot\mid s)\bigr)
&=
\sum_{a\in\mathcal A}\hat{\pi}(a\mid s)
\log\frac{\hat{\pi}(a\mid s)}{1/|\mathcal A|}\\
&=
\log|\mathcal A| +  \sum_{a\in\mathcal A}\hat{\pi}(a\mid s)
\log \hat{\pi}(a\mid s)\\
&\le
\log|\mathcal A|,
\end{align*}
By Pinsker's inequality, for any $s\in\mathcal S$,
\begin{equation*}
   d_{\Pi}\!\bigl(\hat{\pi}(\cdot\mid s),\pi_1(\cdot\mid s)\bigr)
\le
\sqrt{\tfrac12\,\mathrm{KL}\!\bigl(\hat{\pi}(\cdot\mid s)\,\|\,\pi_1(\cdot\mid s)\bigr)}
\le
\sqrt{\frac{\log|\mathcal A|}{2}}. 
% \label{eq:loga}
\end{equation*}
Taking the supremum over $s$, we have
\begin{equation}
d_\Pi(\hat{\pi},\pi_1)\le \sqrt{\frac{\log|\mathcal A|}{2}}.
\label{eq:loga}
\end{equation}

By assumption, for all $i=1,\dots,t-1$,
\[
\sup_{s\in\mathcal S}
\mathrm{KL}\!\bigl(\pi_{i+1}(\cdot\mid s)\,\|\,\pi_i(\cdot\mid s)\bigr)
\le
\alpha.
\]
Applying Pinsker's inequality again, we obtain for each $i$,
{
\begin{align*}
d_\Pi(\pi_{i+1},\pi_i)
&\le
\sup_{s\in\mathcal S}
\sqrt{\tfrac12\,
\mathrm{KL}\!\bigl(\pi_{i+1}(\cdot\mid s)\,\|\,\pi_i(\cdot\mid s)\bigr)}
\le
\sqrt{\frac{\alpha}{2}}.
\end{align*}
}
Consequently,
\begin{equation}
 \sum_{i=1}^{t-1} d_\Pi(\pi_{i+1},\pi_i)
\le
(t-1)\sqrt{\frac{\alpha}{2}}
=
\sqrt{\frac{(t-1)^2\alpha}{2}}.
\label{eq:each_policy}
\end{equation}

Combining the bounds in equations \eqref{eq:policy_bound}, \eqref{eq:loga}, and \eqref{eq:each_policy}, we conclude that
\[
d_\Pi(\hat{\pi},\pi_t)
\le
\sqrt{\frac{\log|\mathcal A|}{2}}
+
\sqrt{\frac{(t-1)^2\alpha}{2}},
\]
which completes the proof.
\end{proof}

Then, we restate and prove the policy drift bound in Lemma~\ref{mlem:policy_bound}.

\renewcommand{\thelemma}{4}
\begin{lemma}[policy drift bound]
\label{lem:policy_bound}
Under Assumptions~\ref{mass:lipschitz-policy} and~\ref{mass:klpolicy}, let $\mathcal A$ be a finite action space and $\hat{\pi} \in \Pi$ be a fixed policy. Set parameter $\alpha= 4\log{|\mathcal A|}/T^2$. At time step $h\in [H]$ over $T$ episodes, we have this policy drift bound,
\begin{align*}
    \sup_{\pi \in \Pi} \left|\mathbb{E}_{s_h\sim\mathcal D_h^{\hat{\pi}}} Q_h^{*}(s_h,a_h^\pi)
    -
    \frac{1}{T}\sum_{t=1}^T \mathbb{E}_{s_h^t \sim \mathcal D_{h}^{\pi_t}}Q_h^{*}(s_h^t,a_h^\pi)\right|\le
     L_\Pi \sqrt{2\log|\mathcal A|}.
  \end{align*}
\end{lemma}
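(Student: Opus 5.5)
We bound the averaged drift term by pulling the average outside the absolute value and handling each episode separately. For fixed $\pi\in\Pi$, write
\[
\mathbb{E}_{\mathcal D_h^{\hat\pi}} Q_h^{*}(s_h,a_h^\pi)-\frac{1}{T}\sum_{t=1}^T \mathbb{E}_{\mathcal D_h^{\pi_t}} Q_h^{*}(s_h,a_h^\pi)=\frac{1}{T}\sum_{t=1}^T\Bigl(\mathbb{E}_{\mathcal D_h^{\hat\pi}} f-\mathbb{E}_{\mathcal D_h^{\pi_t}} f\Bigr),
\]
with $f=Q_h^*(\cdot,a_h^\pi)\in\mathcal Q_\Pi$. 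Applying the triangle inequality, and then the supremum over $\pi$ inside each summand, bounds the left-hand side by $\frac1T\sum_{t=1}^T \sup_{f\in\mathcal Q_\Pi}|\mathbb{E}_{\mathcal D_h^{\hat\pi}}f-\mathbb{E}_{\mathcal D_h^{\pi_t}}f|$. Assumption~\ref{mass:lipschitz-policy} then bounds each summand by $L_\Pi\, d_\Pi(\hat\pi,\pi_t)$. The problem is thereby reduced to controlling the average TV distance $\frac1T\sum_t d_\Pi(\hat\pi,\pi_t)$.

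Next we invoke Lemma~\ref{lem:policy_path_length}, which uses Assumption~\ref{mass:klpolicy} together with a uniform initial policy $\pi_1$. It gives $d_\Pi(\hat\pi,\pi_t)\le \sqrt{\log|\mathcal A|/2}+(t-1)\sqrt{\alpha/2}$. Averaging over $t$ and using $\frac1T\sum_{t=1}^T (t-1)=(T-1)/2\le T/2$ gives
\[
\frac1T\sum_{t=1}^T d_\Pi(\hat\pi,\pi_t)\le \sqrt{\frac{\log|\mathcal A|}{2}}+\frac{T}{2}\sqrt{\frac{\alpha}{2}}.
\]

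Finally, we substitute $\alpha=4\log|\mathcal A|/T^2$. Then $\frac{T}{2}\sqrt{\alpha/2}=\frac{T}{2}\cdot\frac{2}{T}\sqrt{\log|\mathcal A|/2}=\sqrt{\log|\mathcal A|/2}$, so the average is at most $2\sqrt{\log|\mathcal A|/2}=\sqrt{2\log|\mathcal A|}$. Multiplying by $L_\Pi$ yields the claim.

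The main subtlety is the first step. We need the supremum over $\pi$ of the averaged difference to be controlled by the IPM $D_{\mathcal Q_\Pi}$ of Assumption~\ref{mass:lipschitz-policy}. This requires that, for every $\pi$, the map $s\mapsto Q_h^*(s,a_h^\pi)$ lies in $\mathcal Q_\Pi$. That holds by the definition of $\mathcal Q_\Pi$, provided $a_h^\pi$ is treated as in the class definition, meaning we do not additionally integrate over the randomness of the action in a way that changes the function. Besides this, one must check that the hypothesis of Lemma~\ref{lem:policy_path_length} that $\pi_1$ is uniform is available. I would state it explicitly as the initialisation used in the proof.
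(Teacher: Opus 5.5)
Your proposal is correct and follows the paper's proof essentially step for step. You use the triangle inequality to move the supremum inside the average, apply Assumption~\ref{mass:lipschitz-policy} per episode, bound each distance with Lemma~\ref{lem:policy_path_length} together with $\frac{1}{T}\sum_{t=1}^T(t-1)\le T/2$, and substitute $\alpha=4\log|\mathcal A|/T^2$. Your remark that the uniform initial policy $\pi_1$ should be stated explicitly is well taken, since the paper's statement omits it and its proof relies on it only implicitly through Lemma~\ref{lem:policy_path_length}.
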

\renewcommand{\thelemma}{\arabic{lemma}}

\begin{proof}
This term, \(
\sup_{\pi \in \Pi} \left|\mathbb{E}_{s_h\sim\mathcal D_h^{\hat{\pi}}} Q_h^{*}(s_h,a_h^\pi)
-
\frac{1}{T}\sum_{t=1}^T \mathbb{E}_{s_h^t \sim \mathcal D_{h}^{\pi_t}}Q_h^{*}(s_h^t,a_h^\pi)\right|
\), measures the discrepancy between the state distribution $\mathcal D_h^{\hat{\pi}}$ induced by the fixed policy $\hat{\pi}$ and the state distributions $\{\mathcal D_{h}^{\pi_t}\}_{t=1}^T$ induced by the learned policy $\pi_t$ over $T$ training episodes. 

We apply the IPM definition:
\[
\sup_{\pi \in \Pi} \left|\mathbb{E}_{s_h\sim\mathcal D_h^{\hat{\pi}}} Q_h^{*}(s_h,a_h^\pi)
-
\mathbb{E}_{s_h^t \sim \mathcal D_{h}^{\pi_t}}Q_h^{*}(s_h^t,a_h^\pi)\right| \leq D_{\mathcal{Q}_\Pi}(\mathcal{D}^{\hat{\pi}}_h,\mathcal{D}^{\pi_t}_h)
    ,\quad \forall t=1,\cdots,T.
\]
According to Assumption~\ref{mass:lipschitz-policy} and by the triangle inequality, we have
  \begin{align*}
    &\sup_{\pi \in \Pi} \left|\mathbb{E}_{s_h\sim\mathcal D_h^{\hat{\pi}}} Q_h^{*}(s_h,a_h^\pi)
    -
    \frac{1}{T}\sum_{t=1}^T \mathbb{E}_{s_h^t \sim \mathcal D_{h}^{\pi_t}}Q_h^{*}(s_h^t,a_h^\pi)\right|\\
    &=
    \sup_{\pi \in \Pi} \left|\frac{1}{T}\sum_{t=1}^T \left(\mathbb{E}_{s_h\sim\mathcal D_h^{\hat{\pi}}} Q_h^{*}(s_h,a_h^\pi)
    -
     \mathbb{E}_{s_h^t \sim \mathcal D_{h}^{\pi_t}}Q_h^{*}(s_h^t,a_h^\pi)\right)\right|
    \\
    &\le
    \frac{1}{T}\sum_{t=1}^T \left(
    \sup_{\pi \in \Pi}\left|\mathbb{E}_{s_h\sim\mathcal D_h^{\hat{\pi}}} Q_h^{*}(s_h,a_h^\pi)
    -
     \mathbb{E}_{s_h^t \sim \mathcal D_{h}^{\pi_t}}Q_h^{*}(s_h^t,a_h^\pi)\right|\right)
    \\
    &\le
    \frac{1}{T}\sum_{t=1}^T
    D_{\mathcal Q_\Pi}(\mathcal{D}^{\hat{\pi}}_h,\mathcal{D}^{\pi_t}_h)
    \\
    &\le
    \frac{L_\Pi}{T}
    \sum_{t=1}^T d_\Pi(\hat{\pi},\pi_t).
  \end{align*}
We apply Lemma~\ref{lem:policy_path_length} to bound the $d_\Pi(\hat{\pi},\pi_t)$, which decomposes the distance to the fixed policy into two components: 
the discrepancy between initial policy $\pi_1$ and fixed policy $\hat{\pi}$, and the cumulative step size of policy updates, each constrained by the KL divergence. 

For some $\alpha > 0$, $\sup_{s\in\mathcal S}\mathrm{KL}\!\bigl(\pi_{t+1}(\cdot\mid s)\,\|\,\pi_t(\cdot\mid s)\bigr)
\le \alpha,\quad \forall\, t=1,\dots,T.$ According to Lemma~\ref{lem:policy_path_length}, we have:
\begin{align*}
d_\Pi(\hat{\pi},\pi_t) \leq \sqrt{\frac{\log|\mathcal A|}{2}}+\sqrt{\frac{(t-1)^2 \alpha}{2}}.
\end{align*}
Therefore,
\begin{align*}
\frac{L_\Pi}{T} \sum_{t=1}^T d_\Pi(\hat{\pi},\pi_t) 
&\leq L_\Pi\sqrt{\frac{\log|\mathcal A|}{2}}+ \frac{L_\Pi}{T}\sum_{t=2}^T \sqrt{\frac{(t-1)^2 \alpha}{2}} \\
& \leq L_\Pi\sqrt{\frac{\log|\mathcal A|}{2}}+ L_\Pi\sqrt{\frac{T^2 \alpha}{8}}.
\end{align*}

Then, we set the parameter $\alpha= 4\log{|\mathcal A|}/T^2$ and obtain:
\begin{align*}
\frac{L_\Pi}{T} \sum_{t=1}^T \sup_{s\in\mathcal S}d_\Pi(\hat{\pi}(\cdot\mid s),\pi_t(\cdot\mid s))  \leq L_\Pi \sqrt{2\log|\mathcal A|},
\end{align*}
and completing the proof.
\end{proof}

We now define the Rademacher complexity of a function class $\mathcal{F}$ under non-independent and identically distributed (non-iid) setting \cite{Bartlett2003RademacherAG,liu2022understanding}.
\begin{definition}[Rademacher complexity under non-iid setting \cite{Bartlett2003RademacherAG,liu2022understanding}]
\label{def:radcomp}
Let $\mathcal{F} \subseteq \mathbb{R}^{\mathcal{S}}$ be a function class and
$\mathbf{s}^{1:n}=(s^1,\dots,s^n)$ be independent samples drawn from distributions $\mathcal{D}^1,\cdots,\mathcal{D}^n$.
Let $\bm{\sigma}^{1:n}=(\sigma^1,\dots,\sigma^n)$ be independent Rademacher random variables.
The \emph{Rademacher complexity} of $\mathcal{F}$ is defined as
\[
\mathfrak{R}(\mathcal{F})
\triangleq
\mathbb{E}_{\mathbf{s}^{1:n}}
\mathbb{E}_{\bm{\sigma}^{1:n}}
\left[
    \sup_{f\in\mathcal F}
    \frac{1}{n} \sum_{i=1}^n \sigma^i f(s^i)
\right].
\]
\end{definition}

 We then restate the on-average generalisation bound in Lemma~\ref{mlem:rad-noniid-full}
\renewcommand{\thelemma}{5}
\begin{lemma}[on-average generalisation bound]
\label{lem:rad-noniid-full}
Let $\mathbf s^{1:T}_h =\{s_h^1,\dots,s_h^T\}$ be independent
random variables with $s^t_h \sim \mathcal D_{h}^{\pi_t}$ on a space $\mathcal S$. Define the averaged state distribution
\(
    \bar{\mathcal D}_{h} \triangleq \frac{1}{T}\sum_{t=1}^T \mathcal D_{h}^{\pi_t},
\)
and the Rademacher complexity
\(
    \mathfrak{R}_h(\mathcal Q_\Pi)
\) of value function class $\mathcal Q_\Pi$. For any $\delta\in(0,1)$, with probability at least $1-\delta/2H$, we have: 
\begin{align*}
    \sup_{\pi \in \Pi}\left[\mathbb E_{s_h\sim\bar{\mathcal{D}}_{h}}[Q_h^{*}(s_h,a_h^\pi)]
        -
        \frac{1}{T}\sum_{t=1}^T Q_h^{*}(s_h^t,a_h^\pi)\right]
        \le
    2\mathfrak{R}_h(\mathcal Q_\Pi)
    +
    \sqrt{\frac{H^2\log(2H/\delta)}{2T}}.
\end{align*}
\end{lemma}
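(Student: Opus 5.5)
The plan is the classical symmetrization-plus-concentration argument, adapted to independent but non-identically distributed samples. Fix $h$ and write $f_\pi(s)\triangleq Q_h^*(s,a_h^\pi)\in\mathcal Q_\Pi$. The rewards satisfy $r_h\in[0,1]$ over $H$ steps, so every $f\in\mathcal Q_\Pi$ takes values in $[0,H]$. By linearity of expectation, $\mathbb E_{s\sim\bar{\mathcal D}_h}f(s)=\frac1T\sum_{t=1}^T\mathbb E_{s\sim\mathcal D_h^{\pi_t}}f(s)$. Define
\[
\Phi(\mathbf s_h^{1:T})\triangleq\sup_{f\in\mathcal Q_\Pi}\Bigl[\frac1T\sum_{t=1}^T\bigl(\mathbb E_{\mathcal D_h^{\pi_t}}f-f(s_h^t)\bigr)\Bigr].
\]
The proof splits into a high-probability step (comparing $\Phi$ with $\mathbb E\Phi$) and an expectation step (bounding $\mathbb E\Phi$ by $2\mathfrak R_h(\mathcal Q_\Pi)$).

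\textbf{Step 1 (concentration).} Replacing a single coordinate $s_h^t$ by another state changes $\Phi$ by at most $H/T$, since each $f$ has range in $[0,H]$. By Assumption~\ref{mass:episode-independence} the $s_h^t$ are independent, which is all that McDiarmid's bounded-differences inequality requires; identical distribution is not needed. It therefore gives
\[
\Pr\bigl(\Phi-\mathbb E\Phi\ge\epsilon\bigr)\le\exp\bigl(-2T\epsilon^2/H^2\bigr).
\]
Setting the right-hand side equal to $\delta/(2H)$ yields $\epsilon=\sqrt{H^2\log(2H/\delta)/(2T)}$. This matches the stated deviation term and the stated failure probability $\delta/2H$, which is chosen so that a later union bound over $h\in[H]$ and both signs works out.

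\textbf{Step 2 (symmetrization).} Introduce a ghost sample $\tilde s_h^t\sim\mathcal D_h^{\pi_t}$, independent across $t$ and independent of $\mathbf s_h^{1:T}$. Then $\mathbb E_{\mathcal D_h^{\pi_t}}f=\mathbb E[f(\tilde s_h^t)]$. Pulling the supremum inside the ghost expectation (Jensen) gives
\[
\mathbb E\Phi\le\mathbb E\sup_f\frac1T\sum_{t=1}^T\bigl(f(\tilde s_h^t)-f(s_h^t)\bigr).
\]
The key observation for the non-i.i.d.\ setting is that, for each fixed $t$, the pair $(s_h^t,\tilde s_h^t)$ is exchangeable, because both have the same marginal $\mathcal D_h^{\pi_t}$. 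Swapping them independently for each $t$ therefore leaves the joint law unchanged, so we may insert Rademacher signs $\sigma^t$ without changing the expectation. Splitting the supremum then gives $2\,\mathbb E\,\mathbb E_{\bm\sigma}\sup_f\frac1T\sum_t\sigma^tf(s_h^t)=2\mathfrak R_h(\mathcal Q_\Pi)$, by Definition~\ref{def:radcomp}.

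Combining the two steps gives the claim. The step requiring the most care is the symmetrization. The swap must be carried out per index $t$, pairing each $s_h^t$ only with a ghost drawn from its own $\mathcal D_h^{\pi_t}$; pairing with draws from the averaged distribution $\bar{\mathcal D}_h$ instead would break exchangeability. Everything else is routine, including checking the $[0,H]$ range that yields the $H/T$ bounded differences.
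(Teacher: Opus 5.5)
Your proposal is correct and follows the paper's proof essentially step for step: a ghost sample $\tilde s_h^t\sim\mathcal D_h^{\pi_t}$, then Jensen and Rademacher symmetrization to get $\mathbb E\Phi\le 2\mathfrak R_h(\mathcal Q_\Pi)$, then McDiarmid with bounded differences $c_t=H/T$ and the right-hand side set to $\delta/(2H)$. Your per-index exchangeability argument is exactly the justification the paper compresses into ``by symmetry of the Rademacher variables,'' and the order in which you do concentration and symmetrization makes no difference.
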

\renewcommand{\thelemma}{\arabic{lemma}}

\begin{proof}
The proof follows the classical symmetrisation techniques. 
We define
\[
    \Phi(s_h^1,\dots,s_h^T)
    \triangleq
    \sup_{f\in\mathcal Q_\Pi}
    \left\{
        \mathbb E_{s_h\sim\bar{\mathcal{D}}_{h}}[f(s_h)]
        -
        \frac{1}{T}\sum_{t=1}^T f(s^t_h)
    \right\}.
\]
Let $\tilde s_h^1,\dots,\tilde s_h^T$ be an independent ghost sample with $\tilde s^t_h\sim\mathcal D_{h}^{\pi_t}$.
Since
\[
    \mathbb E_{s_h \sim\bar{\mathcal D}_{h}}[f(s_h)]
    =
        \frac{1}{T}\sum_{t=1}^T \left[\mathbb E_{s_h^t \sim \mathcal{D}_h^{\pi_t}}f(s^t_h)
    \right]
    =
    \mathbb E_{\tilde s_h^1,\dots,\tilde s_h^T}\!\left[
        \frac{1}{T}\sum_{t=1}^T f(\tilde s^t_h)
    \right],
\]
we calculate the expectation of $\Phi$
\begin{align*}
\mathbb E[\Phi]
& =
\mathbb E_{s_h^1,\dots,s_h^T}\left[
\sup_{f\in \mathcal Q_\Pi}\left(
\mathbb E_{s_h \sim \bar{\mathcal D}_{h}}
    \left[f(s_h)\right]
    -
    \frac{1}{T}\sum_{t=1}^T f(s^t_h)
\right)\right]\\
&=
\mathbb E_{s_h^1,\dots,s_h^T}\left[
\sup_{f\in \mathcal Q_\Pi}\left(
\mathbb E_{\tilde s_h^1,\dots,\tilde s_h^T}
    \left[\frac{1}{T}\sum_{t=1}^T f(\tilde s^t_h)\right]
    -
    \frac{1}{T}\sum_{t=1}^T f(s^t_h)
\right)\right]\\
&=
\mathbb E_{s_h^1,\dots,s_h^T}\left[
\sup_{f\in \mathcal Q_\Pi}\left(
\mathbb E_{\tilde s_h^1,\dots,\tilde s_h^T}\left[
    \frac{1}{T}\sum_{t=1}^T f(\tilde s^t_h)
    -
    \frac{1}{T}\sum_{t=1}^T f(s^t_h)\right]
\right)\right],
\end{align*}

Jensen's inequality gives
\[
\mathbb E[\Phi]
\le
\mathbb E_{s_h^1,\dots,s_h^T}\left[\mathbb E_{\tilde s_h^1,\dots,\tilde s_h^T}
\left[
    \sup_{f\in \mathcal Q_\Pi}
    \frac{1}{T}\sum_{t=1}^T (f(\tilde s^t_h)-f( s^t_h))
\right]\right].
\]
Introduce independent Rademacher variables $\bm \sigma_h^{1:T}=\{\sigma_h^1,\dots,\sigma_h^T\}$ where $\sigma_h^{1:T} \in \{-1,1\}^T$ with probability of $1/2^T$.
By symmetry of the Rademacher variables,
\begin{align}
\label{eq:rade_phi}
&\mathbb E_{s_h^1,\dots,s_h^T}\left[\mathbb E_{\tilde s_h^1,\dots,\tilde s_h^T}
\left[
    \sup_{f\in \mathcal Q_\Pi}
    \frac{1}{T}\sum_{t=1}^T (f(\tilde s^t_h)-f(s^t_h))
\right]\right] \nonumber \\ 
& =
\mathbb E_{s_h^1,\dots,s_h^T}\left[\mathbb E_{\tilde s_h^1,\dots,\tilde s_h^T}
\left[ \mathbb E_{\bm \sigma_h^{1:T}} \left[
    \sup_{f\in \mathcal Q_\Pi}\left(
    \frac{1}{T}\sum_{t=1}^T \sigma_h^t(f(\tilde s^t_h)-f(s^t_h))\right)\right]
\right]\right] \nonumber\\
& \le
2\,\mathbb E_{\mathbf s_{h}^{1:T}}\mathbb E_{\bm{\sigma}_h^{1:T}}
\left[
    \sup_{f\in \mathcal Q_\Pi}
    \frac{1}{T}\sum_{t=1}^T \sigma_h^t f(s^t_h)
\right] \nonumber\\
&= 2\mathfrak R_h(\mathcal Q_\Pi).
\end{align}

Then, we apply McDiarmid's inequality. In EMDP with bounded reward $0\leq r_h \leq 1$, the value function satisfies that $|f(s)-f(s')|\leq H$ for any $s,s' \in \mathcal{S}$. If two samples $\mathbf s_h^{1:T}$ and $\mathbf s_h^{\prime 1:T}$ differ only in the $t$-th episode, then
\[
|\Phi(\mathbf s_h^{1:T})-\Phi(\mathbf s_h^{\prime 1:T})|
\le
\sup_{f\in\mathcal Q_\Pi}
\frac{|f(s_h^t)-f(s_h^{\prime t})|}{T}
\le
\frac{H}{T}.
\]
Hence $\Phi(s_h^1,\dots,s_h^T)$ satisfies bounded differences with
$c_t = H/T$. 

Therefore, we have
\[
\Pr\bigl(\Phi - \mathbb E[\Phi] \ge \epsilon\bigr)
\le
\exp\!\left(
    -\frac{2\epsilon^2}{\sum_{t=1}^T c_t^2}
\right)
=
\exp\!\left(
    -\frac{2T\epsilon^2}{H^2}
\right).
\]
Setting the Right-Hand Side (RHS) equal to $\delta/2H$ obtains
\[
\epsilon
=
\sqrt{\frac{H^2\log(2H/\delta)}{2T}}.
\]
Combining this bound with $\mathbb E[\Phi]\le 2\mathfrak R_h(\mathcal Q_\Pi)$
completes the proof.
\end{proof}

We combine the upper bound on policy drift in Lemma~\ref{lem:policy_bound} and the above bound in Lemma~\ref{lem:rad-noniid-full} to prove the restated Theorem~\ref{mthm:multi-source-rl}.

\renewcommand{\thetheorem}{2}
\begin{theorem}[intrinsic rational gap bound]
  \label{thm:multi-source-rl}
  Under Assumptions~\ref{mass:lipschitz-env},~\ref{mass:lipschitz-policy} and~\ref{mass:klpolicy}, let $\hat{\mathfrak{R}}_h(\mathcal{Q}_\Pi)$ denote the empirical Rademacher complexity of value function class $\mathcal{Q}_\Pi$ with a sequence of states $\mathbf s_h^{1:T}=\{s_h^t\}_{t=1}^T$ at time step $h\in [H]$. For any $\delta\in(0,1)$, with probability at least $1-\delta$, the upper bound on intrinsic rational gap is:
  \begin{align*}
      \sum_{h=1}^H\sup_{\pi \in \Pi} \left| \mathbb{E}_{s_h \sim \mathcal D_h^{\hat{\pi}}} Q_h^*(s_h,a_h^{\pi})
      -
      \frac{1}{T}\sum_{t=1}^T Q_h^*(s_h^t,a_h^{\pi}) \right| 
 \leq 
    L_\Pi H \sqrt{2\log|\mathcal A|}+2\sum_{h=1}^H\hat{\mathfrak{R}}_h(\mathcal{Q}_\Pi)+3H^2\sqrt{\frac{\log(4H/\delta)}{2T}}.
  \end{align*}
\end{theorem}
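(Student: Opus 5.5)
For each fixed step $h\in[H]$ I will split the intrinsic gap exactly as in the proof sketch following the statement, into Term~I (policy drift) and Term~II (on-average generalisation). Term~I is handled by Lemma~\ref{lem:policy_bound}: under Assumptions~\ref{mass:lipschitz-policy} and~\ref{mass:klpolicy} with $\alpha=4\log|\mathcal A|/T^2$, it is at most $L_\Pi\sqrt{2\log|\mathcal A|}$ deterministically. Summing over $h$ gives the first term $L_\Pi H\sqrt{2\log|\mathcal A|}$.

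For Term~II, note that $\frac1T\sum_t \mathbb E_{s_h^t\sim\mathcal D_h^{\pi_t}} f = \mathbb E_{\bar{\mathcal D}_h} f$. Term~II is therefore the two-sided uniform deviation $\sup_{f\in\mathcal Q_\Pi}\bigl|\mathbb E_{\bar{\mathcal D}_h}f-\frac1T\sum_t f(s_h^t)\bigr|$. Lemma~\ref{lem:rad-noniid-full} controls one side with probability $1-\delta/2H$. I will apply the same symmetrisation and McDiarmid argument to the function class $-\mathcal Q_\Pi$, whose Rademacher complexity is identical by the symmetry of $\sigma$. This controls the other side with the same bound and failure probability $\delta/2H$. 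A union bound makes the absolute value at most $2\mathfrak R_h(\mathcal Q_\Pi)+\sqrt{H^2\log(2H/\delta)/(2T)}$ with probability $1-\delta/H$.

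Next I will replace the expected complexity $\mathfrak R_h$ by the empirical $\hat{\mathfrak R}_h$. The map $\mathbf s_h^{1:T}\mapsto\hat{\mathfrak R}_h(\mathcal Q_\Pi,\mathbf s_h^{1:T})$ has bounded differences $H/T$, because the values lie in a range of width $H$; this holds under Assumption~\ref{mass:episode-independence}, which makes the $s_h^t$ independent. McDiarmid then gives $\mathfrak R_h\le\hat{\mathfrak R}_h+\sqrt{H^2\log(\cdot/\delta)/(2T)}$. Doubling this yields $2\hat{\mathfrak R}_h$ plus two more concentration terms, so each $h$ carries three concentration terms in total.

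The main obstacle is making the failure probabilities and logarithmic constants match the stated form. I plan to allot $\delta/4H$ to each of the two tail events of $\Phi$ and $\delta/2H$ to the Rademacher concentration, so that the total failure probability per $h$ is $\delta/H$. I will then relax every log factor up to $\log(4H/\delta)$. Each step therefore contributes at most $3H\sqrt{\log(4H/\delta)/(2T)}$. A final union bound over the $H$ steps gives overall probability $1-\delta$. Summing over $h$ produces $2\sum_h\hat{\mathfrak R}_h(\mathcal Q_\Pi)+3H^2\sqrt{\log(4H/\delta)/(2T)}$, which together with the Term~I contribution is the claimed bound.
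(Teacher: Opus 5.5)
Your proposal is correct and follows the paper's proof essentially step for step. It uses the same Term~I/Term~II split, Lemma~\ref{lem:policy_bound} for the deterministic drift term, and Lemma~\ref{lem:rad-noniid-full} applied to $\mathcal Q_\Pi$ and $-\mathcal Q_\Pi$ with failure probability $\delta/4H$ each. It then applies McDiarmid with $\delta/2H$ to pass from $\mathfrak R_h$ to $\hat{\mathfrak R}_h$, takes union bounds over these events and over $h\in[H]$, and relaxes all logarithms to $\log(4H/\delta)$, all exactly as the paper does.

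As in the paper, citing Lemma~\ref{lem:policy_bound} silently brings in its extra hypotheses, namely a uniform $\pi_1$ and $\alpha=4\log|\mathcal A|/T^2$, which the theorem statement does not list.
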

\renewcommand{\thetheorem}{\arabic{theorem}}

\begin{proof}
Taking the supremum over $f \in \mathcal Q_\Pi$ and using the triangle inequality,
we obtain
\begin{align}
\sup_{f \in \mathcal Q_\Pi} \left| \mathbb{E}_{s_h \sim \mathcal D_h^{\hat{\pi}}} f(s_h)
      -
      \frac{1}{T}\sum_{t=1}^T f(s^t_h) \right|
\le&
\underbrace{\sup_{f \in \mathcal Q_\Pi} \left| \mathbb{E}_{s_h \sim \mathcal D_h^{\hat{\pi}}} f(s_h)
      -
      \frac{1}{T}\sum_{t=1}^T\mathbb{E}_{s^t_h \sim \mathcal D_{h}^{\pi_t}} f(s^t_h) \right|}_{\text{Term I}} \nonumber\\
&+
   \underbrace{\sup_{f \in \mathcal Q_\Pi} \left| \frac{1}{T}\sum_{t=1}^T \left(\mathbb{E}_{s^t_h \sim \mathcal D_{h}^{\pi_t}} f(s^t_h)
    - f(s^t_h)\right) \right|}_{\text{Term II}}.
\label{eq:AB-decomp}
\end{align}

{According to Lemma~\ref{lem:policy_bound}}, we have
\begin{equation}
\label{eq:A-bound}
\text{Term}\; \mathrm I
\;\le\;
L_\Pi\sqrt{2\log|\mathcal A|}.
\end{equation}

Since $\mathfrak R_h(-\mathcal Q_\Pi)=\mathfrak R_h(\mathcal Q_\Pi)$, a union
bound over the two directions of one-sided bound in Lemma~\ref{lem:rad-noniid-full} gives that for any $\delta\in(0,1)$, with probability at least $1-\delta/2H$,
\begin{equation}
\label{eq:B-pop-R}
\text{Term}\; \mathrm{II} \;\le\;
2\,\mathfrak{R}_h(\mathcal Q_\Pi)
+
\sqrt{\frac{H^2 \log(4H/\delta)}{2T}},
\end{equation}
where the Rademacher complexity in Definition~\ref{def:radcomp}  is
\[
\mathfrak{R}_h(\mathcal Q_\Pi)
\triangleq
\mathbb{E}_{\mathbf{s}_{h}^{1:T}}\left[
\mathbb{E}_{\bm \sigma_h^{1:T}}
\left[
\sup_{f \in \mathcal Q_\Pi}
\frac{1}{T}\sum_{t=1}^T \sigma_h^t f(s^t_h)
\right]
\right]
.
\]

We now replace it with the empirical Rademacher
complexity with a sequence of states $\mathbf s_h^{1:T} = \{s_h^t\}^T_{t=1}$ in equation (\ref{eq:B-pop-R}).
\[
\hat{\mathfrak{R}}(\mathcal Q_\Pi, \mathbf s_h^{1:T})
\triangleq
\mathbb{E}_{\bm \sigma_h^{1:T}}
\left[
\sup_{f \in \mathcal Q_\Pi}
\frac{1}{T}\sum_{t=1}^T \sigma_h^t f(s^t_h)
\right].
\]
By definition,
\(
\mathfrak{R}_h(\mathcal Q_\Pi)
=
\mathbb E_{\mathbf s_{h}^{1:T}}\big[\hat{\mathfrak{R}}_h(\mathcal Q_\Pi, \mathbf s_h^{1:T})\big].
\)

If two samples $\mathbf s_h^{1:T}$ and $\mathbf s_h^{\prime 1:T}$ differ only in the $t$-th episode, then
\[
|\hat{\mathfrak{R}}(\mathcal Q_\Pi,\mathbf s_h^{1:T})-\hat{\mathfrak{R}}(\mathcal Q_\Pi,\mathbf s_h^{\prime 1:T})|
\le
\sup_{f\in\mathcal Q_\Pi}
\frac{|f(s_h^t)-f(s_h^{\prime t})|}{T}
\le
\frac{H}{T}.
\]

Hence, by McDiarmid's inequality, for any $\epsilon>0$,
\[
\Pr\Big(
\mathbb E_{\mathbf{s}_h^{1:T}}[\hat{\mathfrak{R}}(\mathcal Q_\Pi, \mathbf s_h^{1:T})]
-
\hat{\mathfrak{R}}(\mathcal Q_\Pi,\mathbf s_h^{1:T})
\ge \epsilon
\Big)
\le
\exp\!\left(-\frac{2T\epsilon^2}{H^2}\right).
\]
Setting
\(
\epsilon
=
H\sqrt{\log(2H/\delta)/2T}
\)
and recalling that
\(
\mathfrak{R}_h(\mathcal Q_\Pi)
=
\mathbb E_{\mathbf s^{1:T}_{h}}\big[\hat{\mathfrak{R}}(\mathcal Q_\Pi, \mathbf s_h^{1:T})\big]
\),
we obtain that with probability at least $1-\delta/2H$,
\begin{equation}
\label{eq:R-vs-Rhat}
\mathfrak{R}_h(\mathcal Q_\Pi) -\hat{\mathfrak{R}}(\mathcal Q_\Pi, \mathbf s_h^{1:T})
\leq
\sqrt{\frac{H^2\log(2H/\delta)}{2T}}.
\end{equation}

Combining equation \eqref{eq:B-pop-R} and equation \eqref{eq:R-vs-Rhat}, and using a union bound over the two probabilistic events and $\log(2H/\delta)\le \log(4H/\delta)$, we obtain that with
probability at least $1-\delta/H$,
\begin{equation}
\label{eq:B-final}
\text{Term}\; \mathrm{II} \;\le\;
2\,\hat{\mathfrak{R}}(\mathcal Q_\Pi,\mathbf s_h^{1:T})
+
3\sqrt{\frac{H^2\log(4H/\delta)}{2T}}.
\end{equation}

Substituting equation \eqref{eq:A-bound} and equation \eqref{eq:B-final} into the decomposition in equation
\eqref{eq:AB-decomp} and using a union bound over all $h\in[H]$, let $\hat{\mathfrak{R}}_h(\mathcal Q_\Pi)=\hat{\mathfrak{R}}(\mathcal Q_\Pi,\mathbf s_h^{1:T})$, we conclude that with probability at least $1-\delta$, 
\begin{align*}
\sum_{h=1}^H\sup_{f \in \mathcal Q_\Pi}
\left| \mathbb{E}_{s_h^t \sim \mathcal D_h^{\hat{\pi}}} f(s_h^t)
      -
      \frac{1}{T}\sum_{t=1}^T f(s^t_h) \right|
&\le
\sum_{h=1}^H\left[
L_\Pi\sqrt{2\log|\mathcal A|}
+2\,\hat{\mathfrak{R}}_{h}(\mathcal Q_\Pi)
+3H\sqrt{\frac{\log(4H/\delta)}{2T}}
\right]\\
&\le L_\Pi H\sqrt{2\log|\mathcal A|}+2\sum_{h=1}^H\hat{\mathfrak{R}}_h(\mathcal{Q}_\Pi)+3H^2\sqrt{\frac{\log(4H/\delta)}{2T}}.
\end{align*}
We complete the proof.

\end{proof}

\section{Proof of Corollary \ref{mlem:exdqn} and Corollary \ref{mcor:extension}}
\label{sec:cor}
This section restates Corollary \ref{lem:exdqn} and Corollary \ref{cor:extension} and presents their proofs.
\setcounter{theorem}{3}
\begin{corollary}[rational value metric bound]
\label{lem:exdqn}
Assuming that the learned value function $\hat{Q}_h^T$ approximates optimal value function $Q_h^*$ with a bounded $\ell_\infty$ error of $\epsilon$, $\|Q_h^*-\hat{Q}_h^T\|_{\infty}\leq \epsilon$ for all $h \in [H]$. For any $\delta \in (0,1)$ with probability at least $1-\delta$, its rational risk gap of policy $\hat{\pi} \in \Pi$ over $T$ episodes of horizon $H$ can be bounded by:
\begin{align*}
\bigl|\mathcal R(\hat{\pi})-\hat{\mathcal{R}}_{\mathrm{ALG}}(\hat{\pi})\bigr|\leq&
\beta_1\cdot W_1(p_0^\dagger,p_0)+\beta_2\cdot W_1(p^\dagger, p)
+4\sum_{h=1}^H\hat{\mathfrak{R}}_h(\mathcal{Q}_\Pi) \\&+ 6H^2\sqrt{\frac{\log(4H/\delta)}{2T}}+
2 H\epsilon+2L_\Pi H \sqrt{2\log|\mathcal A|},
\end{align*}
where $\beta_1=2L_sH$ and $\beta_2=2H^2L_s(L_p+1)$.
\end{corollary}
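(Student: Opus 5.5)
The plan is to reduce the claim to Theorem~\ref{mthm:main_theory} by a triangle inequality, then control the discrepancy between the empirical rational value risk $\hat{\mathcal R}(\hat\pi)$ and the rational value metric $\hat{\mathcal R}_{\mathrm{ALG}}(\hat\pi)$ using the uniform approximation assumption. Concretely, I will write
\[
\bigl|\mathcal R(\hat\pi)-\hat{\mathcal R}_{\mathrm{ALG}}(\hat\pi)\bigr|
\le
\bigl|\mathcal R(\hat\pi)-\hat{\mathcal R}(\hat\pi)\bigr|
+
\bigl|\hat{\mathcal R}(\hat\pi)-\hat{\mathcal R}_{\mathrm{ALG}}(\hat\pi)\bigr|.
\]
The first term is bounded, on an event of probability at least $1-\delta$, by the right-hand side of Theorem~\ref{mthm:main_theory}. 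This already produces every term in the corollary except $2H\epsilon$. No further union bound is needed, because the second term is deterministic.

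For the second term, I expand both definitions and compare them step by step and episode by episode. For each $(t,h)$ the summand difference is
\[
\bigl[Q_h^*(s_h^t,a_h^\circ)-\max_{a}\hat Q_h^T(s_h^t,a)\bigr]
-
\bigl[Q_h^*(s_h^t,a_h^{\hat\pi})-\hat Q_h^T(s_h^t,a_h^{\hat\pi})\bigr].
\]
The second bracket is at most $\epsilon$ in absolute value by $\|Q_h^*-\hat Q_h^T\|_\infty\le\epsilon$.

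For the first bracket, I interpret the rational action $a_h^\circ$ in the empirical risk as the maximiser of $Q_h^*(s_h^t,\cdot)$. This reading is natural because $\pi^\circ$ maximises the expected value, so it can be taken greedy. I then use the standard fact $|\max_a Q_h^*(s,a)-\max_a\hat Q_h^T(s,a)|\le\max_a|Q_h^*(s,a)-\hat Q_h^T(s,a)|\le\epsilon$. Each summand therefore contributes at most $2\epsilon$. Averaging over $T$ and summing over $h\in[H]$ gives $|\hat{\mathcal R}(\hat\pi)-\hat{\mathcal R}_{\mathrm{ALG}}(\hat\pi)|\le 2H\epsilon$, and adding this to the first term yields the stated bound.

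The main obstacle is the identification of $Q_h^*(s,a_h^\circ)$ with $\max_a Q_h^*(s,a)$. Definition~\ref{def:atrvl} uses $Q_h^*$, whereas $\pi^\circ$ is defined through $Q_h^{*,\dagger}$. If $a_h^\circ$ is only $\dagger$-optimal, the max-difference argument does not apply directly.

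To handle this, I would first use the relation $Q^*_h(s,a^\circ_h)\le\max_aQ_h^*(s,a)$ for the one-sided direction. For the reverse direction, I would either adopt the convention that in training $a_h^\circ$ is greedy for $Q_h^*$, or absorb the mismatch via Lemma~\ref{mlem:trans_error}, which would add an $O(H^2L_sW_1(p^\dagger,p))$ term. Since no such extra term appears in the statement, the intended reading must be the former, so I would state that convention explicitly. Alternatively, the mismatch could be bounded in a decomposition parallel to Lemma~\ref{lem:decomp-etrvl-atrvl}, where only the $a_h^\pi$-terms are compared, so the $a_h^\circ$ terms never need a separate comparison.
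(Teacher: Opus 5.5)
Your skeleton is the paper's: a triangle inequality through $\hat{\mathcal R}(\hat\pi)$, Theorem~\ref{mthm:main_theory} for the first term, and a deterministic $2H\epsilon$ bound for the second, with no extra union bound. The two arguments part ways at the $a^\circ_h$ issue you flagged, and your suspicion is justified. The paper never identifies $Q^*_h(s,a^\circ_h)$ with $\max_aQ^*_h(s,a)$. Instead it inserts the nonnegative quantity $\max_a\hat Q^T_h(s^t_h,a)-\hat Q^T_h(s^t_h,a^\circ_h)$ inside the absolute value, so each summand becomes $(Q^*_h-\hat Q^T_h)(s^t_h,a^\circ_h)-(Q^*_h-\hat Q^T_h)(s^t_h,a^{\hat\pi}_h)$, which is at most $2\epsilon$. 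That step is invalid, because $|x|\le|x+y|$ fails for $x<0\le y$. Take $\hat Q^T_h=Q^*_h$, so $\epsilon=0$. Then $\hat{\mathcal R}(\hat\pi)-\hat{\mathcal R}_{\mathrm{ALG}}(\hat\pi)=-\frac1T\sum_t\sum_h\bigl[\max_aQ^*_h(s^t_h,a)-Q^*_h(s^t_h,a^\circ_h)\bigr]$. This is nonzero whenever the $\dagger$-greedy action fails to be $Q^*_h$-greedy at a visited state, yet the paper's chain bounds it by $0$. So the paper's proof has a hole exactly where you located one.

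Your resolution, however, proves a different statement. Declaring $a^\circ_h$ to be $Q^*_h$-greedy redefines $\hat{\mathcal R}$. Your inference ``no extra term appears, so the intended reading must be the former'' is also not warranted, because the Lemma~\ref{mlem:trans_error} route you set aside fits inside the stated constants. Your displayed summand difference is at most $\bigl[\max_aQ^*_h(s^t_h,a)-Q^*_h(s^t_h,a^\circ_h)\bigr]+2\epsilon$ in absolute value. Since $a^\circ_h$ maximises $Q^{*,\dagger}_h(s^t_h,\cdot)$, the uniform bound \eqref{eq:induction-claim} established inside the proof of Lemma~\ref{mlem:trans_error} gives $\max_aQ^*_h(s,a)-Q^*_h(s,a^\circ_h)\le 2(H-h)L_sW_1(p^\dagger,p)$. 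Hence $|\hat{\mathcal R}(\hat\pi)-\hat{\mathcal R}_{\mathrm{ALG}}(\hat\pi)|\le 2H\epsilon+H(H-1)L_sW_1(p^\dagger,p)$. The proof of Theorem~\ref{mthm:bias-unified} actually produces only $\tfrac{H(H-1)}{2}L_s(L_p+1)W_1(p^\dagger,p)$ for the transition part. After the factor $2$ of Lemma~\ref{lem:decomp-etrvl-atrvl}, the term $\beta_2W_1(p^\dagger,p)$ therefore carries slack $H(H+1)L_s(L_p+1)W_1(p^\dagger,p)\ge H(H-1)L_sW_1(p^\dagger,p)$. So the literal corollary holds, provided you open up Theorem~\ref{mthm:main_theory} rather than cite its final statement.

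Your third alternative can also be made rigorous, with exactly the stated constants. Decompose $\mathcal R(\hat\pi)-\hat{\mathcal R}_{\mathrm{ALG}}(\hat\pi)$ directly in the manner of Lemma~\ref{lem:decomp-etrvl-atrvl} and write $Q^{*,\dagger}_h(s,a^\circ_h)=\max_aQ^{*,\dagger}_h(s,a)$. Then compare $\max_aQ^{*,\dagger}_h$ with $\max_aQ^*_h$ through $|\max_af-\max_ag|\le\sup_a|f-g|$ and \eqref{eq:induction-claim}; this costs only what Theorem~\ref{mthm:bias-unified} already charges per step. The route needs the $Q^*_h$-greedy selector to belong to $\Pi$, so that $\max_aQ^*_h(\cdot,a)\in\mathcal Q_\Pi$ is covered by the intrinsic bound, analogous to the membership $\pi^\circ\in\Pi$ used in Lemma~\ref{lem:decomp-etrvl-atrvl}. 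As you phrased it (``the $a^\circ_h$ terms never need a separate comparison''), though, it does not work. There $\max_a\hat Q^T_h$ is evaluated at a different action than $Q^*_h(\cdot,a^\circ_h)$, which is precisely the pairing the paper's proof gets wrong.
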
}
\begin{proof}
By adding and subtracting the empirical risk computed with $Q_h^*$, we have
\begin{align}
\left|\mathcal R(\hat\pi)-\hat{\mathcal R}_{\mathrm{ALG}}(\hat\pi)\right|
\le
\underbrace{\left|\mathcal R(\hat\pi)-\hat{\mathcal R}(\hat\pi)\right|}_{\text{Term I}}
+
\underbrace{\left|\hat{\mathcal R}(\hat\pi)-\hat{\mathcal R}_{\mathrm{ALG}}(\hat\pi)\right|}_{\text{Term II}}.
\label{eq:rvm}
\end{align}
By Theorem~\ref{mthm:main_theory}, for any $\delta \in(0,1)$, with probability at least $1-\delta$,
\[
\left|\mathcal R(\hat\pi)-\hat{\mathcal R}(\hat\pi)\right|
\le
\beta_1 W_1(p_0^\dagger,p_0)
+
\beta_2 W_1(p^\dagger,p)
+
4\sum_{h=1}^H\hat{\mathfrak R}_h(\mathcal Q_\Pi)
+
6H^2\sqrt{\frac{\log(4H/\delta)}{2T}}
+
2L_\Pi H\sqrt{2\log|\mathcal A|}.
\]
It remains to control the approximation error. Since
$\|Q_h^*-\hat Q_h^T\|_\infty\le\epsilon$ for all $h\in[H]$, we add $\max_{a\in\mathcal{A}} \hat{Q}^T_h(s_h^t,a)-\hat{Q}^T_h(s_h^t,a^\circ)\geq0$ in Term II of equation \eqref{eq:rvm},
\begin{align*}
&\left|\hat{\mathcal R}(\hat\pi)-\hat{\mathcal R}_{\mathrm{ALG}}(\hat\pi)\right|\\
&=\left|
\frac1T\sum_{t=1}^T\sum_{h=1}^H\left[
Q_h^*(s_h^t,a_h^\circ)
-
Q_h^*(s_h^t,a_h^{\hat{\pi}})
-
\max_{a\in\mathcal{A}}\hat Q_h^T(s_h^t,a)
+
\hat Q_h^T(s_h^t,a_h^{\hat{\pi}})
\right]\right|
\\
&\le\left|
\frac1T\sum_{t=1}^T\sum_{h=1}^H\left[
Q_h^*(s_h^t,a_h^\circ)
-
Q_h^*(s_h^t,a_h^{\hat{\pi}})
-
\max_{a\in\mathcal{A}}\hat Q_h^T(s_h^t,a)
+
\hat Q_h^T(s_h^t,a_h^{\hat{\pi}})+\max_{a\in\mathcal{A}}\hat Q_h^T(s_h^t,a)
-
\hat Q_h^T(s_h^t,a_h^{\circ})
\right]\right|\\
&\le
2\sum_{h=1}^H
\sup_{\pi\in\Pi}
\left|
\frac1T\sum_{t=1}^T
Q_h^*(s_h^t,a_h^\pi)
-
\frac1T\sum_{t=1}^T
\hat Q_h^T(s_h^t,a_h^\pi)
\right|\\
&\le
2\sum_{h=1}^H
\sup_{\pi\in\Pi}
\frac1T\sum_{t=1}^T
\left|
Q_h^*(s_h^t,a_h^\pi)
-
\hat Q_h^T(s_h^t,a_h^\pi)
\right|\\
&\le
2H\epsilon.
\end{align*}
Combining the two bounds gives the result.
\end{proof}
{
\setcounter{theorem}{4}
\begin{corollary}[rational risk gap bound under reward shift]
\label{cor:extension}
Under the Assumption~\ref{ass:rsb} and the condition of Theorem~\ref{mthm:main_theory}, Suppose $\mathcal{R}'(\pi)$ denotes the expected rational value risk under $r'_h(s,a)$, the rational risk gap $\left|\mathcal{R}'(\pi)-\hat{\mathcal{R}}(\pi)\right|$ of policy $\pi \in \Pi$ over a trajectory of horizon $H$ can be decomposed as follows,
\begin{align*}
\bigl|\mathcal R'(\hat{\pi})-\hat{\mathcal{R}}(\hat{\pi})\bigr|\leq&
\beta_1\cdot W_1(p_0^\dagger,p_0)+\beta_2\cdot W_1(p^\dagger, p)+2L_\Pi H\cdot \sqrt{2\log|\mathcal A|}
\\&+4\sum_{h=1}^H\hat{\mathfrak{R}}_h(\mathcal{Q}_\Pi)+6H^2\sqrt{\frac{\log(4H/\delta)}{2T}}+
H(H+1)\varphi,
\end{align*}
where $\beta_1=2L_sH$ and $\beta_2=2H^2L_s(L_p+1)$.
\end{corollary}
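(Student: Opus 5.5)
The plan mirrors the proof of Corollary~\ref{lem:exdqn}: insert the expected rational value risk under the original reward and apply the triangle inequality,
\[
\bigl|\mathcal R'(\hat\pi)-\hat{\mathcal R}(\hat\pi)\bigr|
\le
\bigl|\mathcal R(\hat\pi)-\hat{\mathcal R}(\hat\pi)\bigr|
+\bigl|\mathcal R'(\hat\pi)-\mathcal R(\hat\pi)\bigr|.
\]
The first term is controlled directly by Theorem~\ref{mthm:main_theory}. With probability at least $1-\delta$, it is bounded by the Wasserstein terms with coefficients $\beta_1,\beta_2$, plus $2L_\Pi H\sqrt{2\log|\mathcal A|}$, $4\sum_h\hat{\mathfrak R}_h(\mathcal Q_\Pi)$, and the concentration term $6H^2\sqrt{\log(4H/\delta)/(2T)}$. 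What remains is to show that the second, deterministic term is at most $H(H+1)\varphi$.

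For the second term, write $Q_h^{*,\dagger\prime}$ for the optimal action-value function in deployment under reward $r'$. The key step is a sup-norm perturbation bound, proved by backward induction in the same way as Lemma~\ref{lem:trans_error}. The induction claim is
\[
\sup_{s,a}\bigl|Q_h^{*,\dagger\prime}(s,a)-Q_h^{*,\dagger}(s,a)\bigr|\le (H-h+1)\varphi .
\]
\begin{itemize}
\item The base case $h=H$ gives $|r'_H-r_H|\le\varphi$, since $V_{H+1}\equiv0$.
\item For the inductive step, subtract the two Bellman optimality equations, which share the kernel $p^\dagger$. This leaves $|r'_h-r_h|\le\varphi$ plus $\int (V'_{h+1}-V_{h+1})\,dp^\dagger$.
\item The integral term is bounded by $\sup_{s}|\max_a Q'_{h+1}-\max_a Q_{h+1}|\le \sup_{s,a}|Q'_{h+1}-Q_{h+1}|$, because $\max$ is $1$-Lipschitz in sup norm.
\end{itemize}

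Next, each summand of $\mathcal R'(\hat\pi)-\mathcal R(\hat\pi)$ is $[Q'(s_h,a^\circ_h)-Q(s_h,a^\circ_h)]-[Q'(s_h,a^{\hat\pi}_h)-Q(s_h,a^{\hat\pi}_h)]$ averaged over $\mathcal D_h^{\hat\pi,\dagger}$. Each bracket is bounded by $(H-h+1)\varphi$, so a single step contributes at most $2(H-h+1)\varphi$. Summing over $h$ gives $2\sum_{h=1}^H(H-h+1)\varphi=H(H+1)\varphi$, which matches the target.

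The main obstacle is a convention issue. It concerns whether the perfectly rational action $a^\circ_h$ is defined relative to $Q'$ or $Q$, and whether the deployment state distribution changes with the reward. The state distribution does not change, because $\mathcal D_h^{\hat\pi,\dagger}$ depends only on $\hat\pi$, $p^\dagger$ and $p_0^\dagger$.

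\begin{itemize}
\item If $a^\circ_h$ is kept fixed across both risks, the bound above applies directly.
\item If $a^\circ_h$ is re-optimised under $r'$, I would compare instead the quantities $\max_a Q'(s,a)$ and $\max_a Q(s,a)$. These again differ by at most $(H-h+1)\varphi$ by $1$-Lipschitzness of $\max$, so the same factor of $2$ arises.
\end{itemize}

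Either way the constant $H(H+1)$ is preserved. Adding the two pieces completes the proof, with the probability statement inherited entirely from Theorem~\ref{mthm:main_theory}.
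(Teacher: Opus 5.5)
Your proposal is correct and follows essentially the same route as the paper. The paper also uses a backward induction to get $\sup_{s,a}|Q_h^{\prime,*,\dagger}-Q_h^{*,\dagger}|\le (H-h+1)\varphi$, takes a factor $2$ from the $a^\circ_h$ and $a^{\hat\pi}_h$ terms, sums to $H(H+1)\varphi$, and uses Theorem~\ref{mthm:main_theory} for the rest. Your inductive step via the $1$-Lipschitzness of $\max$, and your explicit triangle inequality $|\mathcal R'-\hat{\mathcal R}|\le|\mathcal R-\hat{\mathcal R}|+|\mathcal R'-\mathcal R|$, are slightly more careful than the paper, which averages both value functions under a common $\pi^*$ and leaves the combination with Theorem~\ref{mthm:main_theory} implicit.
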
}
}
{
\begin{proof}
We first claim that for all $h \in [H]$ and all $(s,a) \in \mathcal{S} \times \mathcal{A}$,
\begin{equation}
\left| Q_h^{\prime,*,\dagger}(s,a) - Q_h^{*,\dagger}(s,a) \right| \leq (H - h + 1)\varphi.
\label{eq:indq}
\end{equation}
We prove it by backward induction. We consider the base case of $h =H$. By the Bellman equation and the terminal condition that $V_{H+1}^{*,\dagger}(s) = 0$ for all $s$,
$$Q_H^{*,\dagger}(s,a) = r_H(s,a), \qquad Q_H^{\prime,*,\dagger}(s,a) = r^\prime_H(s,a).$$
Hence, by Assumption~\ref{ass:rsb},
$$\left| Q_H^{\prime,*,\dagger}(s,a) - Q_H^{*,\dagger}(s,a) \right| = \left|r^\prime_H(s,a) - r_H(s,a) \right| \leq \varphi = (H - H + 1)\varphi,$$
which establishes the base case.
Now suppose that equation \eqref{eq:indq} holds for all $(s,a) \in \mathcal{S}\times\mathcal{A}$ at step $h+1$.
By the definition $V_{h+1}^{*,\dagger}(s) = \mathbb{E}_{a \sim \pi^*(\cdot \mid s)}\left[Q_{h+1}^{*,\dagger}(s,a)\right]$, we have
\begin{align*}
\left|V_{h+1}^{\prime,*,\dagger}(s) - V_{h+1}^{*,\dagger}(s) \right|
&= \left| \mathbb{E}_{a \sim \pi^*(\cdot \mid s)}\left[ Q_{h+1}^{\prime,*,\dagger}(s,a) - Q_{h+1}^{*,\dagger}(s,a) \right] \right| \\
&\leq \mathbb{E}_{a \sim \pi^*(\cdot \mid s)}\left| Q_{h+1}^{\prime,*,\dagger}(s,a) - Q_{h+1}^{*,\dagger}(s,a) \right| \\
&\leq (H - h)\varphi,
\end{align*}
where the last inequality applies the inductive hypothesis.
By the Bellman equation,
\begin{align*}
Q_{h}^{*,\dagger}(s,a) = r_{h}(s,a) + \mathbb{E}_{s' \sim p^\dagger(\cdot \mid s,a)}\left[ V_{h+1}^{*,\dagger}(s') \right], \qquad
Q_{h}^{\prime,*,\dagger}(s,a) = r^\prime_{h}(s,a) + \mathbb{E}_{s' \sim p^\dagger(\cdot \mid s,a)}\left[V_{h+1}^{\prime,*,\dagger}(s') \right].
\end{align*}
Taking the difference and applying the triangle inequality,
\begin{align*}
\left|Q_{h}^{\prime,*,\dagger}(s,a) - Q_{h}^{*,\dagger}(s,a) \right|
&\leq \left| r^\prime_{h}(s,a) - r_{h}(s,a) \right|  + \mathbb{E}_{s' \sim p^\dagger(\cdot \mid s,a)}\left| V_{h+1}^{\prime,*,\dagger}(s') - V_{h+1}^{*,\dagger}(s') \right| \\
&\leq \varphi + (H - h)\varphi \\
&= (H - h + 1)\varphi,
\end{align*}
This completes the induction.
Applying the inductive hypothesis, for each $h \in [H]$,
\begin{align*}
\sup_{\pi \in \Pi}\left| \mathbb{E}_{s_h \sim \mathcal{D}_h^{\hat{\pi},\dagger}}Q_h^{\prime,*,\dagger}(s_h,a_h^\pi) - \mathbb{E}_{s_h \sim \mathcal{D}_h^{\hat{\pi},\dagger}}Q_h^{*,\dagger}(s_h,a_h^\pi) \right| 
&\leq \sup_{\pi \in \Pi}\,\mathbb{E}_{s_h \sim \mathcal{D}_h^{\hat{\pi},\dagger}}\left|Q_h^{\prime,*,\dagger}(s_h,a_h^\pi) - Q_h^{*,\dagger}(s_h,a_h^\pi) \right| \\
&\leq (H - h + 1)\varphi.
\end{align*}
Finally, we have,
$$2\sum_{h=1}^{H} \sup_{\pi \in \Pi}\left| \mathbb{E}_{s_h \sim \mathcal{D}_h^{\hat{\pi},\dagger}}Q_h^{\prime,*,\dagger}(s_h,a_h^\pi) - \mathbb{E}_{s_h \sim \mathcal{D}_h^{\hat{\pi},\dagger}}Q_h^{*,\dagger}(s_h,a_h^\pi) \right| \leq 2\varphi\sum_{h=1}^{H}(H - h + 1) = H(H+1)\varphi.$$
\end{proof}
}

\section{Additional Details of Environments}
\label{app:A}

This appendix presents the tables describing the environments and their division into training and inference settings. Table \ref{tab:env} summarises the key differences across environments, including state and action space dimensions. Tables \ref{tab:taxi} and \ref{tab:cliff} show the components for Taxi-v3 and Cliff Walking environments.

\begin{table}[h!]
\centering
\caption{Environment information}
\label{tab:env}
\begin{tabular}{l c c c}
\toprule
& State Space & Action Space & Reward Space \\
\midrule \midrule
Taxi-v3 
& $\mathcal{S} \in \{0,\dots,499\}$ 
& $\mathcal{A} \in \{0,\dots,5\}$ 
& $R \subset \mathbb{R}$ \\
CliffWalking-v0 
& $\mathcal{S} \in \{0,\dots,47\}$ 
& $\mathcal{A} \in \{0,\dots,3\}$ 
& $R \subset \mathbb{R}$ \\
\bottomrule
\end{tabular}
\end{table}

\begin{table}[h!]
\centering
\caption{State description for Taxi-v3.}
\label{tab:taxi}
\begin{tabular}{l c c}
\toprule
Index & State Component & Description \\
\midrule \midrule
0 & Taxi row position & Discrete grid row index \\
1 & Taxi column position & Discrete grid column index \\
2 & Passenger location & One of four landmarks or in taxi \\
3 & Destination & One of four landmarks \\
\bottomrule
\end{tabular}
\end{table}

\begin{table}[h!]
\centering
\caption{State description for Cliff Walking.}
\label{tab:cliff}
\begin{tabular}{l c c}
\toprule
Index & State Component & Description \\
\midrule \midrule
0 & Agent row position & Discrete grid row index \\
1 & Agent column position & Discrete grid column index \\
\bottomrule
\end{tabular}
\end{table}

\textbf{Environment shifts in training and inference} Since the original Taxi and Cliff Walking environments do not distinguish between training and inference settings, we introduce an \emph{action randomisation} mechanism with probability $0 \le \varepsilon \le 1$ to construct two distinct transition kernels, $p$ and $p^{\dagger}$. 
During training ($\varepsilon > 0$), the environment executes the agent’s taken action with probability $1 - \varepsilon$ and replaces it with a uniformly random action with probability $\varepsilon$, which leads to a perturbed transition kernel $p$. 
During inference ($\varepsilon = 0$), no action randomisation is applied, and the agent is evaluated under the original transition kernel $p^{\dagger}$. 
The pseudo-code for constructing the training transition kernel is provided in Algorithm~\ref{algo:slip_kernel}.

\begin{algorithm}[t]
\caption{\footnotesize Action randomisation for training and inference environment}
\label{algo:slip_kernel}
\begin{algorithmic}[1]
\REQUIRE Base transition kernel $p^\dagger(\cdot \mid s,a)$, number of states $n_S$, number of actions $n_A$, slip probability $\varepsilon \in [0,1]$
\ENSURE Training transition kernel $p(\cdot \mid s,a)$
\vspace{0.5em}
\STATE \textit{// Each kernel is represented as a set of tuples $(\rho, s', r, \text{done})$, where $\rho$ denotes transition probability; similarly $(q, s', r, \text{done}) \in \bar p(\cdot \mid s)$ with $q = \bar p(s',r,\text{done}\mid s)$.}
\STATE \textit{// Compute averaged transition kernel}
\FOR{$s = 0,1,\ldots,n_S-1$}
  \STATE Initialise empty kernel $\bar p(\cdot \mid s)$
  \FOR{$a = 0,1,\ldots,n_A-1$}
    \FORALL{$(\rho, s', r, \text{done}) \in p^\dagger(\cdot \mid s,a)$}
      \STATE $\bar p(s', r, \text{done} \mid s) \gets \bar p(s', r, \text{done} \mid s) + \frac{1}{n_A} \rho$
    \ENDFOR
  \ENDFOR
\ENDFOR

\STATE \textit{// Randomise actions}
\FOR{$s = 0,1,\ldots,n_S-1$}
  \FOR{$a = 0,1,\ldots,n_A-1$}
    \STATE Initialise empty kernel $p(\cdot \mid s,a)$
    \FORALL{$(\rho, s', r, \text{done}) \in p^\dagger(\cdot \mid s,a)$}
      \STATE $p(s', r, \text{done} \mid s,a) \gets (1-\varepsilon)\, \rho$
    \ENDFOR
    \FORALL{$(q, s', r, \text{done}) \in \bar p(\cdot \mid s)$}
      \STATE $p(s', r, \text{done} \mid s,a) \gets p(s', r, \text{done} \mid s,a) + \varepsilon\, q$
    \ENDFOR
  \ENDFOR
\ENDFOR

\STATE \textbf{Output:} $p$
\end{algorithmic}
\end{algorithm}

\subsection{Additional Details of Experimental Settings}
\label{app:B}
\textbf{Hyperparameters}
We report the hyperparameters of DQN used in our experiments. All other hyperparameters are left at their default settings.

\begin{table}[h!]
\centering
\caption{Hyperparameters for Deep-Q Network.}
\label{tab:hyperparams2}
\begin{tabular}{l c c}
\toprule
Hyperparameters &  \\
\midrule \midrule
Batch size &  64\\
Replay buffer size &  50,000\\
Softmax temperature $\tau$ & $10^{-7}$\\
Episodes &  5,000 \\
Warm-up steps & 1,000\\
Learning rate & 0.001 \\
Target network update period & 500 \\
Optimiser & Adam \\
Hidden dimension & 128\\
Initial exploration rate & 1.0\\
Final exploration rate & 0.05\\
Exploration decay episode & 3,000 \\
\bottomrule
\end{tabular}
\end{table}

\textbf{Computing the actual value functions $Q^{*,\dagger}_h(\cdot,\cdot)$ and $Q^{*}_h(\cdot,\cdot)$}
The tabular setting allows us to compute the actual value functions under both the inference and training environments, which is generally intractable in non-tabular reinforcement learning settings. We compute $Q^{*,\dagger}_h$ and $Q^{*}_h$ exactly by backward induction and derive the corresponding optimal policy $\pi^*$. To match our experimental setting, we further implement the optimal policy using a softmax parameterisation with a fixed temperature.

\paragraph{Justification on Calculating Expected Rational Value Risk and Rational Risk Gap} The selected environments provide access to the state distributions in both training and deployment, which enables the calculation of the expected rational value risk, and further the rational risk gap.

\subsection{Additional Empirical Results}
\paragraph{Relationship between Reward and Rational Risk Gap under Different $\ell_2$ Regularisation Strengths} We quantify the correlation between the rational risk gap and the episode reward using the Pearson correlation coefficient. Table \ref{tab:relation_l2} shows that they have a strong negative correlation. Compared to the original DQN, $\ell_2$ regularisation reduces the rational risk gap while improving reward, which suggests that an appropriate regularisation strength benefits both reward maximisation and rationality.

\begin{table}[h]
\centering
\caption{Pearson coefficient between rational risk gap and reward under different $\ell_2$ regularisation strengths. We evaluate DQN under increasing regularisation strengths ($10^{-3}$,$10^{-4}$,$10^{-5}$,$10^{-6}$,$10^{-7}$)}
\begin{tabular}{lccc|ccc}
\toprule
Variable
& \multicolumn{3}{c}{Taxi} 
& \multicolumn{3}{c}{Cliff Walking} \\
\cmidrule(lr){2-4} \cmidrule(lr){5-7}
& Rational risk gap & Reward & Coefficient 
& Rational risk gap & Reward & Coefficient \\
\midrule
DQN       
& $35.34\pm22.91$ & $-42.84\pm7.83$  & $-0.40$ 
& $206.67\pm26.50$ & $-96.34\pm27.72$ & $-0.32$\\

$10^{-3}$  
& ${15.06\pm5.71}$ & $-16.16\pm7.53$ & $-0.41$ 
& ${150.13\pm18.08}$ & $-43.24\pm15.60$ & $-0.53$\\

$10^{-4}$  
& $17.24\pm5.66$ & $-18.08\pm6.85$ & $-0.55$ 
& $204.48\pm24.52$ & $-41.86\pm9.41$ & $-0.55$ \\

$10^{-5}$  
& $26.95\pm25.04$ & ${-13.54\pm4.04}$ & $-0.48$ 
& $167.81\pm12.83$ & $-41.74\pm10.03$ & $-0.43$ \\

$10^{-6}$  
& $19.31\pm6.79$  & $-15.02\pm4.80$  & $-0.54$ 
& $206.66\pm46.32$ & $-41.84\pm24.97$  &  $-0.43$ \\

$10^{-7}$  
& $16.16\pm5.55$ & $-22.18\pm4.54$ & $-0.45$
& $162.72\pm16.84$ & $-41.18\pm13.08$ & $-0.56$\\

\bottomrule
\label{tab:relation_l2}
\end{tabular}
\end{table}

\paragraph{Experiment of a Special Case} In this section, we introduce a special case of expected rational value risk given the state distribution $\mathcal{D}_h^{*,\dagger}$ in deployment induced by optimal policy $\pi^*$ over a trajectory of horizon $H$, which is defined as
\[
\mathcal{R}^*(\pi)
\triangleq
\sum_{h=1}^H\mathbb E_{s_h \sim\mathcal D_h^{*,\dagger}}
\left[
Q_h^{*,\dagger}(s_h,a^{\circ}_h)-Q_h^{*,\dagger}(s_h,a^\pi_h)\right].
\]

% special case of rational risk gap of DQN 

% the reward curves and empirical rational value risk of DQN across five challenge levels of the Taxi-v3 and Cliff Walking environments in training, as shown in Figure~\ref{apfig:six_figures_level} and Figure~\ref{apfig2:six_figures_level}. They indicate that (1) the training in all cases is running well, and (2) rationality has a negative correlation with the environment shifts.

We adopt our experiment settings in Section~\ref{sec:imp} to measure a special case of rational risk gap as $|\mathcal{R}^*(\pi)-\hat{\mathcal{R}}(\pi)|$. When evaluating DQN under different regularisation and domain randomisation techniques, we consider a harder challenge level of both environments (from $10\%$ to $25\%$). Figure \ref{fig:reward} shows the reward curves in this setting, suggesting the training is running well. Figure \ref{fig:reg} and Figure
\ref{fig:dr} illustrate that DQN with $\ell_2$-regularisation and domain randomisation consistently reduce the rational risk gap across both environments relative to the original DQN, while layer normalisation and weight normalisation exhibit similar trends. Figure \ref{fig:six_figures_level} indicates that $|\mathcal{R}^*(\pi)-\hat{\mathcal{R}}(\pi)|$ has a negative correlation with the environment shifts. These results are consistent with the analysis in Section~\ref{sec:imp}.

\begin{figure}[t]
    \centering
    \includegraphics[width=\linewidth]{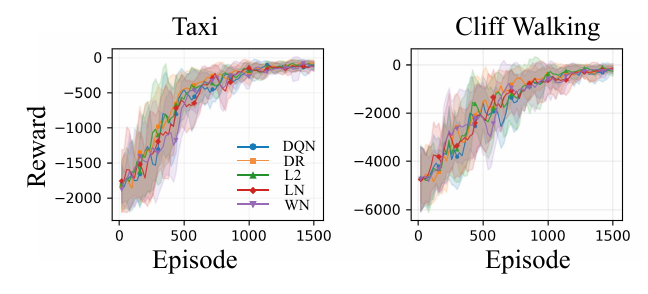}
    \caption{Reward curves of DQN under different regularisation and domain randomisation techniques in Taxi-v3 and Cliff Walking environments with challenge level
of $25\%$.}
    \label{fig:reward}
\end{figure}

\begin{figure*}[t!]
  \centering
  \begin{subfigure}[t]{0.9\textwidth}
    \centering
    \includegraphics[width=\linewidth]{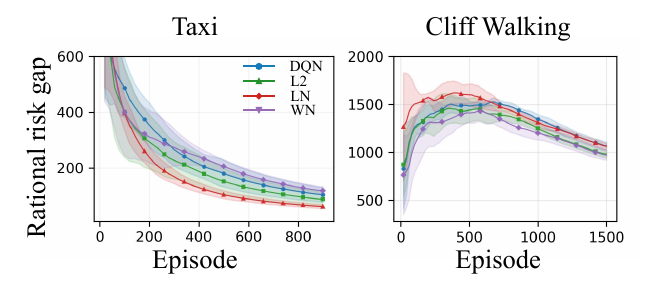}
    \caption{Regularisation}
    \label{fig:reg}
  \end{subfigure}
  \hfill
  \begin{subfigure}[t]{0.9\textwidth}
    \centering
    \includegraphics[width=\linewidth]{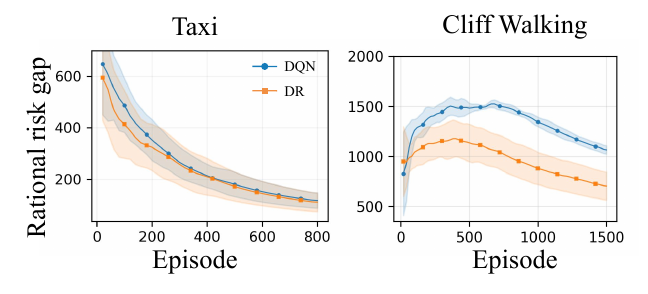}
    \caption{Domain randomisation}
    \label{fig:dr}
  \end{subfigure}
  \caption{Special case of rational risk gap of DQN under different regularisation and domain randomisation techniques in Taxi-v3 and Cliff Walking environments with challenge level
of $25\%$. {At the beginning of training, the policy is still incapable and thus causes frequent cliff falls (meaning a -100 penalty), so the terminal condition is triggered very quickly. Consequently, episodes have short horizons, making both the empirical rational risk and the rational risk gap small. As the agent learns to avoid falling off the cliff, but still fails to arrive at the target, the rational risk gap starts to increase. When the agents are able to reach the goal, the rational risk gap begins to decrease steadily. Instead, in Taxi-v3, the agent does not frequently trigger the terminal condition at the early stage, because of the environment properties.}}
  \label{fig:six_figures_reg}
\end{figure*}

\begin{figure}[t]
  \centering
  \includegraphics[width=0.95\linewidth]{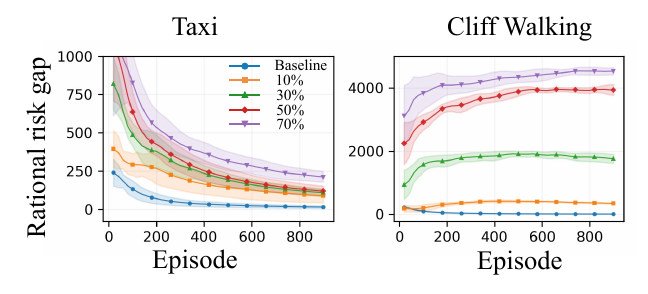}
  \caption{Special case of rational risk gap of DQN across different environment levels in Taxi-v3 and Cliff Walking environments. We evaluate DQN under increasing challenge levels of training environments (0\%, 10\%, 30\%, 50\%, 70\%), presenting the probability of action randomisation during training. }
  \label{fig:six_figures_level}
\end{figure}

\end{document}